\documentclass{article}
\usepackage{arxiv}
\usepackage{multirow}
\usepackage[utf8]{inputenc} 
\usepackage[T1]{fontenc}    
\usepackage{url}            
\usepackage{nicefrac}       
\usepackage{float}
\usepackage{microtype}
\usepackage{graphicx}
\usepackage{subcaption}
\usepackage{xcolor}
\usepackage{booktabs} 
\usepackage{lipsum}
\usepackage{xspace}
\usepackage{enumitem}
\usepackage{natbib}
\setcitestyle{numbers}
\setcitestyle{square}
\usepackage{booktabs} 
\usepackage{wrapfig}
\usepackage{makecell}

\usepackage{algorithm}
\usepackage{algpseudocode}

\def \header {\vspace{0.1in}\noindent\textbf}

\usepackage[pagebackref=true,breaklinks=true,colorlinks,bookmarks=false,citecolor=blue,linkcolor=blue]{hyperref}
\usepackage{enumitem}
\usepackage{doi}





\usepackage[most]{tcolorbox}

\usepackage{fourier}

\usepackage[textsize=tiny]{todonotes}
\usepackage{xurl}
\usepackage{colortbl}


\usepackage{amsmath}
\usepackage{amssymb}
\usepackage{mathtools}
\usepackage{amsthm}
\newtheorem*{theorem*}{Theorem}

\usepackage[capitalize,noabbrev]{cleveref}

\usepackage{fourier}







\usepackage{amsmath,amsfonts,bm}



\def \header#1{\noindent {\bf #1.}}






\def\eqref#1{equation~\ref{#1}}









\def\1{\bm{1}}










\DeclareMathAlphabet{\mathsfit}{\encodingdefault}{\sfdefault}{m}{sl}
\SetMathAlphabet{\mathsfit}{bold}{\encodingdefault}{\sfdefault}{bx}{n}













\newtheorem{lemma}{Lemma}
\newtheorem{theorem}{Theorem}


\DeclareMathOperator*{\argmin}{arg\,min}


\title{
Empirical Perturbation Analysis of Linear System Solvers from a Data Poisoning Perspective
}

\author{
\textbf{Yixin Liu, Arielle Carr, Lichao Sun}\\
Lehigh University \\
\texttt{\{yila22,arg318,lis221\}@lehigh.edu} \\
}

\begin{document}
\maketitle

\begin{abstract}
    The perturbation analysis of linear solvers applied to systems arising broadly in machine learning settings -- for instance, when using linear regression models -- establishes an important perspective when reframing these analyses through the lens of a data poisoning attack. By analyzing solvers' responses to such attacks, this work aims to contribute to the development of more robust linear solvers and provide insights into poisoning attacks on linear solvers. In particular, we investigate how the errors in the input data will affect the fitting error and accuracy of the solution from a linear system-solving algorithm under perturbations common in adversarial attacks. We propose data perturbation through two distinct knowledge levels, developing a poisoning optimization and studying two methods of perturbation: \textbf{Label-guided Perturbation (LP)} and \textbf{Unconditioning Perturbation (UP)}. \textit{Existing works mainly focus on deriving the worst-case perturbation bound from a theoretical perspective, and the analysis is often limited to specific kinds of linear system solvers.} Under the circumstance that the data is intentionally perturbed -- as is the case with data poisoning -- we seek to understand how different kinds of solvers react to these perturbations, identifying those algorithms most impacted by different types of adversarial attacks. We conduct an empirical analysis of the perturbation on linear systems and solvers, covering both direct and iterative solvers, and demonstrate the different sensitivity of each algorithm in the presence of the two proposed perturbations. We develop a theory to describe the convergence of some of these solvers under the data poisoning perturbations, and our results reveal that UP negatively impacts direct solvers more, and LP negatively impacts iterative solvers more.  Moreover, we found that both UP and LP can cause significant convergence slowdowns for most iterative solvers. 
  \end{abstract}

\section{Introduction}
\label{sec.introd}

In the rapidly evolving landscape of computational science and machine learning, the robustness and reliability of linear system solvers have become increasingly critical \citep{hamon2020robustness}. These solvers are the backbone of numerous applications in data science and engineering \citep{rugh1996linear}. Among the many challenges in efficiently and accurately applying these solvers, one particularly intriguing aspect is the impact of data perturbations on their convergence rates, accuracy, and stability, referred to as perturbation analysis (see, e.g., \citep{carr2023analysis, sifuentes2013gmres} for such analyses specific to GMRES \citep{saad1986gmres}). In most cases, we consider perturbation analysis in the context of ill-conditioned matrices, or noisy data.  Here, we consider it through the lens of data poisoning \citep{schwarzschild2021just}.

Data poisoning is the intentional perturbation of data by malicious attackers \citep{le_etal2023antidreambooth,cina2021hammer,wedin1973perturbation}, which in essence, involves the intentional manipulation of input data to degrade the performance of algorithms. This phenomenon is not merely a theoretical concern but has practical implications in scenarios where data integrity is compromised, either due to malicious intent or inadvertent errors \citep{su2009perturbation}. Understanding and mitigating the effects of such perturbations is thus vital for ensuring the reliability of linear system solvers \citep{rall2006perturbation,kato2012short,kato2013perturbation}. 

Existing works \citep{carr2023analysis, sifuentes2013gmres} on perturbation analysis theory mainly focus on deriving the worst-case perturbation bound from a theoretical perspective, and the analysis is often limited to specific iterative solvers. Under the circumstance that the data is intentionally perturbed, it is important to understand what kinds of perturbation attacks can be conducted and how solvers react to perturbations of different types, paving the way for the design of more robust solvers in these types of applications. From a data poisoning perspective, we conduct an empirical analysis of two types of perturbation in the form of data poisoning attacks on linear systems covering both direct and iterative linear system solvers. We aim to demonstrate how each algorithm behaves under these perturbations.

Our goal is to empirically investigate how these perturbations specific to data poisoning attacks influence the accuracy and performance of different solvers. By doing so, we aim to shed light on the resilience of these algorithms under realistic conditions where data may not be pristine, or where it is maliciously perturbed. This study is particularly relevant in the context of machine learning and data analysis, where the integrity of data is paramount, so adversarial attacks must be considered. The insights gained from this research will not only contribute to the enhancement of border learning algorithms \citep{huang2021unlearnable, fu2022robust} that are built on linear solvers but also aid in the development of strategies to counteract the adverse effects of data perturbations \citep{tao2021better}. Our main contributions and findings are as follows:

\begin{enumerate}


\item We reformulate data poisoning as an optimization problem and apply two perturbation strategies, i.e., Label-guided Perturbation (LP) and Unconditioning Perturbation (UP), to degrade the usability of the perturbed data. We also develop convergence theory for the gradient descent solver under LP and UP.
\item We conduct extensive experiments on several direct and iterative linear system solvers to verify the effectiveness of LP and UP. Moreover, we empirically verify the forward error with hypothesis tests.
\item We show that UP is more effective in degrading the usability of the perturbed data for training in the direct solver, while the LP is more effective in negatively impacting the iterative solvers. Moreover, we show that most of the iterative solvers are impacted by the perturbation, and the convergence slows down.
\end{enumerate}

\section{Related Works}
\header{Data Poisoning Attack} \citet{cina2021hammer} propose an efficient heuristic poisoning attack to fool linear classifier models by optimizing the composing coefficients of poison samples for maximizing the estimated likelihood. The evaluations are conducted on the linear support vector machine (SVM) and the logistic regression classifier. \citet{jagielski2018manipulating} study the poisoning attack to linear regression models. Four models are considered in their work, including ordinary least squares (OLS), ridge regression, lasso regression, and elastic net regression. They propose a theoretical-grounded gradient-based poisoning attack and a faster statistical-based poisoning attack based on the observation that the most effective poisons are around the corner of the training data distribution. Targeting deep neural networks, \citet{feng2019learning} propose an alternative updating framework that iteratively updates the perturbation and the surrogate model, which works on high-dimensional image data and can fool many DNNs. \textit{Different from these works, we aim to investigate the poisoning attack to a more fundamental linear system and solvers, which will provide valuable insights for solving the poisoning attack on non-linear models.}

{
\header{Label-guided and Unsupervised Perturbation in Poisoning Attack} In the field of crafting perturbation against neural-network-based classifiers \citep{Liu_2024_CVPR,ren2022transferable,huang2021unlearnable, liu24sem,Ye_2024_CVPR,huang2020metapoison,fu2022robust,feng2019learning}, researchers propose different attacking strategies to efficiently and effectively degrade the performance of the model for both supervised and unsupervised learning setting. We categorize them into two types in this work, i.e., label-guided and unsupervised poisoning attacks. In the label-guided poisoning attack, with label information on downstream evaluations, the attackers craft the poison samples using neither error-maximizing \citep{szegedy2013intriguing,goodfellow2014explaining} or error-minimizing \citep{huang2021unlearnable} strategies. In the unsupervised poisoning attack, the attacker does not have a ground truth label and must craft an effective poison using heuristic strategies based on unsupervised learning tasks. For example, \citet{chen2024one} leverages the text-image latent concept misalignment signal to craft label-agnostic poisoning attacks, which demonstrate better transferability across supervised training with different labels. Furthermore, \citet{wang2024efficient} proposes augmented contrastive poisoning that works on both supervised and unsupervised learning. \textit{In this work, we adapt these two existing poisoning approaches to the linear system and study the sensitivity and accuracy of the solution.}
}

\header{Perturbation Analysis for Linear Solvers} The perturbation analysis of a linear system and its solver is fundamental to numerical linear algebra \citep{kato2013perturbation,thompson1968non}. In this work, we seek to investigate how the errors in the input data will affect the fitting error and accuracy of the solution from a linear system-solving algorithm. 
\citet{su2009perturbation} dived into the relationship of sensitivity, backward errors, and condition numbers in linear systems. \citet{zhou2003perturbation} studied the perturbation of singular linear systems in the generalized linear least squares problem. \citet{sifuentes2013gmres} studied the convergence of GMRES when the coefficient matrix is perturbed with spectral perturbation theory. Targeting the same algorithm, \citet{carr2023analysis} studied the perturbation analysis under a setting of low-rank and small-norm perturbation of the identified coefficient matrix. However, these works all mainly focus on deriving the worst-case perturbation bound from a theoretical perspective, and the analysis is often limited to a specific linear solver. Under the circumstance that the data is intentionally perturbed, it is important to understand how different kinds of solvers react to perturbations of different types, paving the way for designing more robust solvers.
\textit{To bridge the gap, in this work, we conduct the first empirical analysis of the perturbation on linear systems and solvers to show the sensitivities of each algorithm in the presence of two adversarial attack-type perturbations with different knowledge.}

\section{Preliminaries}

\subsection{Linear Systems and Solvers}

A linear system is an equation of the form \(Ax = b\), where \(A \in \mathbb{R}^{n \times d}\) is a known matrix (which can be square or rectangular), \(b \in \mathbb{R}^n\) is a known vector, and \(x \in \mathbb{R}^d\) is the vector of unknowns to be determined. Linear systems are ubiquitous in numerous scientific and engineering disciplines, modeling phenomena in electrical circuits, structural analysis, fluid dynamics, and more. They also play a critical role in computational methods for machine learning and data science, such as in training linear models and as subroutines in nonlinear optimization algorithms.

Efficiently solving linear systems is thus of paramount importance. Many effective solvers have been developed, broadly classified into direct and iterative methods \citep{saad2003iterative, trefethen1997numerical}. Direct methods aim to find an exact solution (within numerical precision limits) in a finite number of steps by performing operations such as matrix factorizations. These methods are reliable for small to medium-sized systems but can become computationally intensive for larger problems. Iterative methods, on the other hand, start with an initial guess and refine the solution through successive approximations \citep{barrett1994templates,saad2003iterative}. They are particularly advantageous for large-scale and sparse systems.

In the context of our study on data poisoning effects, both types of solvers are pertinent. Direct solvers allow us to examine how perturbations in the data influence the exact solutions and the sensitivity of the system. Iterative solvers enable us to investigate the impact of poisoning on the convergence behavior and accuracy of approximate solutions in practical, large-scale scenarios. A comprehensive examination of both provides a thorough understanding of the robustness of linear system solutions under adversarial conditions.

\subsection{Overview of Direct and Iterative Solvers}

\subsubsection{Direct Solvers}

\header{Normal Equations Solver} The Normal Equations Solver (NES) is often used in linear regression to find the least squares solution to an overdetermined system \(Ax = b\), where \(A \in \mathbb{R}^{n \times d}\) with \(n \geq d\). The goal is to minimize the residual norm \(\|Ax - b\|_2\), leading to the normal equations, $A^\top A x = A^\top b$, whose solution is then given by:

\[
x = (A^\top A)^{-1} A^\top b.
\]

This method is straightforward and does not require iterative refinement. However, computing \(A^\top A\) can exacerbate the condition number of \(A\), leading to numerical instability, especially when \(A\) is ill-conditioned. Additionally, forming and inverting \(A^\top A\) can be computationally expensive for large \(d\).

\subsubsection{Iterative Solvers}
\header{Gradient Descent} Gradient Descent solves optimization problems by iteratively moving in the direction of the steepest descent of the objective function $f(x)$. The update rule is:

\[
x^{(k+1)} = x^{(k)} - \alpha_k \nabla f(x^{(k)})
\]

where $\alpha_k$ is the step size and $\nabla f(x^{(k)})$ is the gradient of $f$ at $x^{(k)}$. While this method is simple to implement, it can be slow to converge for ill-conditioned problems. Step size choice is crucial for convergence and efficiency. For the linear system $Ax = b$, we define the objective function $f(x) = \frac{1}{2}\|Ax - b\|_2^2$ with gradient $\nabla f(x) = 2A^T(Ax - b)$.

\header{Jacobi Solver} The Jacobi Method is an iterative algorithm used for solving large, sparse, diagonally dominant linear systems. It updates each component of the solution vector independently using:

\[
x_i^{(k+1)} = \frac{1}{a_{ii}}\left(b_i - \sum_{j \neq i} a_{ij} x_j^{(k)}\right),
\]

where \(x_i^{(k)}\) is the \(i\)th component of the solution at iteration \(k\), and \(a_{ij}\) are the elements of \(A\). Its simplicity and inherent parallelism make it attractive, but convergence can be slow, and it requires \(A\) to be diagonally dominant or positive definite for guaranteed convergence.

\header{Gauss-Seidel } The Gauss-Seidel Method improves upon the Jacobi Method by using the most recent updates within each iteration:

\[
x_i^{(k+1)} = \frac{1}{a_{ii}}\left(b_i - \sum_{j=1}^{i-1} a_{ij} x_j^{(k+1)} - \sum_{j=i+1}^{n} a_{ij} x_j^{(k)}\right).
\]

This often leads to faster convergence compared to the Jacobi Method. However, the method is sequential in nature, making parallel implementation more challenging.

\header{Successive Over-Relaxation (SOR)} The SOR Method introduces a relaxation factor \(\omega\) to accelerate convergence:

\[
x_i^{(k+1)} = (1 - \omega) x_i^{(k)} + \frac{\omega}{a_{ii}}\left(b_i - \sum_{j=1}^{i-1} a_{ij} x_j^{(k+1)} - \sum_{j=i+1}^{n} a_{ij} x_j^{(k)}\right).
\]

By appropriately choosing \(\omega\) (typically \(0 < \omega < 2\)), the method can converge faster than both Jacobi and Gauss-Seidel methods. The optimal \(\omega\) depends on the properties of \(A\) and may require empirical tuning.

\header{Conjugate Gradient (CG)} The CG Method is an efficient iterative solver for large, sparse, symmetric positive-definite systems \citep{hestenes1952methods}. It minimizes the quadratic form associated with the system by generating a sequence of conjugate directions. It updates the solution, residual, and search direction vectors in each iteration:

\[
\begin{aligned}
&x^{(k+1)} = x^{(k)} + \alpha_k p^{(k)}, \\
&r^{(k+1)} = r^{(k)} - \alpha_k A p^{(k)}, \\
&p^{(k+1)} = r^{(k+1)} + \beta_k p^{(k)},
\end{aligned}
\]

where \(\alpha_k\) and \(\beta_k\) are computed using inner products of the vectors. The method converges in at most \(n\) iterations for an \(n \times n\) system, though typically much sooner. Its low memory requirements and efficiency make it suitable for high-dimensional problems.

\header{Generalized Minimal Residual (GMRES) Method} The GMRES Method is designed for solving large, sparse, nonsymmetric linear systems \citep{saad1986gmres}. It constructs an orthonormal basis of the Krylov subspace using the Arnoldi process and seeks to minimize the residual over this subspace. This method computes, $x^{(k)} \in x^{(0)} + \mathcal{K}_k(A, r^{(0)})$ such that $\|b - A x^{(k)}\|_2$ is minimized. Here, \(\mathcal{K}_k(A, r^{(0)}) = \text{span}\{r^{(0)}, A r^{(0)}, A^2 r^{(0)}, \dots, A^{k-1} r^{(0)}\}\) is defined as the Krylov space of dimension $k$. The method is robust and can handle a wide range of systems, but its computational cost and memory usage increase with each iteration. Restarting strategies (e.g., GMRES(m)) are commonly employed to limit these costs by restarting the method after every \(m\) iterations.
\section{Problem Statement and Threat Model}
\header{Linear System Setting}
Under common settings of machine learning, we denote the data features as $X \in \mathbb{R}^{n \times d}$ and the labels as $y \in \mathbb{R}^n$, sampling from the data distribution $\mathcal{D}$.
A linear regression model $f$ is defined as $f(X;w) =Xw$, where $w \in \mathbb{R}^d$ is the weight vector. Given a set of training data $(X_{\text{train}}, y_{\text{train}})$, the goal of the linear regression model is to solve the linear system $X_{\text{train}} w = y_{\text{train}}$ and obtain the optimal weight vector $w^*$. We hope that the obtained weight vector $w^*$ can generalize well on the data sampled from the same distribution, i.e., when the generalization error$L_{\text{gen}}(w^*) = \mathbb{E}_{(X,y) \sim \mathcal{D}} [L(w^*)]$, where $L(w^*) = \frac{1}{n} \|Xw^* - y\|_2^2$ is low. 

\header{Attacker's Goal and Capability}
When the training data is poisoned by a malicious attacker, the obtained weight vector $w^*$ might have a large generalization error. 
In the data poisoning setting, an attacker can perturb the data feature to degrade the generalization ability of the trained model. We assume that only the feature will be perturbed, with a $\ell_p$-norm constraint ball with budget $\epsilon$, i.e., $||X - X^\prime||_p \le \epsilon$, where $X^\prime$ is the perturbed data feature.
Under this setting, we study the following two important questions on the perturbation analysis of the linear system.

\begin{enumerate}
    \item Given data $(X, y)$, and the perturbation of the features from $X$ to $X^\prime = X + \Delta X$ within some perturbation budget $\epsilon$, 
     can we bound the relative change of the obtained solution $\frac{||w-w^\prime||}{||w||}$? Can we bound the system output error $||Xw - X w^\prime||$ on the training set? If the attacker seeks to perturb the new solution to be $w^\prime$, what is the smallest perturbation we can find on the given space? 
    \item With the different levels of knowledge (the access to the testing set), how can we conduct perturbation that maximizes the generalization error of the fitted model? 
    How does it perform when we directly maximize the condition number of the perturbed matrix $X^\prime$? 
    This question aims to study the feasibility of degrading the usability of the perturbed data for training in practice.
\end{enumerate}

\section{Methodology}

\subsection{Matrix Perturbation as An Optimization Problem}
\label{sec.matrix_perturbation}

Assuming the testing data $(X_t, y_t) \sim  \mathcal{D}$, we aim to find a perturbation $\Delta X$ to the matrix $X$ such that when a linear solver obtains the solution $w^\prime$ from the perturbed data $(X + \Delta X, y)$, the system output error $||y_t - X_t w^\prime||$ on the testing set is maximized. We formulate the optimization problem as follows,
\begin{equation}
  \begin{aligned}
    \max_{\Delta X} \quad & ||y_t - X_t w^\prime|| \quad  \text{s.t.} ||\Delta X|| \le \epsilon .
  \end{aligned}
  \label{eq:opt}
\end{equation}

{Tackling a similar objective as Eq. \ref{eq:opt}, clean-label poisoning attacks \citep{fu2022robust,huang2021unlearnable} in classification learning aims to perturb the feature of data to mislead the learning of model, maximizing the generalization error of learned model on the testing set. Based on two kinds of knowledge setting, i.e., whether the attacker can access the label of data, these works can be categorized into two types: \textit{label-guided} poisoning attacks \citep{huang2021unlearnable,fu2022robust}, and \textit{label-free} poisoning attacks \citep{chen2024one,ren2022transferable,zhu2019transferable,Zhang_2023_CVPR,heindiscriminate}. Label-guided poisoning attacks craft perturbation by minimizing \citep{huang2021unlearnable} or maximizing \citep{goodfellow2014explaining} the training cross-entropy loss with the guidance of the label, aiming to craft perturbation that tricks the model into learning the false correlations. Advanced label-guided poisoning attacks further craft class-wise perturbation by targeting adversarial perturbation \citep{fowl2021adversarial}. On the other hand, under the setting that no label information is available beforehand, label-free poisoning attacks leverage heuristic methods to craft perturbation, such as multi-modal misalignment \citep{chen2024one}, contrastive loss \citep{zhu2019transferable,zhu2019transferable}, and augmented transformation loss \citep{wang2024efficient}. By perturbing the data from its semantic meaning \citep{yu2022availability}, attackers can prevent the modified data from being used for learning effective representation or any downstream classification problem with different label configurations. 
}

In this work, we adapt these two important perturbation paradigms to the problem of poisoning linear solvers under different knowledge settings. First, for the setting of label-guided poisoning attacks, we can directly solve the optimization problem in Eq. \ref{eq:opt}. We term this approach as \textit{Label-guided Perturbation (LP)}. However, when the attacker can not directly access the testing data $(X_t, y_t)$, we can only access the training data. Under this challenging setting, motivating that the condition number of the matrix $X$ is a good indicator of the ill-conditioning of the linear system, we propose to maximize the condition number of the perturbed matrix $X^\prime$ to degrade the usability of the perturbed data. We term this approach that solely maximizes the condition number of the perturbed matrix as \textit{Unconditioning Perturbation (UP)}. Compared to the LP, the UP is more general and requires fewer assumptions on the label availability, facilitating its application in both supervised and unsupervised learning settings \citep{chen2024one, Zhang_2023_CVPR}.
For LP and UP, we solve the following optimization problems respectively:
\begin{equation}\label{eq:lp_up_opt}
  \begin{aligned}
    \text{LP:} \quad \max_{\Delta X} \quad & ||y_t - X_t w^\prime|| \quad \text{s.t.} \quad ||\Delta X|| \le \epsilon, \quad w^\prime = \argmin_{w} ||y - (X+\Delta X) w|| ,\\
    \text{UP:} \quad \max_{\Delta X} \quad & \kappa(X + \Delta X) \quad \text{s.t.} \quad ||\Delta X|| \le \epsilon, 
  \end{aligned}
\end{equation}

where $w^\prime$ is the solution from the perturbed system for LP, both problems can be solved using gradient-based optimization methods with appropriate projections to satisfy the perturbation constraint.

\subsection{Theoretical Analysis of UP and LP with Gradient Descent Solver}
In this section, we theoretically analyze the impact of UP and LP on the convergence and solution of linear systems solved using gradient descent solver, which is a widely used iterative solver in practice \citep{huang2021unlearnable,goodfellow2014explaining,Liu_2024_CVPR}. Future work will consider similar theories for other popular solvers, such as GMRES.

\begin{theorem}[\textbf{Convergence Rate of $\alpha$-UP for L-smooth and Convex Functions}]
  Let $f(w)$ be an objective function that is L-smooth and convex, e.g., $f(w; X) = \frac{1}{2} \| X w - y \|^2$, where $X$ is a data matrix. Denote the function after applying UP as $f{^\prime}(w; X+ \Delta X)$, which is also L-smooth and convex. Assume that UP is applied, increasing the condition number of the system such that the maximum singular value of the perturbed matrix is $\sigma_{\max}(X + \Delta X) = \alpha \sigma_{\max}(X)$, where $1 < \alpha < \sqrt{2/\gamma}$. Let $w^*$ be the optimal solution, and $w_T^\prime$ be the solution after $T^\prime$ iterations of gradient descent with step size $\gamma$. Then:
  
  \begin{enumerate}
      \item \textbf{Convergence Rate}: After applying UP, for an L-smooth and convex function, we have:
      \[
      \min_{0 \leq t \leq T-1} \left( f^\prime(w_t) - f^\prime(w^*) \right) \leq  \frac{C}{\gamma (2 - \alpha ^2 )} \frac{1}{T^\prime},
      \]
      where $\| w_0 - w^* \|^2 \leq C$ represents the initial distance between $w_0$ and the optimal solution $w^*$ is bounded by constant $C > 0$.
  
      \item \textbf{Total Iterations for $\beta$-accuracy}: The total number of iterations $T^\prime$ required to obtain a solution such that $f^\prime(w_T) - f^\prime(w^*) \leq \beta$ is:
      \[
      T^\prime \geq \frac{C}{\gamma (2 - \alpha ^2 ) } \cdot \frac{1}{\beta} .
      \]
  \end{enumerate}
\end{theorem}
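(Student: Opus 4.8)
The plan is to observe that UP enters the analysis only through the smoothness constant, and then to run the textbook gradient-descent-for-smooth-convex argument with that rescaled constant. For $f(w)=\tfrac12\|Xw-y\|^2$ we have $\nabla f(w)=X^\top(Xw-y)$ with Hessian $X^\top X$, so $f$ is $L$-smooth with $L=\sigma_{\max}(X)^2$. Since UP rescales the top singular value, $\sigma_{\max}(X+\Delta X)=\alpha\,\sigma_{\max}(X)$, the perturbed objective $f'$ stays convex and becomes $L'$-smooth with $L'=\alpha^2 L$. Thus the entire effect of the attack is the replacement $L\mapsto\alpha^2 L$, and the task reduces to tracking how this factor propagates through the standard convergence bound.

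First I would record the descent lemma: from $L'$-smoothness and the update $w_{t+1}=w_t-\gamma\nabla f'(w_t)$,
\[
f'(w_{t+1})\le f'(w_t)-\tfrac{\gamma}{2}\,(2-L'\gamma)\,\|\nabla f'(w_t)\|^2 .
\]
The hypothesis $1<\alpha<\sqrt{2/\gamma}$ is precisely the condition $L'\gamma<2$ (under the normalization tying $\gamma$ to $L$), which keeps the coefficient $2-L'\gamma>0$ and guarantees strict per-step progress, and also makes the gradient step non-expansive toward $w^*$. Next I would expand the potential $\|w_{t+1}-w^*\|^2=\|w_t-w^*\|^2-2\gamma\langle\nabla f'(w_t),w_t-w^*\rangle+\gamma^2\|\nabla f'(w_t)\|^2$ and invoke convexity, $\langle\nabla f'(w_t),w_t-w^*\rangle\ge f'(w_t)-f'(w^*)$, to obtain
\[
2\gamma\big(f'(w_t)-f'(w^*)\big)\le \|w_t-w^*\|^2-\|w_{t+1}-w^*\|^2+\gamma^2\|\nabla f'(w_t)\|^2 .
\]

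Summing over $t=0,\dots,T-1$ telescopes the distance terms (using $\|w_T-w^*\|^2\ge0$), and the residual $\sum_t\|\nabla f'(w_t)\|^2$ is controlled by summing the descent lemma together with $f'(w_0)-f'(w^*)\le\tfrac{L'}{2}\|w_0-w^*\|^2$. This collapses to $\sum_{t}\big(f'(w_t)-f'(w^*)\big)\le \|w_0-w^*\|^2\big/\big(\gamma(2-\alpha^2)\big)$, and since the minimum iterate gap is at most the average, dividing by $T$ and using $\|w_0-w^*\|^2\le C$ yields Part 1. Part 2 is then immediate: requiring the right-hand side to be at most $\beta$ and solving for $T'$ gives $T'\ge C\big/\big(\gamma(2-\alpha^2)\beta\big)$.

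The main obstacle I anticipate is the constant bookkeeping around the denominator $2-\alpha^2$: one must reconcile the normalization among $L$, the rescaled $L'=\alpha^2 L$, the step size $\gamma$, and the admissible window $\alpha<\sqrt{2/\gamma}$, so that the descent coefficient appears cleanly as $2-\alpha^2$ rather than the more general $2-\alpha^2 L\gamma$. The phrase ``$f'$ is also L-smooth'' should be read as ``$f'$ is smooth with the scaled constant $\alpha^2 L$,'' since it is exactly this rescaling that injects $\alpha$ into the rate; verifying that the step stays non-expansive under $L'\gamma<2$, so that the telescoped distances and the gradient-norm sum both close, is the other point needing care.
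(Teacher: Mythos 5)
Your proposal is correct and takes essentially the same route as the paper: UP enters only by rescaling the smoothness constant to $L' = \alpha^2 L$ (since $L$ is proportional to $\sigma_{\max}(X)^2$), after which the standard potential-function analysis of gradient descent for smooth convex functions gives $\sum_{t}\bigl(f'(w_t)-f'(w^*)\bigr)\le C/\bigl(\gamma(2-\gamma L')\bigr)$, the minimum is bounded by the average, and Part 2 follows by inverting the rate. The only mechanical difference is that you control the $\gamma^2\|\nabla f'(w_t)\|^2$ term via the descent lemma together with $f'(w_0)-f'(w^*)\le\tfrac{L'}{2}\|w_0-w^*\|^2$, whereas the paper absorbs it using the co-coercivity inequality $\langle\nabla f(x)-\nabla f(y),x-y\rangle\ge\tfrac{1}{L}\|\nabla f(x)-\nabla f(y)\|^2$; both close to the same bound, and the normalization caveat you flag (obtaining $2-\alpha^2$ rather than $2-\gamma\alpha^2 L$ in the denominator) is a genuine looseness in the paper itself, whose proof ends with $C/\bigl(\gamma(2-\gamma\alpha^2 L)\bigr)$ while its statement reads $C/\bigl(\gamma(2-\alpha^2)\bigr)$.
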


See the proof in the Appendix \ref{app:theorem_alpha_up_proof}. This theorem demonstrates that UP impacts the smoothness factor $L$ of the loss function $f(w)$ by a factor of $\alpha^2$, i.e., $L^\prime = \alpha^2 L$ under the assumption of change of the maximum singular value of the matrix $X$ by a factor of $\alpha$. 

\begin{theorem}[\textbf{Lower Bound on Solution Divergence of $\eta$-LP}]
Let $w^*$ be the optimal solution for $X w = y$, and $w'^*$ be the optimal solution for the perturbed system $(X + \Delta X) w'^* = y + \Delta y$. Assume $\| \Delta X \|_2 \leq \epsilon$, and the perturbation induces a significant change in the prediction such that $\| \Delta y \|_2 \geq \eta$, where $\eta > 0$ is a threshold capturing the minimum size of the target change. Then, the difference between $w^*$ and $w'^*$ is lower-bounded by:

\[
\| w^* - w'^* \| \geq \frac{\eta}{\| X \|} \cdot \frac{1}{1 + \epsilon \| X^{-1} \|}.
\]

\end{theorem}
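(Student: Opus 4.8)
The plan is to obtain the bound directly from the two governing equations, isolating the solution gap $\delta := w'^* - w^*$ and then applying operator-norm inequalities in the direction appropriate for a lower bound. First I would subtract $Xw^* = y$ from $(X+\Delta X)w'^* = y + \Delta y$. Expanding the perturbed operator gives the exact identity $\Delta y = (X+\Delta X)\delta + \Delta X w^*$, which says that the induced prediction change $\Delta y$ is the perturbed operator acting on the solution shift $\delta$, plus a cross term $\Delta X w^*$ of size at most $\epsilon\|w^*\|$. The cleanest route identifies $\Delta y$ with the prediction change $(X+\Delta X)\delta$ produced by the solution shift itself (equivalently, reading $y$ in the perturbed system as $(X+\Delta X)w^*$, so that $\Delta X w^*$ is folded into the target); with that reading the identity becomes simply $\Delta y = (X+\Delta X)\delta$.

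Next I would turn the hypothesis $\|\Delta y\| \ge \eta$ into a lower bound on $\|\delta\|$. Submultiplicativity gives $\|\Delta y\| = \|(X+\Delta X)\delta\| \le \|X+\Delta X\|\,\|\delta\|$, and combining with $\|\Delta y\| \ge \eta$ yields $\|\delta\| \ge \eta / \|X+\Delta X\|$. The direction matters here: to lower-bound $\|\delta\|$ I must upper-bound the operator that maps $\delta$ to $\Delta y$, so I use $\|X+\Delta X\|$ rather than $\sigma_{\min}(X+\Delta X)$.

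Finally I would bound the perturbed operator norm to match the claimed denominator. The triangle inequality and the budget constraint give $\|X+\Delta X\| \le \|X\| + \|\Delta X\| \le \|X\| + \epsilon$, and factoring out $\|X\|$ together with the elementary fact $1/\|X\| \le \|X^{-1}\|$ (i.e.\ $\sigma_{\min}(X) \le \sigma_{\max}(X)$) gives $\|X+\Delta X\| \le \|X\|\,(1 + \epsilon\|X^{-1}\|)$. Substituting yields $\|w^* - w'^*\| \ge \eta / \big(\|X\|(1+\epsilon\|X^{-1}\|)\big)$, as desired.

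I expect the main obstacle to be the first step, namely pinning down the relation between $\Delta y$ and $\delta$. Kept exactly, the cross term leaves $\|\delta\| \ge (\eta - \epsilon\|w^*\|)/\|X+\Delta X\|$, which carries an unwanted $\epsilon\|w^*\|$; the stated result presumes this term is absorbed into the definition of the target change. I would therefore state the modeling convention for $\Delta y$ explicitly at the outset, after which steps two and three are routine norm estimates. A secondary point to verify is that $X$ is square and invertible so that $\|X^{-1}\|$ is well defined; for the overdetermined least-squares setting one would instead phrase the final inequality through the pseudoinverse.
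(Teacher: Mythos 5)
Your route differs from the paper's at both decisive steps. The paper subtracts the systems the other way, obtaining the identity $X(w'^*-w^*)=\Delta y-\Delta X\,w'^*$ (cross term in $w'^*$ rather than $w^*$), lower-bounds via the triangle inequality to $\|w'^*-w^*\|\geq\bigl(\eta-\epsilon\|w'^*\|\bigr)/\|X\|$, and then tries to absorb the cross term by adding the assumption $\|w'^*\|\leq\|X^{-1}\|\,\|y\|$. You instead fold the cross term into the definition of $\Delta y$, so that $\Delta y=(X+\Delta X)\delta$ with $\delta:=w'^*-w^*$, and produce the denominator from the estimate $\|X+\Delta X\|\leq\|X\|+\epsilon\leq\|X\|\bigl(1+\epsilon\|X^{-1}\|\bigr)$, using $1\leq\|X\|\,\|X^{-1}\|$. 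Under the convention you state explicitly, every step of your argument is correct, and this chain is a cleaner derivation of the factor $\bigl(1+\epsilon\|X^{-1}\|\bigr)^{-1}$ than the paper's.

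The obstacle you flagged is, moreover, not cosmetic: without some such convention the theorem as literally stated is false, because $\Delta y$ can be generated entirely by the cross term with no solution shift at all. Take $n=d=1$, $X=1$, $w^*=y=10$, $\Delta X=\epsilon=0.1$, and $\Delta y=\Delta X\,w^*=1\geq\eta:=1$; then $(X+\Delta X)w'^*=y+\Delta y$ gives $w'^*=10=w^*$, so the left side of the claimed bound is $0$ while the right side is $1/1.1>0$. This example also satisfies the paper's extra assumption $\|w'^*\|\leq\|X^{-1}\|\,\|y\|$, which exposes the paper's final step as a non sequitur: to pass from $\bigl(\eta-\epsilon\|w'^*\|\bigr)/\|X\|$ to the stated bound one would need $\|w'^*\|\leq\eta\|X^{-1}\|/\bigl(1+\epsilon\|X^{-1}\|\bigr)$, which follows from nothing assumed. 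So your proposal is not merely an alternative route: redefining the "prediction change" as $(X+\Delta X)(w'^*-w^*)$, i.e.\ excluding the component $\Delta X\,w^*$ that causes no solution shift, is exactly what it takes to make the statement true, and with that reading your three steps form a complete proof. The residual caution is the one you already raised, shared by the paper's proof: $\|X^{-1}\|$ presupposes a square invertible $X$, which sits awkwardly with the overdetermined least-squares setting used elsewhere in the paper and would require restating the bound via the pseudoinverse.
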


See the proof in the Appendix \ref{app:theorem_eta_lp_proof}. This theorem demonstrates that perturbations like LP force the solution to diverge from the original optimal solution by at least an amount proportional to the induced target change $\eta$, the condition number of the system $X$, and the perturbation size $\epsilon$. The divergence increases as the perturbation size grows, the condition number worsens, or the induced change in prediction becomes larger.

\begin{figure}[thbp]
    \centering
    \begin{subfigure}[b]{0.32\textwidth}
        \centering
        \includegraphics[width=\textwidth]{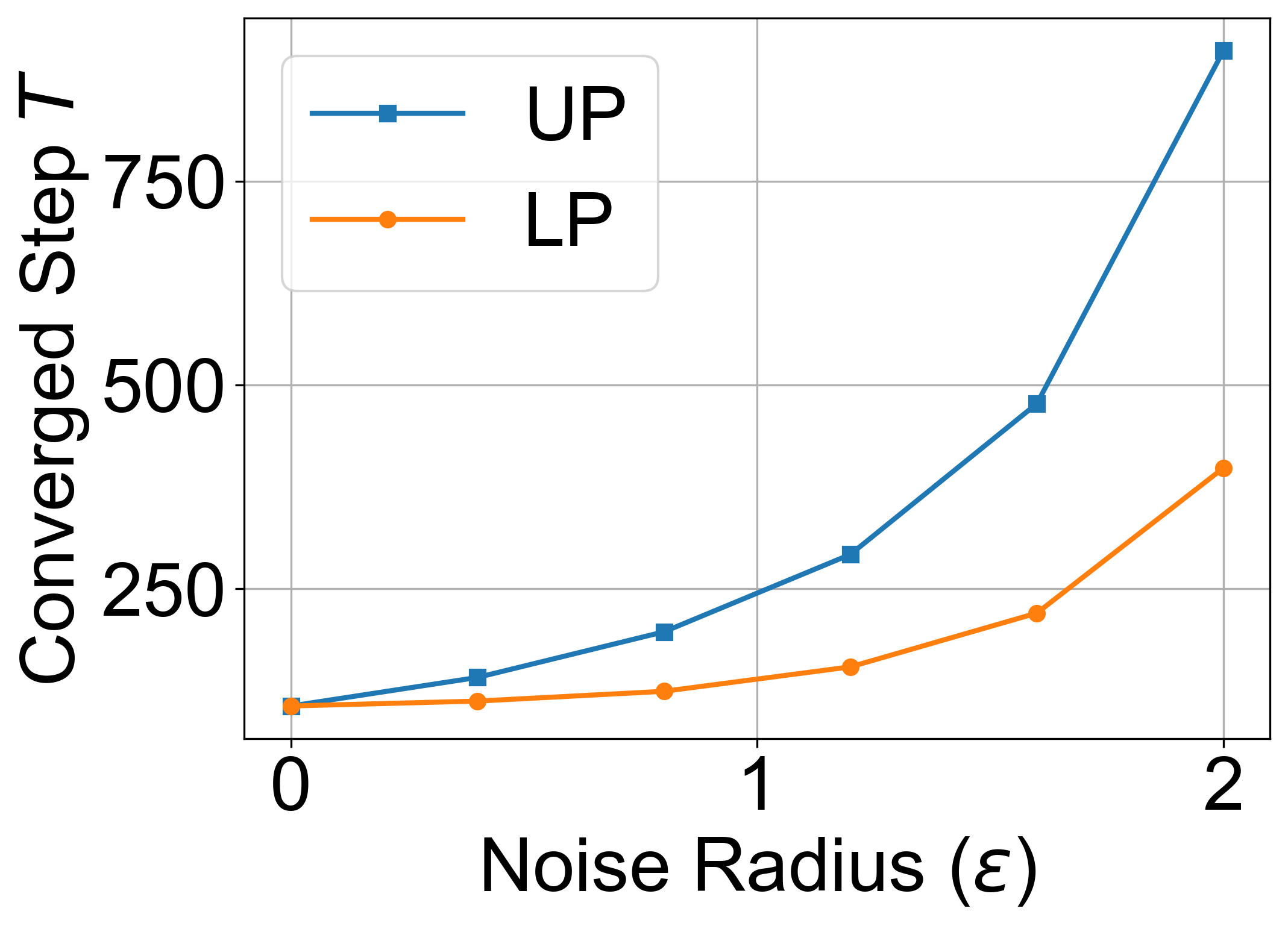}
        \caption{GD steps with UP}
    \end{subfigure}
    \hfill
    \begin{subfigure}[b]{0.32\textwidth}
      \centering
      \includegraphics[width=\textwidth]{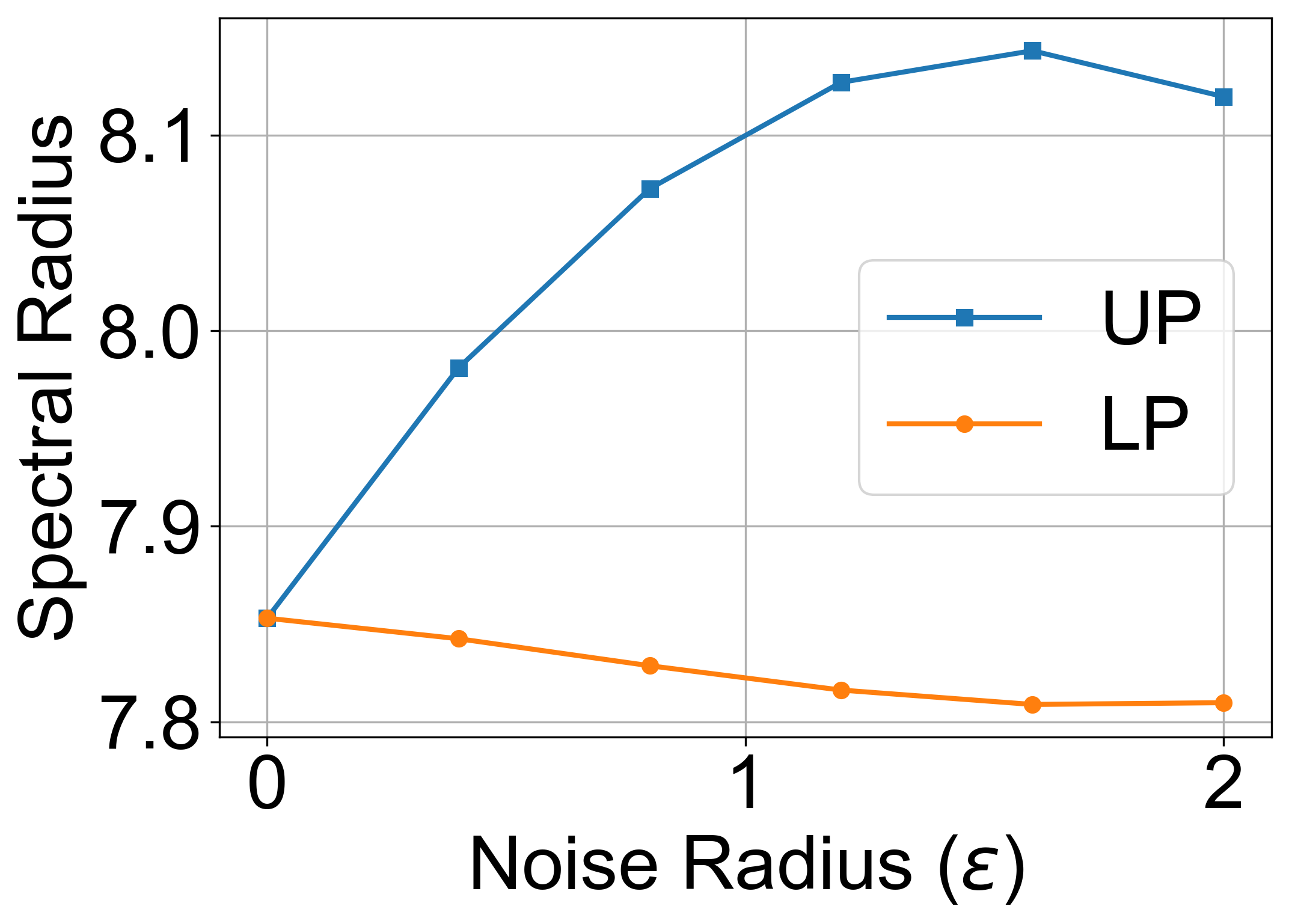}
      \caption{GD spectral radii of $X^{\prime}$}
    \end{subfigure}
    \hfill
    \begin{subfigure}[b]{0.32\textwidth}
        \centering
        \includegraphics[width=\textwidth]{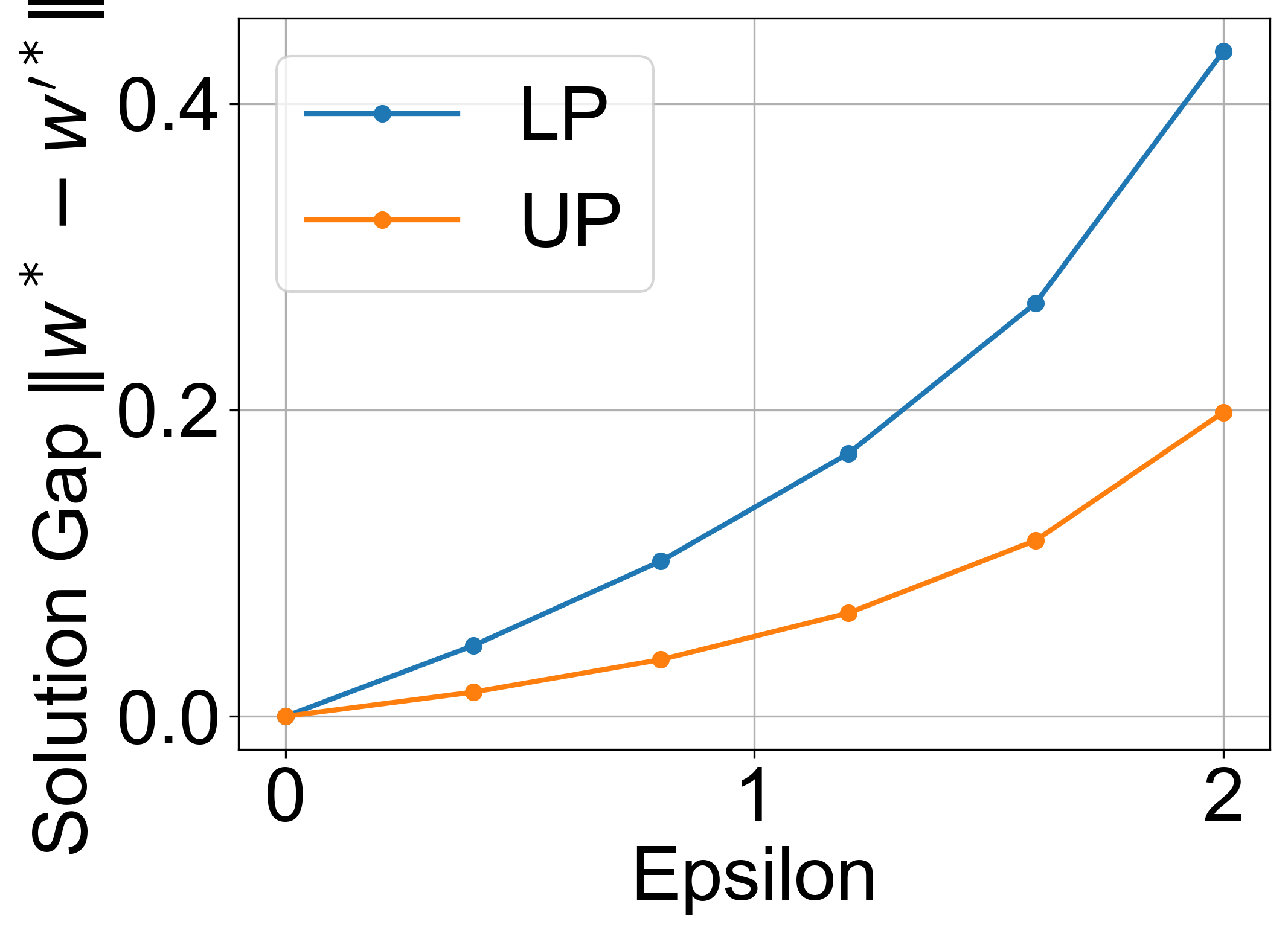}
        \caption{GD solution gap with LP}
    \end{subfigure}
    \caption{Empirical testing of convergence analysis of LP and UP on Gradient Descent (GD). (a) shows the impact of UP on GD convergence steps across noise radii, (b) displays the spectral radii of perturbed $X^\prime$ across noise radii, and (c) shows the solution gap between the perturbed solution and the original optimal solution across noise radii.}
    \label{fig:gd_convergence}
\end{figure}

\subsection{General Forward $\ell_p$-norm Perturbation Error Bound}
{
We further provide a theorem on the forward error bound, which establishes an upper limit on the relative change in the solution and the system output error when the input data is perturbed within a specified $\ell_p$-norm constraint. This theorem offers valuable insights into the sensitivity of the linear system to input perturbations and can be particularly useful in assessing the potential impact of poisoning attacks.
}

\begin{theorem}[\textbf{$\ell_p$ Forward Error Bound}]
Given a constraint $||\Delta X || \leq \epsilon $, $(X+\Delta X) w^\prime = y$ and $X w = y$, assuming $\epsilon ||A^{-1}|| < 1$, we have
\begin{equation}
  \frac{\left\|w-w^{\prime}\right\|}{\|w\|} \leq 
  \frac{\epsilon \left\|X^{-1}\right\|}{1-\epsilon \left\|X^{-1}\right\|}. 
\end{equation}
  We can bound the system output error with the following equation,
  \begin{equation}
    \begin{aligned}
      \left\|X w^{\prime}-X w\right\| \leq \frac{\epsilon \|w\| \kappa(X) }{1-\epsilon\left\|X^{-1}\right\|}.
    \end{aligned}  
  \end{equation}
\end{theorem}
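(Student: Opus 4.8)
The plan is to start from the two defining equations and subtract them to isolate the error vector, then control its norm with submultiplicativity of the operator norm and the budget constraint. Subtracting $(X+\Delta X)w' = y$ from $Xw = y$ gives $Xw - Xw' - \Delta X\,w' = 0$, i.e. $X(w - w') = \Delta X\,w'$, so that $w - w' = X^{-1}\Delta X\,w'$. Taking norms and using $\|\Delta X\| \leq \epsilon$ yields the preliminary estimate $\|w - w'\| \leq \epsilon\,\|X^{-1}\|\,\|w'\|$.

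The key step is to replace the $\|w'\|$ on the right-hand side by $\|w\|$, and this is exactly where the hypothesis $\epsilon\|X^{-1}\| < 1$ is needed. I would bootstrap through the triangle inequality: since $\|w'\| \leq \|w\| + \|w - w'\| \leq \|w\| + \epsilon\|X^{-1}\|\,\|w'\|$, rearranging gives $\|w'\|\,(1 - \epsilon\|X^{-1}\|) \leq \|w\|$, and because the factor $1 - \epsilon\|X^{-1}\|$ is strictly positive by assumption we conclude $\|w'\| \leq \|w\|/(1 - \epsilon\|X^{-1}\|)$. Substituting back into the preliminary estimate and dividing by $\|w\|$ produces the first claimed bound. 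An equivalent and slightly cleaner route is to factor $X + \Delta X = X(I + X^{-1}\Delta X)$, invoke the Neumann series to show the perturbed matrix is invertible with $\|(X+\Delta X)^{-1}\| \leq \|X^{-1}\|/(1 - \epsilon\|X^{-1}\|)$, and write $w - w' = (X+\Delta X)^{-1}\Delta X\,w$ directly in terms of $w$; I expect this variant to avoid the bootstrap entirely.

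For the second bound on the system output error, I would left-multiply the error by $X$ and use submultiplicativity: $\|Xw' - Xw\| = \|X(w' - w)\| \leq \|X\|\,\|w' - w\|$. Inserting the intermediate estimate $\|w' - w\| \leq \epsilon\|X^{-1}\|\,\|w\|/(1 - \epsilon\|X^{-1}\|)$ obtained in the first part and collecting $\|X\|\,\|X^{-1}\| = \kappa(X)$ collapses the right-hand side to $\epsilon\,\|w\|\,\kappa(X)/(1 - \epsilon\|X^{-1}\|)$, which is precisely the stated estimate.

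The main obstacle here is minor: the only nontrivial point is controlling $\|w'\|$, equivalently guaranteeing invertibility of $X + \Delta X$, which the hypothesis $\epsilon\|X^{-1}\| < 1$ handles; everything else is the triangle inequality and submultiplicativity. I would also flag two presentational points before writing the final proof: the statement implicitly treats $X$ as square and invertible (the bounds are phrased via $X^{-1}$ and $\kappa(X)$), and the ``$\epsilon\|A^{-1}\| < 1$'' appearing in the hypothesis should read $\epsilon\|X^{-1}\| < 1$ for consistency with the rest of the theorem.
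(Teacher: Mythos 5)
Your proof is correct and takes essentially the same route as the paper's source: the paper does not prove the theorem itself but defers to Saad's textbook, where the bound is obtained exactly by your argument (subtract the two systems to get $X(w-w')=\Delta X\,w'$, bound $\|w'\|$ via the bootstrap or equivalently invert $X+\Delta X$ by the Neumann series under $\epsilon\|X^{-1}\|<1$, then multiply by $\|X\|$ and collect $\kappa(X)=\|X\|\,\|X^{-1}\|$ for the output-error bound). Your flagged corrections are also apt: $X$ must be square and invertible for the statement to parse, and the hypothesis ``$\epsilon\|A^{-1}\|<1$'' is indeed a typo for $\epsilon\|X^{-1}\|<1$.
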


{
For a detailed proof, see Saad's textbook \citep{saad2003iterative}. This theorem provides insights into linear systems' sensitivity to input perturbations, which is relevant to data poisoning attacks. We empirically verify this forward error bound in Section \ref{sec.direct_solver}, demonstrating how direct solvers respond to such perturbations.
}

\section{Experiments}
\subsection{Experimental Setup}
Following \citet{russo2023analysis}, we use Sequential Least Squares Programming (SLSQP) \citep{kraft1988software} to solve the perturbation optimization problem in \eqref{eq:lp_up_opt}. We define the constraint on the matrix $\ell_p$-norm of $\Delta X$ as the perturbation budget $\epsilon$ following poisoning setting \citep{huang2020unlearnable}, i.e., $\Vert \Delta X \Vert_p \leq \epsilon$. We set $\epsilon = 0.1$ and $p = 2$ by default, the maximum iteration number to 1000, and the tolerance value as $1 \times {10}^{-10}$. 

In the data synthesis process, \textit{for the direct solver}, we use the {\tt make\_regression} in the {\tt sklearn} library \citep{pedregosa2011scikit}. By default, we set the feature dimension as $d=3$ and $n=6$ for the training set and $d=3$ and $n=9$ for the testing set. For the data synthesis of $X$ in \textit{the iterative solver}, we use the {\tt sparse\_random} in the {\tt scipy} library \citep{2020SciPy-NMeth} with additional steps to make $X$ symmetric and diagonally dominant (with $n=d=20$ by default). After generating the training and testing matrix $X$ and $X_t$, we generate the training label $y$ by sampling from the uniform distribution $U(0, 1)^{n}$. The testing label $y_t$ is set as the dot product of $X_t$ and the fitted solution $w$ using the least square method for the original system $X w = y$, i.e., $y_t = X_t w$. We consider the following six commonly used iterative solvers: Jacobi, Gauss-Seidel, Successive Over-Relaxation (SOR), Conjugate Gradient (CG), Generalized Minimal Residual (GMRES), and Gradient Descent (GD). We set the perturbation radii $\epsilon$ from $0.4$ to $2.0$ with step size $0.2$. 

To quantify the solver's accuracy and sensitivity upon perturbation, we use the following metrics: 
i) \textit{Absolute error $\| y_t - X_t w^\prime\|$}: This metric reflects the overall deviation of the predicted output from the true output, providing a direct measure of the accuracy of the solver.
ii) \textit{Relative testing residual} ${\tt rsd}(w)=\| y_t - X_t w^\prime\| / \| y_t \|$: This metric normalizes the absolute error by the norm of the true output, offering a perspective on the error relative to the magnitude of the true output.
iii) \textit{Relative solution error} ${\tt err}(w)=\| w - w^\prime\| / \| w \|$: This metric measures the deviation of the perturbed solution from the original solution, indicating the sensitivity of the solution to perturbations.
iv) \textit{Condition number} $\kappa(X)$: This metric provides insight into the numerical stability of the matrix $X$, with higher values indicating greater sensitivity to perturbations.
v) \textit{Final converged iteration}: the iteration number when the solution is converged: This metric reflects the efficiency of the solver in terms of the number of iterations required to reach convergence.

  
\begin{table}
  \centering
  \caption{Effect of perturbation radii \(\epsilon\) on absolute error, relative error, accuracy, and condition number using the direct solver, NES. We set $d=3$ and $n=6$ for the training set and $d=3$ and $n=9$ for the testing set. }
  \label{tab:epsilon_results_table1}
  \begin{tabular}{ccccccccccc} 
  \toprule
  \multirow{2}{*}{$\epsilon$} & \multicolumn{2}{c}{$\| y_t - X_t w^\prime\|$} & \multicolumn{2}{c}{${\tt rsd}(w)$} & \multicolumn{2}{c}{$||w - w^\prime||$} & \multicolumn{2}{c}{${\tt err}(w)$} & \multicolumn{2}{c}{$\kappa(X)$}  \\
  \cmidrule(lr){2-11} 
                              & LP      & UP                             & LP & UP                            & LP & UP                                & LP & UP                            & LP & UP                          \\ 
  \midrule
  0.01 & 1.8298&0.5716&0.0076&0.0024&0.4985&0.1856&0.0051&0.0019&1.9244&1.9395 \\
0.1 & 18.2152&5.8403&0.0752&0.0241&4.931&1.8827&0.0503&0.0192&1.899&2.0515 \\
0.5 & 99.8274&34.6936&0.4123&0.1433&28.3635&10.9775&0.2894&0.112&2.2431&2.7119 \\
1.0 & 74.3361&105.8894&0.3071&0.4374&26.2289&33.7195&0.2676&0.3441&2.1506&4.3493 \\
  \bottomrule
  \end{tabular}
  
  \end{table}

\begin{table}[h]
  \centering
  \caption{Effect of perturbation radii \(\epsilon\) on absolute error, relative error, accuracy, and convergence steps for different iterative solvers. Results of three radii are shown due to space limit.}
  \label{tab:main_iterative_results}
 \resizebox{\linewidth}{!}{
  \begin{tabular}{ cccc cccc cccc}
    \toprule
    \toprule
    \multirow{2}{*}{Solver} & \multicolumn{1}{c}{\multirow{2}{*}{$\epsilon$}} & \multicolumn{2}{c}{$\| y_t - X_t w^\prime\|$}             & \multicolumn{2}{c}{${\tt rsd}(w)$}              & \multicolumn{2}{c}{$||w - w^\prime||$}          & \multicolumn{2}{c}{${\tt err}(w)$}              & \multicolumn{2}{c}{$N_{end}$}                    \\
    \cmidrule(l){3-12}
                            &                             & \multicolumn{1}{c}{LP}      & \multicolumn{1}{c}{UP}      & \multicolumn{1}{c}{LP} & \multicolumn{1}{c}{UP} & \multicolumn{1}{c}{LP} & \multicolumn{1}{c}{UP} & \multicolumn{1}{c}{LP} & \multicolumn{1}{c}{UP} & \multicolumn{1}{c}{LP} & \multicolumn{1}{c}{UP}  \\
    \midrule
    \multirow{3}{*}{Jacobi}
      & 0.0 & 2.08166 & 2.08166 & 0.85064 & 0.85064 & 0 & 0 & 0 & 0 & 32 & 32 \\
      & 0.8 & 2.58736 & 2.12627 & 1.05729 & 0.86887 & 0.10130 & 0.03696 & 0.23916 & 0.08727 & 32 & 37 \\
      & 2.0 & 4.07078 & 2.49233 & 1.66347 & 1.01846 & 0.43424 & 0.19852 & 1.02525 & 0.46871 & 53 & 84 \\
    \midrule
    \multirow{3}{*}{Gauss-Seidel}
      & 0.0 & 2.08166 & 2.08166 & 0.85064 & 0.85064 & 0 & 0 & 0 & 0 & 12 & 12 \\
      & 0.8 & 2.58736 & 2.12627 & 1.05729 & 0.86887 & 0.10130 & 0.03696 & 0.23916 & 0.08727 & 13 & 18 \\
      & 2.0 & 4.07078 & 2.49233 & 1.66347 & 1.01846 & 0.43424 & 0.19852 & 1.02525 & 0.46871 & 28 & 43 \\
    \midrule
    \multirow{3}{*}{SOR}
      & 0.0 & 2.08166 & 2.08166 & 0.85064 & 0.85064 & 0 & 0 & 0 & 0 & 12 & 12 \\
      & 0.8 & 2.58736 & 2.12627 & 1.05729 & 0.86887 & 0.10130 & 0.03696 & 0.23916 & 0.08727 & 13 & 18 \\
      & 2.0 & 4.07078 & 2.49233 & 1.66347 & 1.01846 & 0.43424 & 0.19852 & 1.02525 & 0.46871 & 28 & 43 \\
    \midrule
    \multirow{3}{*}{GMRES}
      & 0.0 & 2.08166 & 2.08166 & 0.85064 & 0.85064 & 0 & 0 & 0 & 0 & 14 & 14 \\
      & 0.8 & 2.58736 & 2.12627 & 1.05729 & 0.86887 & 0.10130 & 0.03696 & 0.23916 & 0.08727 & 14 & 14 \\
      & 2.0 & 4.07078 & 2.49233 & 1.66347 & 1.01846 & 0.43424 & 0.19852 & 1.02525 & 0.46871 & 14 & 15 \\
    \midrule
    \multirow{3}{*}{Conjugate Gradient}
      & 0.0 & 2.08166 & 2.08166 & 0.85064 & 0.85064 & 0 & 0 & 0 & 0 & 14 & 14 \\
      & 0.8 & 2.58736 & 2.12627 & 1.05729 & 0.86887 & 0.10130 & 0.03696 & 0.23916 & 0.08727 & 19 & 14 \\
      & 2.0 & 4.07078 & 2.49233 & 1.66347 & 1.01846 & 0.43424 & 0.19852 & 1.02525 & 0.46871 & 29 & 17 \\
    \midrule
    \multirow{3}{*}{Gradient Descent}
      & 0.0 & 2.08166 & 2.08166 & 0.85064 & 0.85064 & 0 & 0 & 0 & 0 & 106 & 106 \\
      & 0.8 & 2.58736 & 2.12627 & 1.05729 & 0.86887 & 0.10130 & 0.03696 & 0.23916 & 0.08727 & 124 & 197 \\
      & 2.0 & 4.07078 & 2.49233 & 1.66347 & 1.01846 & 0.43424 & 0.19852 & 1.02525 & 0.46871 & 398 & 910 \\
    \bottomrule
    \bottomrule
    \end{tabular}
 }
\end{table}

\subsection{Results and Analysis on Direct Solver}
\label{sec.direct_solver}

\header{Effectiveness of UP over LP on Normal Equations Solver} We evaluate the impact of both Label-guided Perturbation (LP) and Unconditioning Perturbation (UP) on the Normal Equations Solver, a direct method for solving linear systems. We test a range of perturbation radii \(\epsilon\) (\(\{0.01, 0.1, 0.5, 1.0\}\)) and measure the system output error \(\|y_t - X_t w^\prime\|\) on the testing set. The results are presented in Table \ref{tab:epsilon_results_table1}. From the table, we observe that as the perturbation radius \(\epsilon\) increases, the system output error also increases for both LP and UP, which is expected due to the larger perturbations \(\Delta X\). Notably, UP consistently results in a significantly higher error compared to LP across all values of \(\epsilon\). Additionally, the condition number \(\kappa(X)\) of the perturbed matrix increases more under UP than LP. 

\header{Analysis of LP and UP on Normal Equations Solver} 
We further analyze this phenomenon by considering a sensitivity analysis of NES in the coefficient matrix \(X\). Note that the condition number of \(X^\top X\) is the square of the condition number of \(X\), i.e., \(\kappa(X^\top X) = \kappa(X)^2\) \citep{trefethen1997numerical}. This implies that any increase in \(\kappa(X)\) due to perturbations is significantly amplified when solving the normal equations. The UP strategy specifically aims to maximize the condition number of the perturbed matrix \(X^\prime\), effectively making \(X^\prime\) ill-conditioned. This ill-conditioning leads to a substantial amplification of errors in the solution \(w^\prime\) when using the NES. In contrast, the LP attack focuses on perturbing \(X\) in a way guided by increasing error in the right-hand side, which does not impact the condition number of \(X\). Therefore, UP is more effective than LP in degrading the usability of the perturbed data in the context of the NES. By directly increasing \(\kappa(X)\), UP exploits the solver's inherent sensitivity to ill-conditioned matrices, resulting in greater errors in the solution and, consequently, a higher testing error.

\header{Empirical Verification of the Forward Perturbation Bound} To verify the forward perturbation bound, we conduct experiments on the setting of $n=d=3$ with different perturbation radii $\epsilon \in \{0.01, 0.011, 0.012, 0.013\}$, which satisfy $\epsilon ||A^{-1}|| < 1$ and $\kappa (A)\leq 10 ^2 $. We generate data with 100 different random seeds for each perturbation radius $\epsilon$ and compute the mean value of the relative error and forward upper bound. 
Furthermore, we conduct a one-sided t-test to verify whether the empirical results are consistent with the theoretical bound, with the null hypothesis that the relative solution error is larger than the forward upper bound and significance level $\xi = 0.05$. As shown in Table \ref{tab:forward_results}, we can see that the empirical results are consistent with the theoretical bound under different perturbation radii $\epsilon$. 

\begin{table}
  \centering
  \caption{Empirical verification of the forward perturbation bound ($n=d=3$) with 100 sampling times. The mean value of the relative error and forward upper bound are shown.}
  \label{tab:forward_results}
  \begin{tabular}{ccccc cccc} 
  \toprule
  \multirow{2}{*}{$\epsilon$} & \multicolumn{4}{c}{LP} & \multicolumn{4}{c}{UP} \\
  \cmidrule(lr){2-5}
  \cmidrule(lr){6-9}
                            & {${\|w - w^\prime\| / \|w\|}$} & { $ \frac{\epsilon \left\|X^{-1}\right\|}{1-\epsilon \left\|X^{-1}\right\|}$} & {T-statistic}&{P-value }     & {${\|w - w^\prime\| / \|w\|}$} & { $ \frac{\epsilon \left\|X^{-1}\right\|}{1-\epsilon \left\|X^{-1}\right\|}$} & {T-statistic}&{P-value }   \\ 
  \midrule
  0.01                        & 0.0403 & 0.06644 & -2.6485 & 0.0052   & 0.0153 & 0.04342 & -3.0533 & 0.00174 \\
  0.011                         &  0.0472 & 0.10121 & -1.9299 & 0.02939 & 0.0224 & 0.05043 & -4.4487 & 2e-05 \\
  0.012                        & 0.0479 & 0.07454 & -3.9185 & 0.00011  & 0.0236 & 0.07855 & -3.0158 & 0.00198 \\
  0.013         &0.0553 & 0.07481 & -3.8578 & 0.00016               & 0.0354 & 0.1578 & -1.8173 & 0.03717 \\
  \bottomrule
  \end{tabular}
  \end{table}


        

\begin{figure}[htbp]
    \centering
    \begin{subfigure}[b]{0.32\textwidth}
        \centering
        \includegraphics[width=\textwidth]{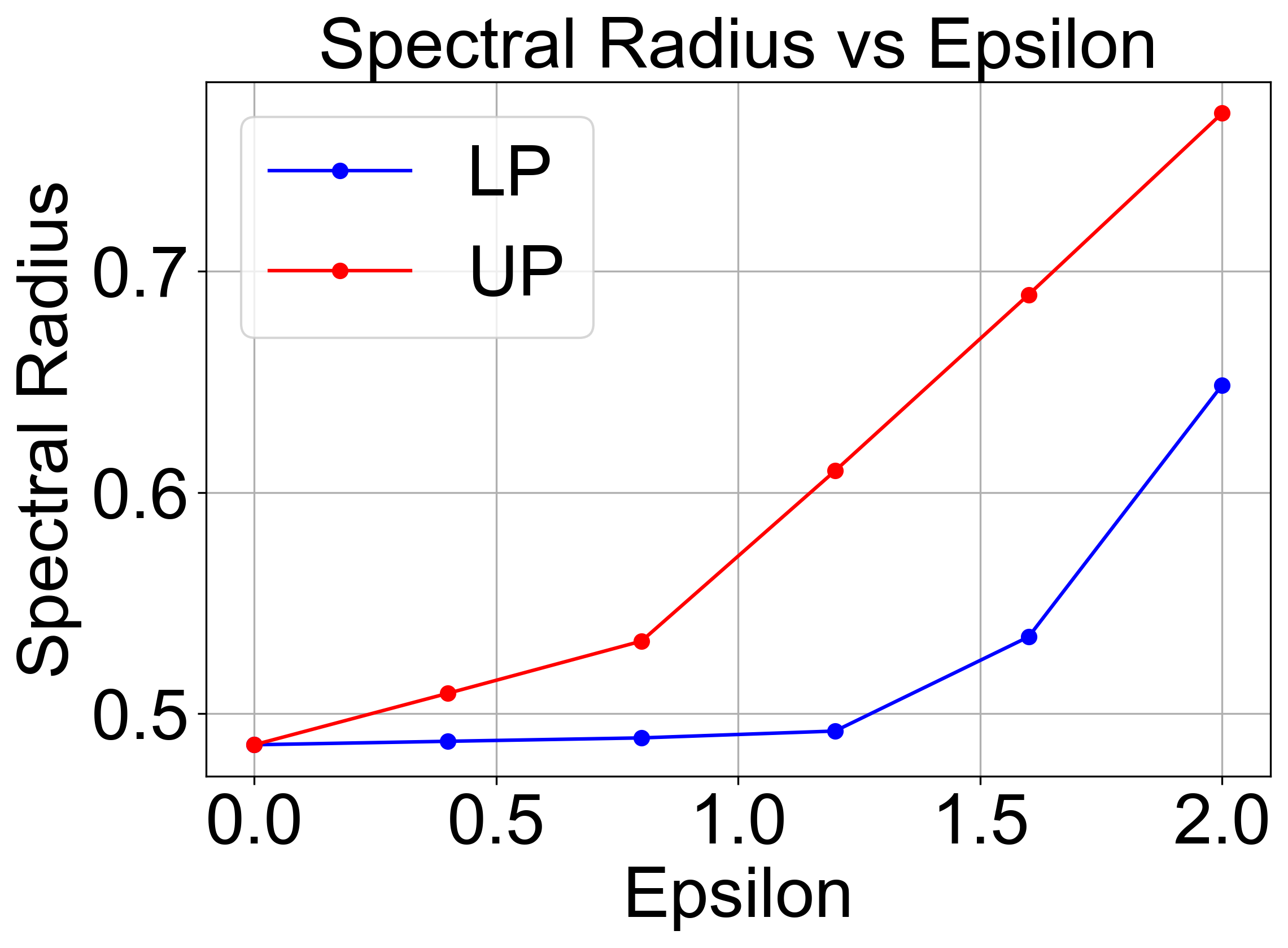}
        \caption{Spectral radius of $T_\text{Jacobi}$}
    \end{subfigure}
    \hfill
    \begin{subfigure}[b]{0.32\textwidth}
        \centering
        \includegraphics[width=\textwidth]{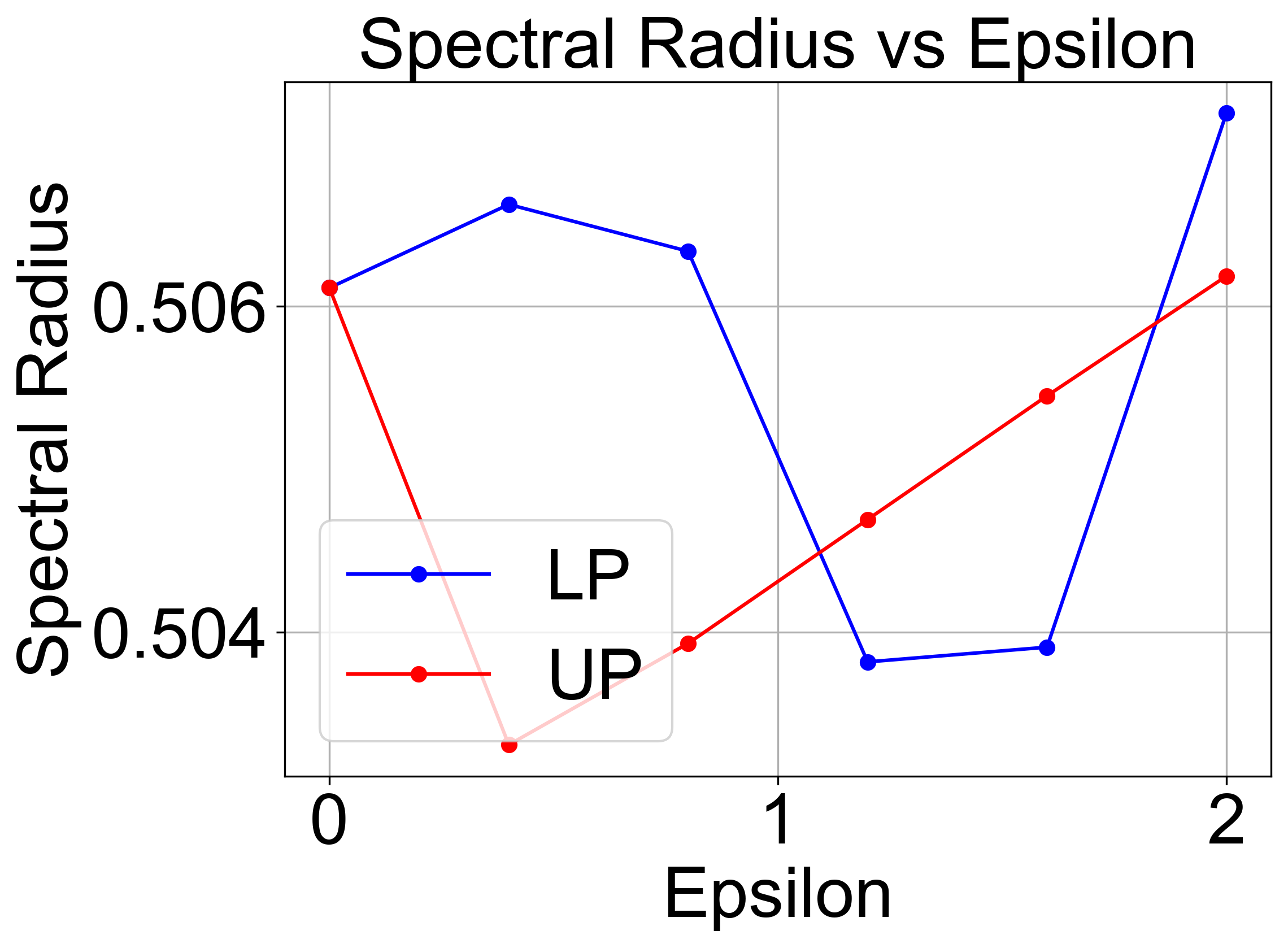}
        \caption{Spectral radius of $T_\text{GS}$}
    \end{subfigure}
    \hfill
    \begin{subfigure}[b]{0.32\textwidth}
        \centering
        \includegraphics[width=\textwidth]{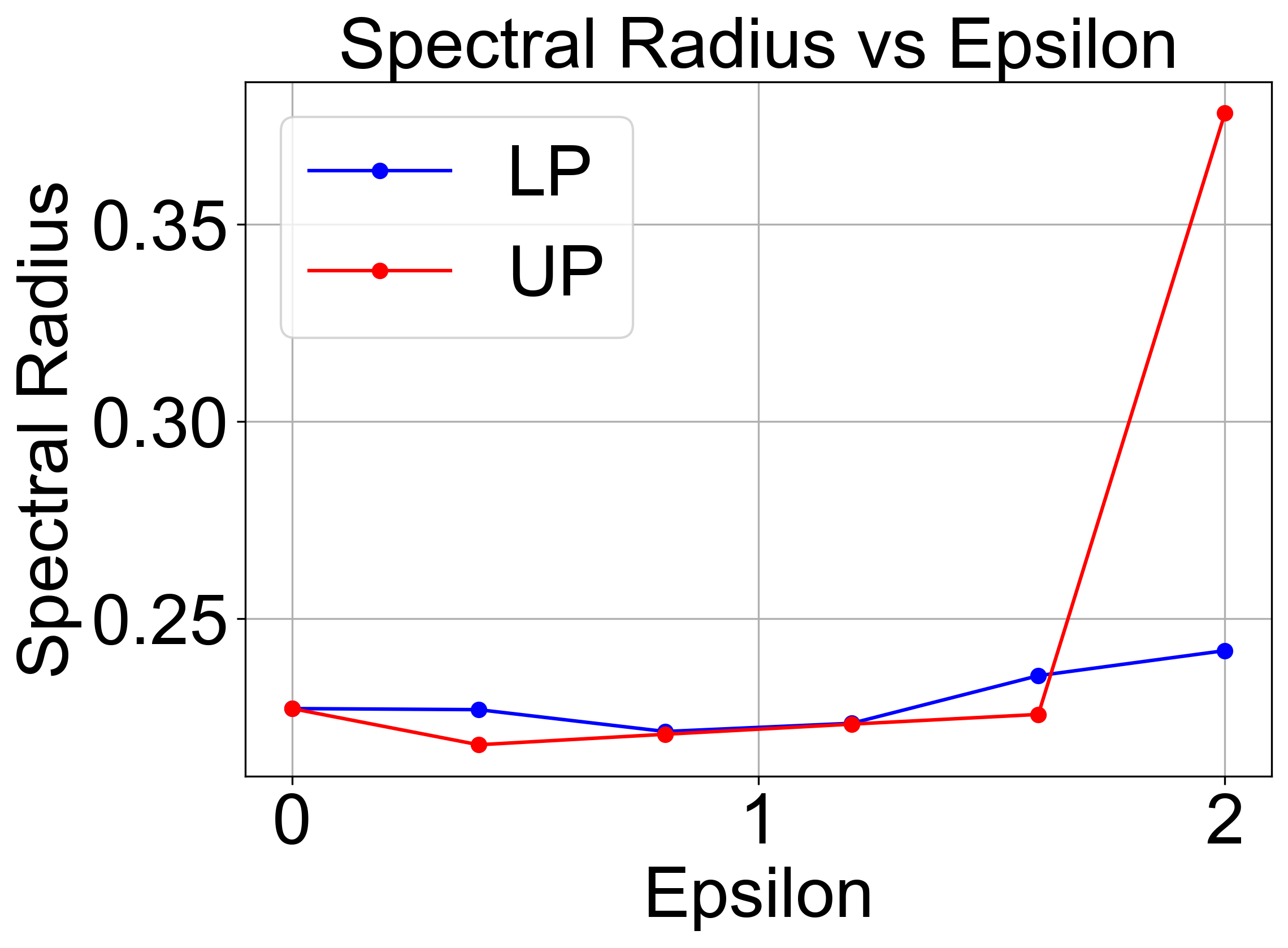}
        \caption{Spectral radius of $T_\text{SOR}$}
    \end{subfigure}
    
    \vspace{1em}
    
    \begin{subfigure}[b]{0.32\textwidth}
        \centering
        \includegraphics[width=\textwidth]{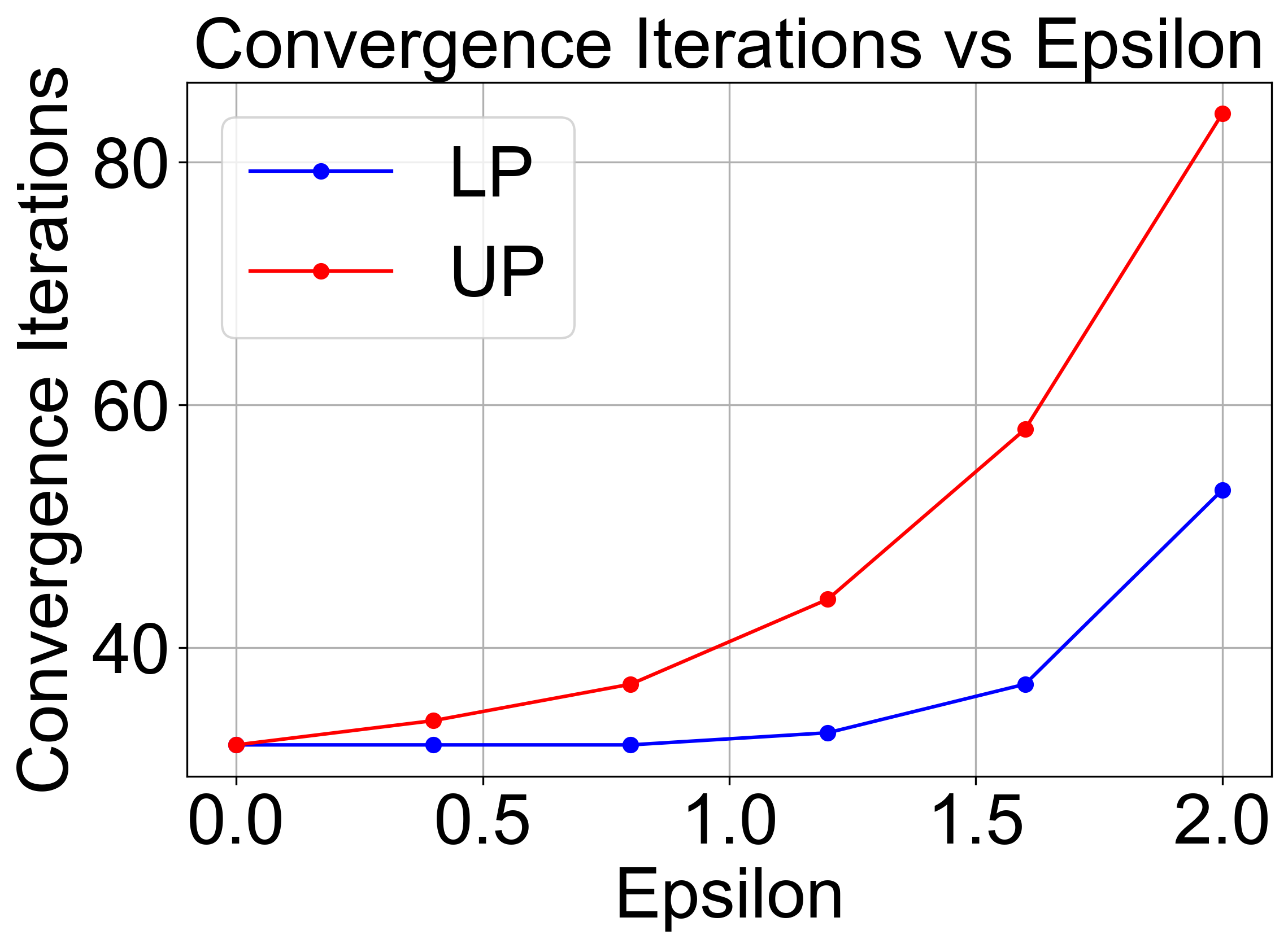}
        \caption{Iterations of Jacobi}
    \end{subfigure}
    \hfill
    \begin{subfigure}[b]{0.32\textwidth}
        \centering
        \includegraphics[width=\textwidth]{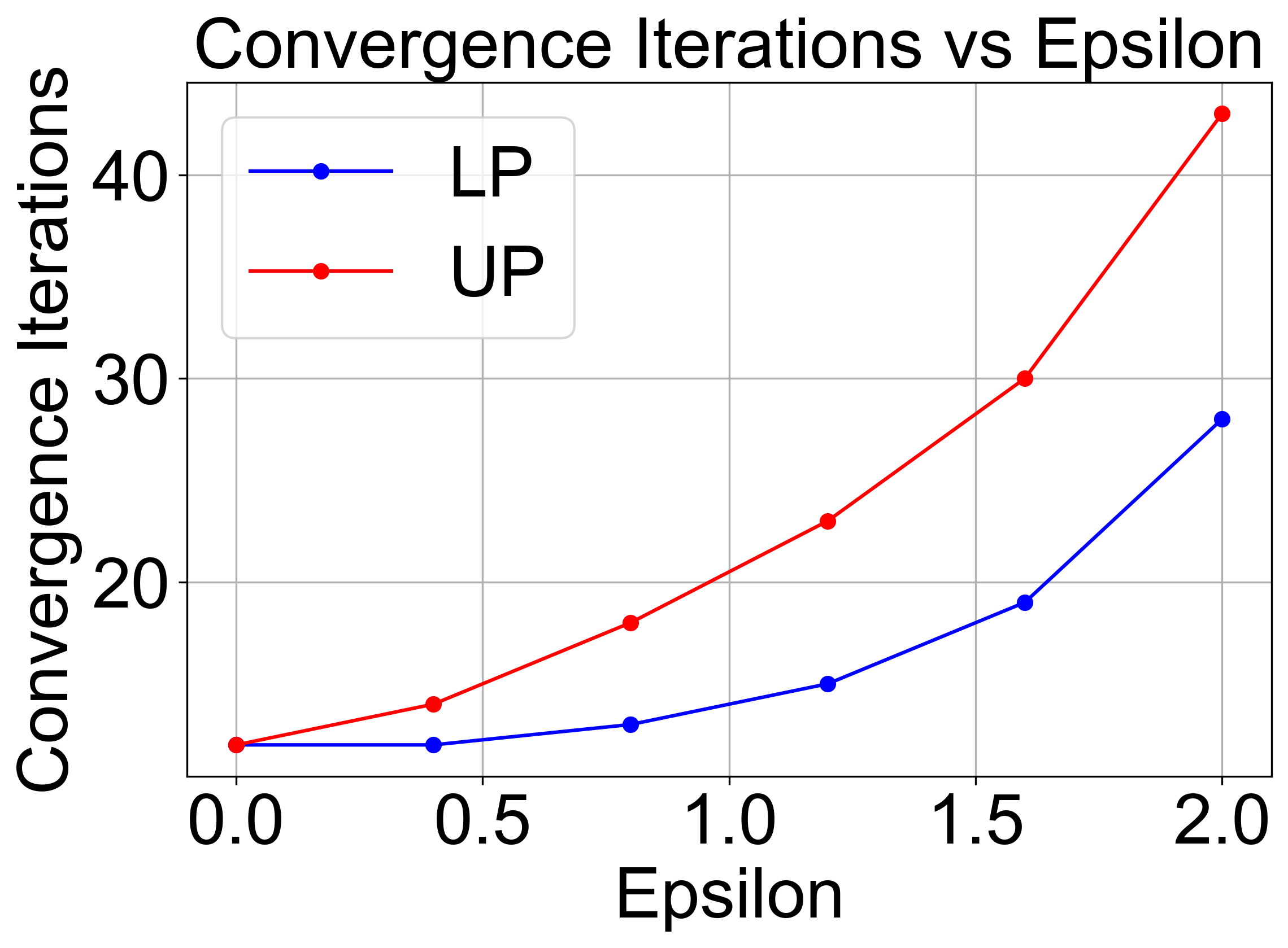}
        \caption{Gauss-Seidel iterations}
    \end{subfigure}
    \hfill
    \begin{subfigure}[b]{0.32\textwidth}
        \centering
        \includegraphics[width=\textwidth]{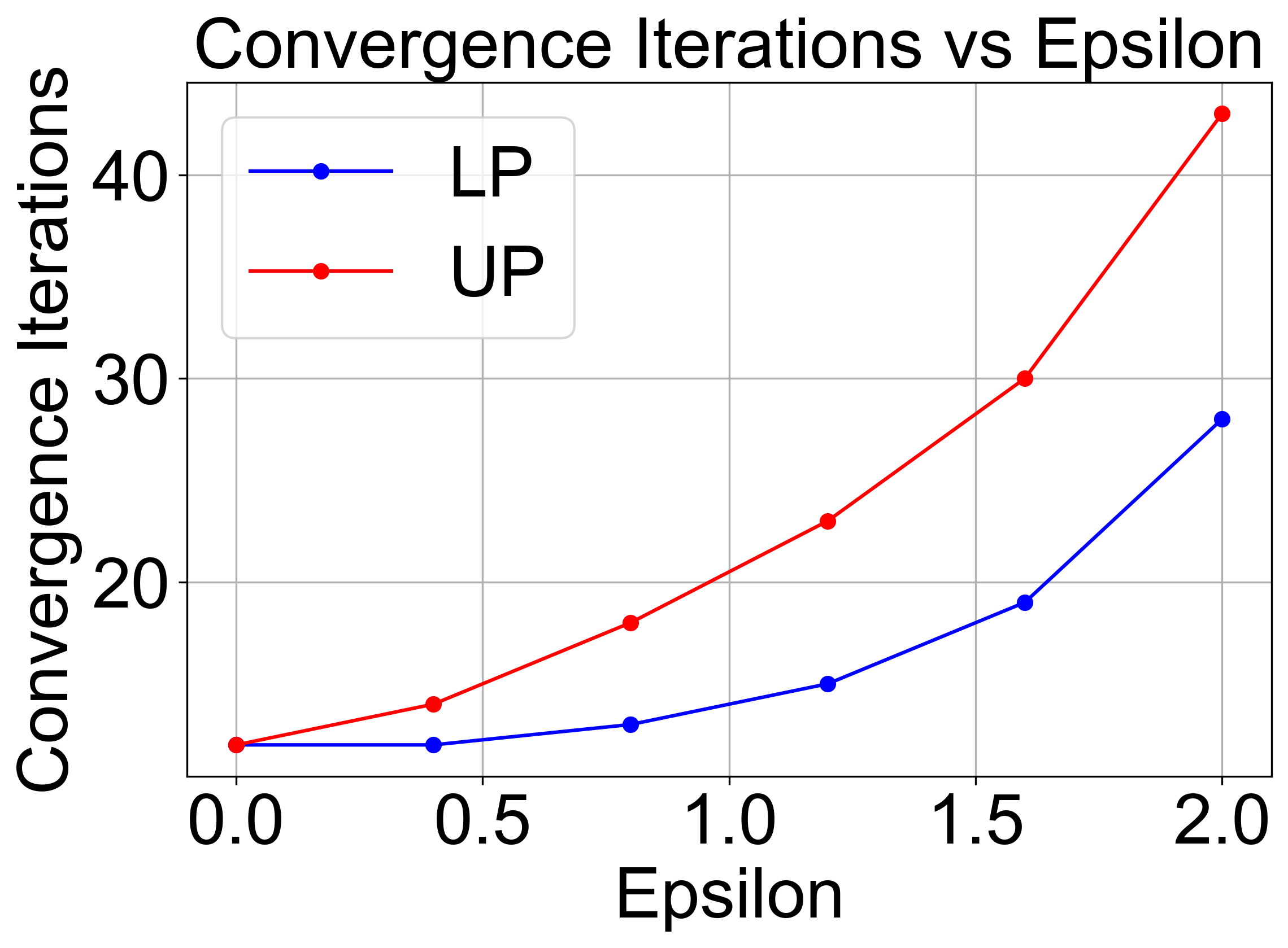}
        \caption{SOR iterations}
    \end{subfigure}
    \caption{Spectral radius and convergence iterations for Jacobi, Gauss-Seidel, and SOR methods under different perturbations.}
    \label{fig:spectral_radius_change}
\end{figure}

\begin{figure}[thbp]
    \centering
    \begin{subfigure}[b]{0.24\textwidth}
        \centering
        \includegraphics[width=\textwidth]{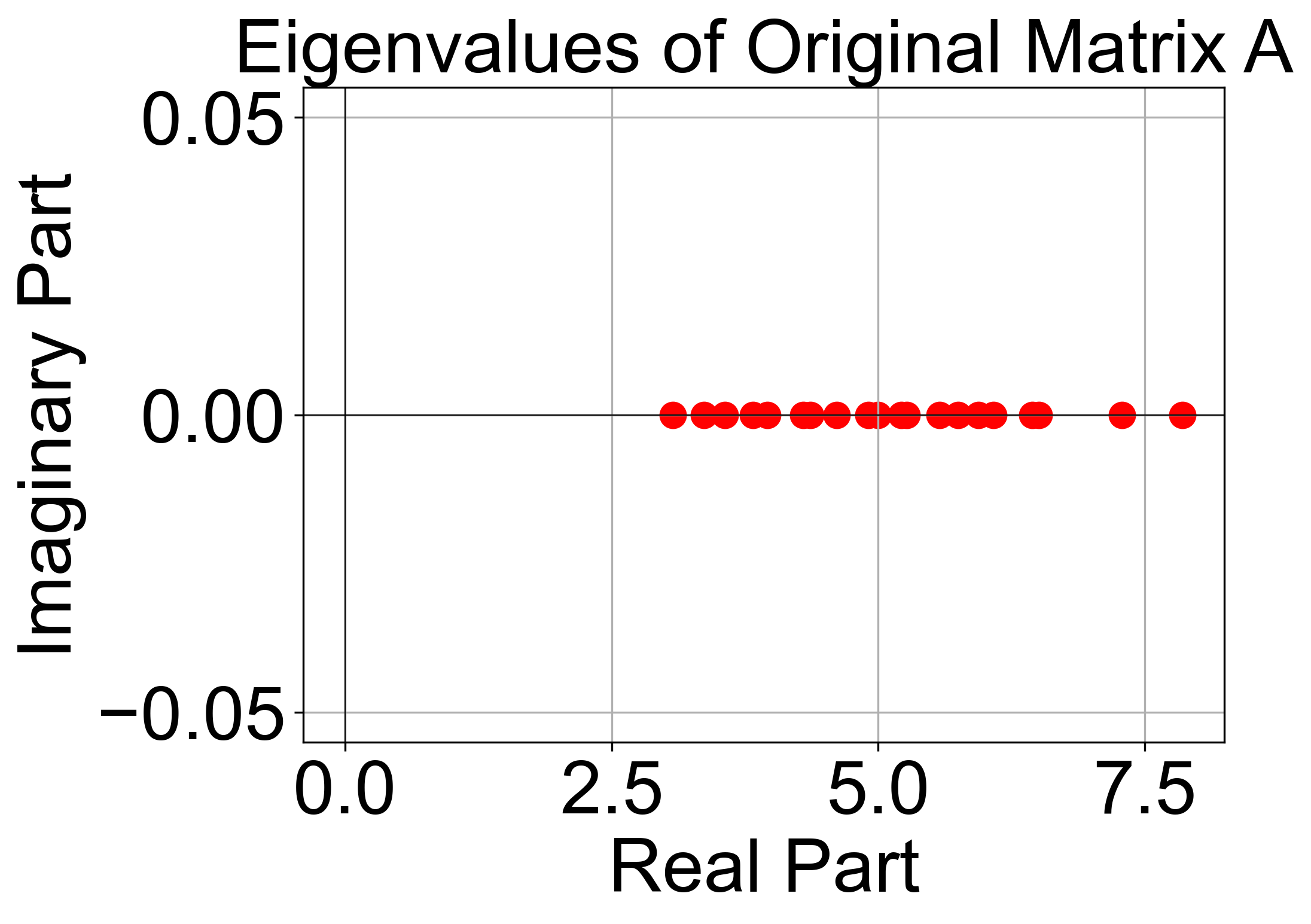}
        \caption{Original eigenvalues}
    \end{subfigure}
    \hfill
    \begin{subfigure}[b]{0.24\textwidth}
        \centering
        \includegraphics[width=\textwidth]{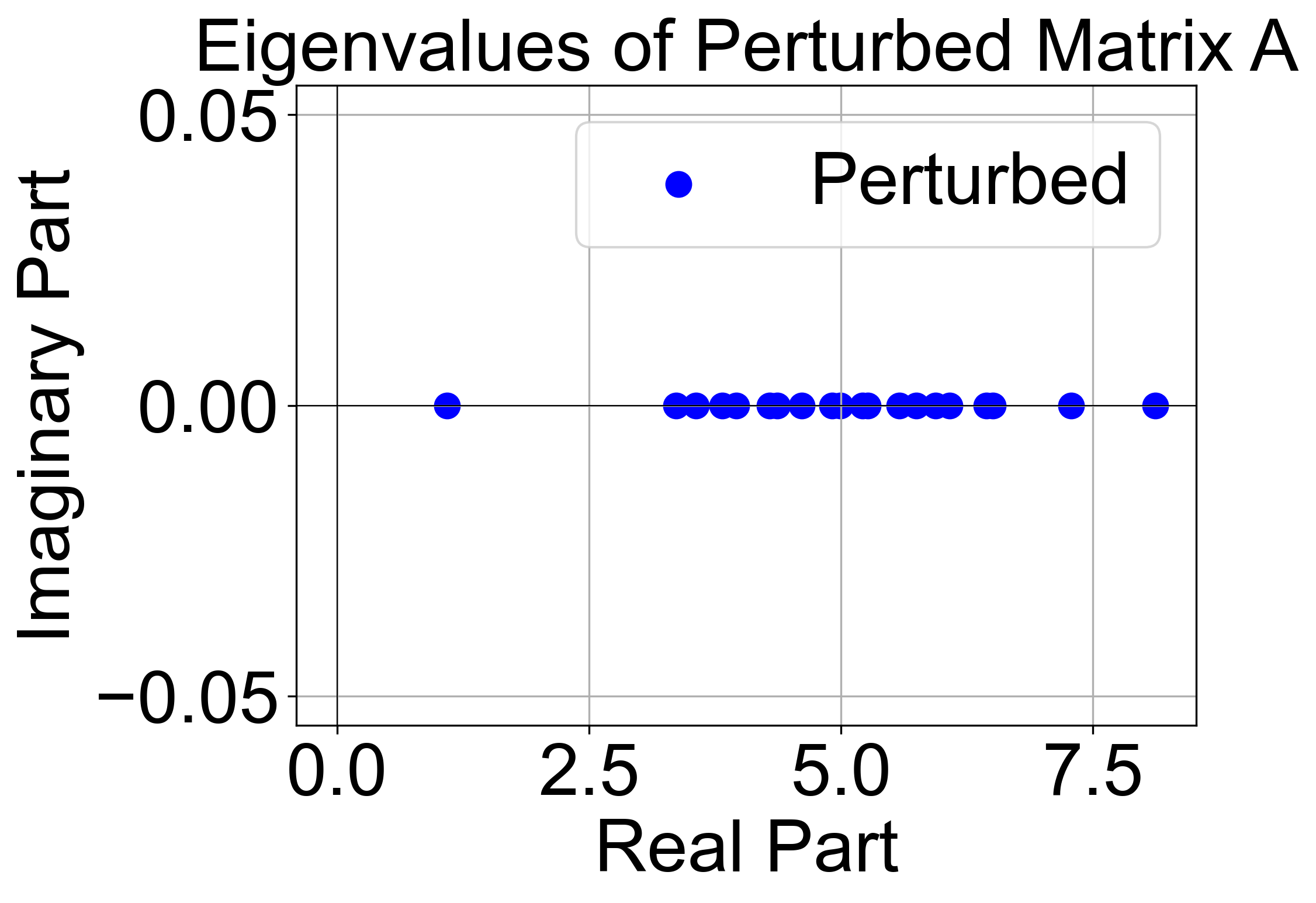}
        \caption{Perturbed eigenvalues}
    \end{subfigure}
    \hfill
    \begin{subfigure}[b]{0.24\textwidth}
        \centering
        \includegraphics[width=\textwidth]{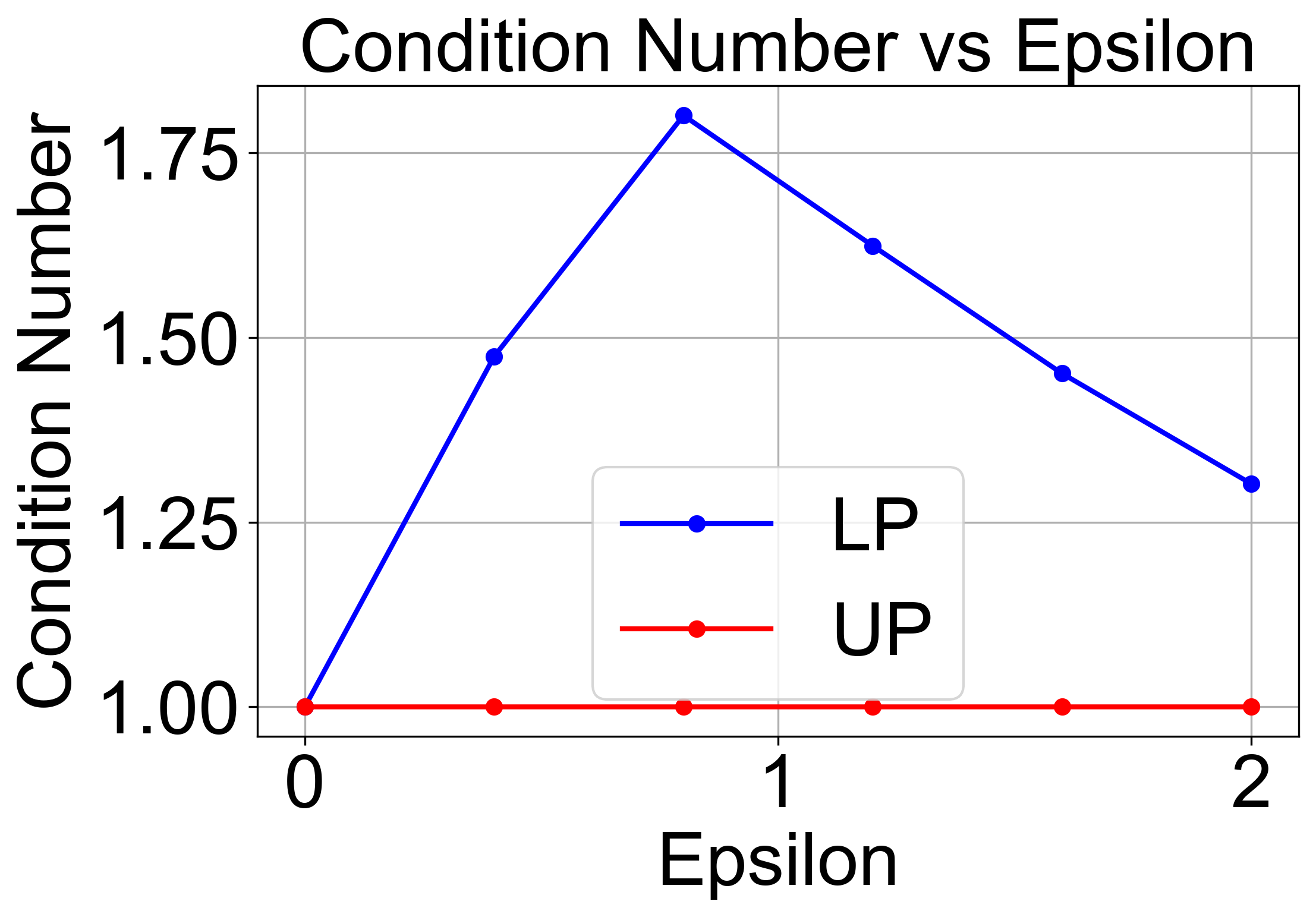}
        \caption{Condition number}
    \end{subfigure}
    \hfill
    \begin{subfigure}[b]{0.24\textwidth}
        \centering
        \includegraphics[width=\textwidth]{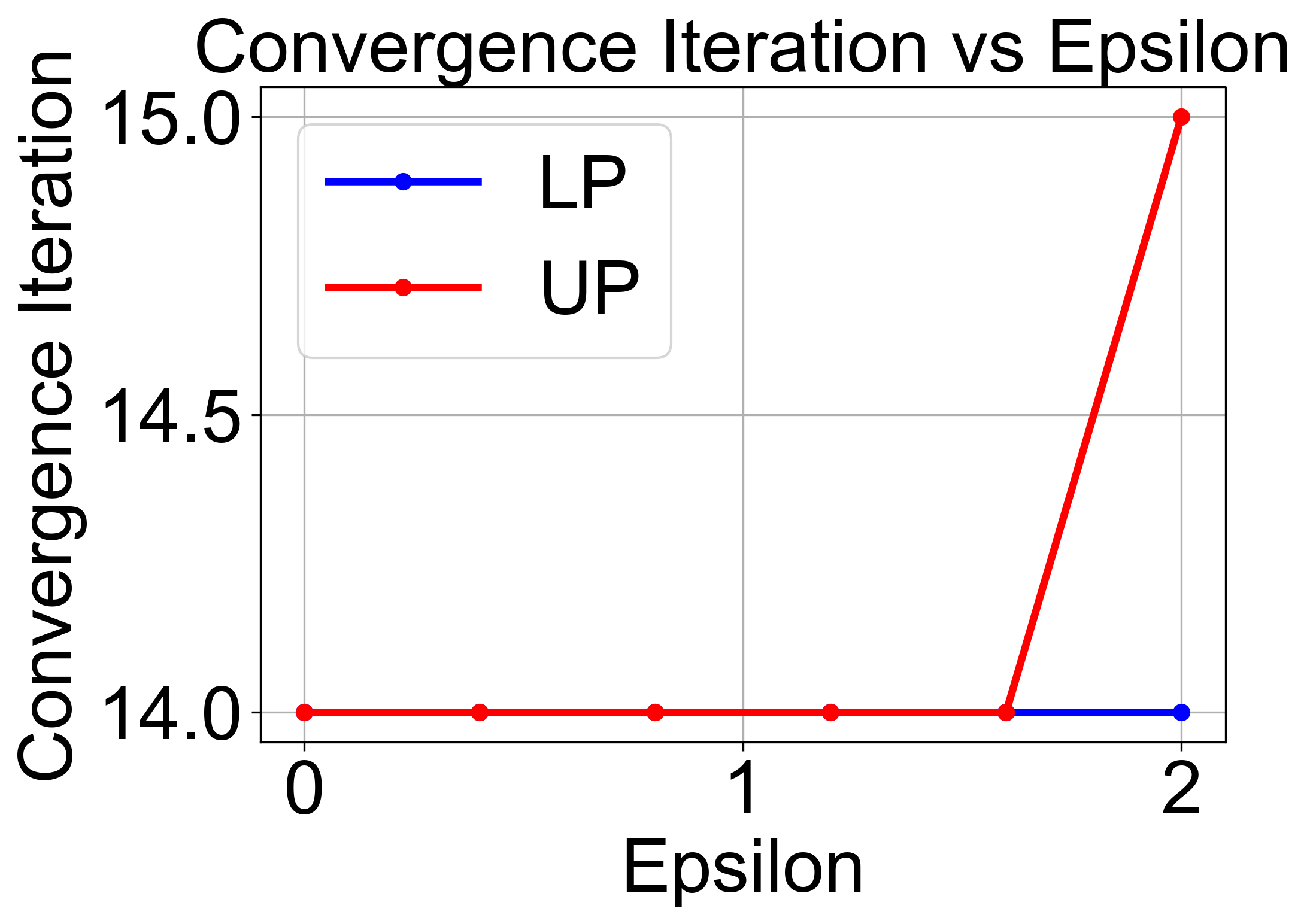}
        \caption{Convergence iterations}
    \end{subfigure}
    \caption{GMRES analysis: eigenvalue distribution, the condition number of eigenvectors of a perturbed matrix, and convergence iterations under perturbations.}
    \label{fig:eigenvalue_distribution}
\end{figure}
\subsection{Results and Analysis on Iterative Solver}
\label{sec.iterative_solver}
\header{Perturbation Decreases the System Fitting Accuracy}
The results are presented in Table \ref{tab:main_iterative_results}. In general, we observe that the system output error and residual increase as the perturbation radius $\epsilon$ grows. Unlike the results with direct solvers, we find that LP consistently causes more significant performance degradation than UP across the iterative solvers. \textit{We attribute this to the iterative solvers' reliance on the residual vector to refine the solution at each iteration, making them more susceptible to perturbations that directly affect the right-hand side vector $b$ (or output $y_t$ in our context).} While UP can also introduce errors on the right-hand side, it does not fully perturb it as LP does. 

\header{Perturbation Slowing Down Convergence} Regarding convergence behavior, we visualize the convergence of the five iterative solvers under both LP and UP with varying $\epsilon$ values in Figure \ref{fig:convergence_behavior}. Across different trials with various noise radii, we consistently observe that LP and UP both lead to slower convergence for most of the solvers as $\epsilon$ increases. 
However, we find that GMRES is the only solver that remains stable in terms of convergence speed upon perturbation, though the converged solution's accuracy is affected. We hypothesize that this robustness is attributed to GMRES’s residual minimization property and its use of orthogonal Krylov subspaces, which make it less sensitive to perturbations and changes in the system’s condition compared with other iterative methods under similar perturbations. Among all the iterative solvers, we find that GD is the most sensitive to perturbation.

\header{Analysis of Convergence Slowdown} We now provide analysis and visualization to understand the root cause of convergence slowdown. For the three basic iterative solvers like Jacobi, Gauss-Seidel and SOR, the convergence rate is determined by the spectral radius $\rho(T)$ of the iteration matrix $T$ ($T_\text{Jacobi}=D^{-1}R, T_\text{GS}=-(D+L)^{-1}U, T_\text{SOR}=(D+\omega L)^{-1}((1-\omega)D - \omega U)$ ). The condition of $\rho(T) < 1$ is the necessary and sufficient condition for convergence, and when the spectral radius is closer to 1, the convergence speed will slow down. We visualize the change in spectral radius for these three solvers under different perturbations in Figure \ref{fig:spectral_radius_change}. We observe that the spectral radius of the three solvers increases in general as the perturbation radius $\epsilon$ increases.

\begin{figure}[thbp]
    \centering
    \begin{subfigure}[b]{0.22\textwidth}
        \centering
        \includegraphics[width=\textwidth]{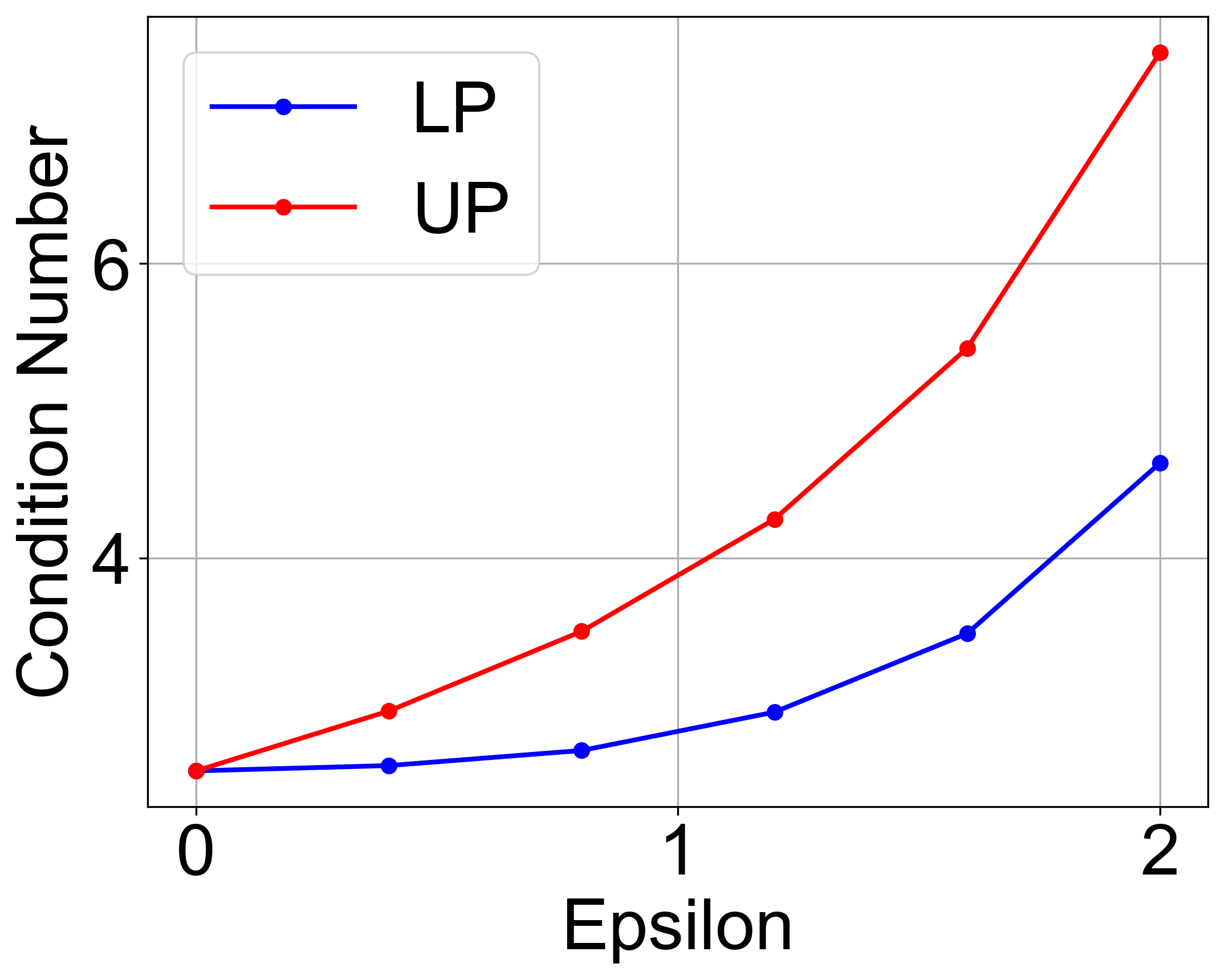}
        \caption{Condition number}
    \end{subfigure}
    \hfill
    \begin{subfigure}[b]{0.26\textwidth}
        \centering
        \includegraphics[width=\textwidth]{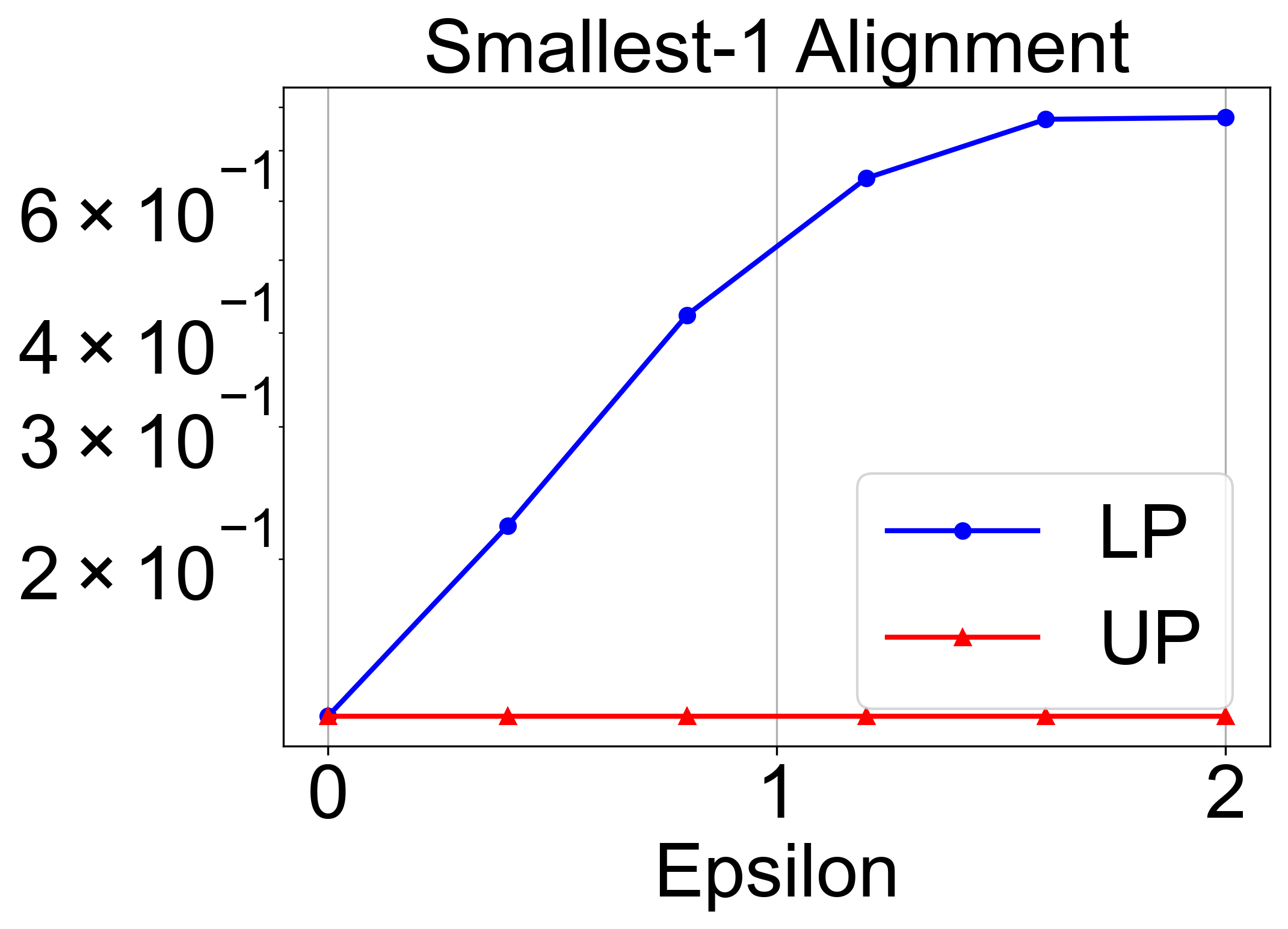}
        \caption{Smallest-1 alignment}
    \end{subfigure}
    \hfill
    \begin{subfigure}[b]{0.26\textwidth}
        \centering
        \includegraphics[width=\textwidth]{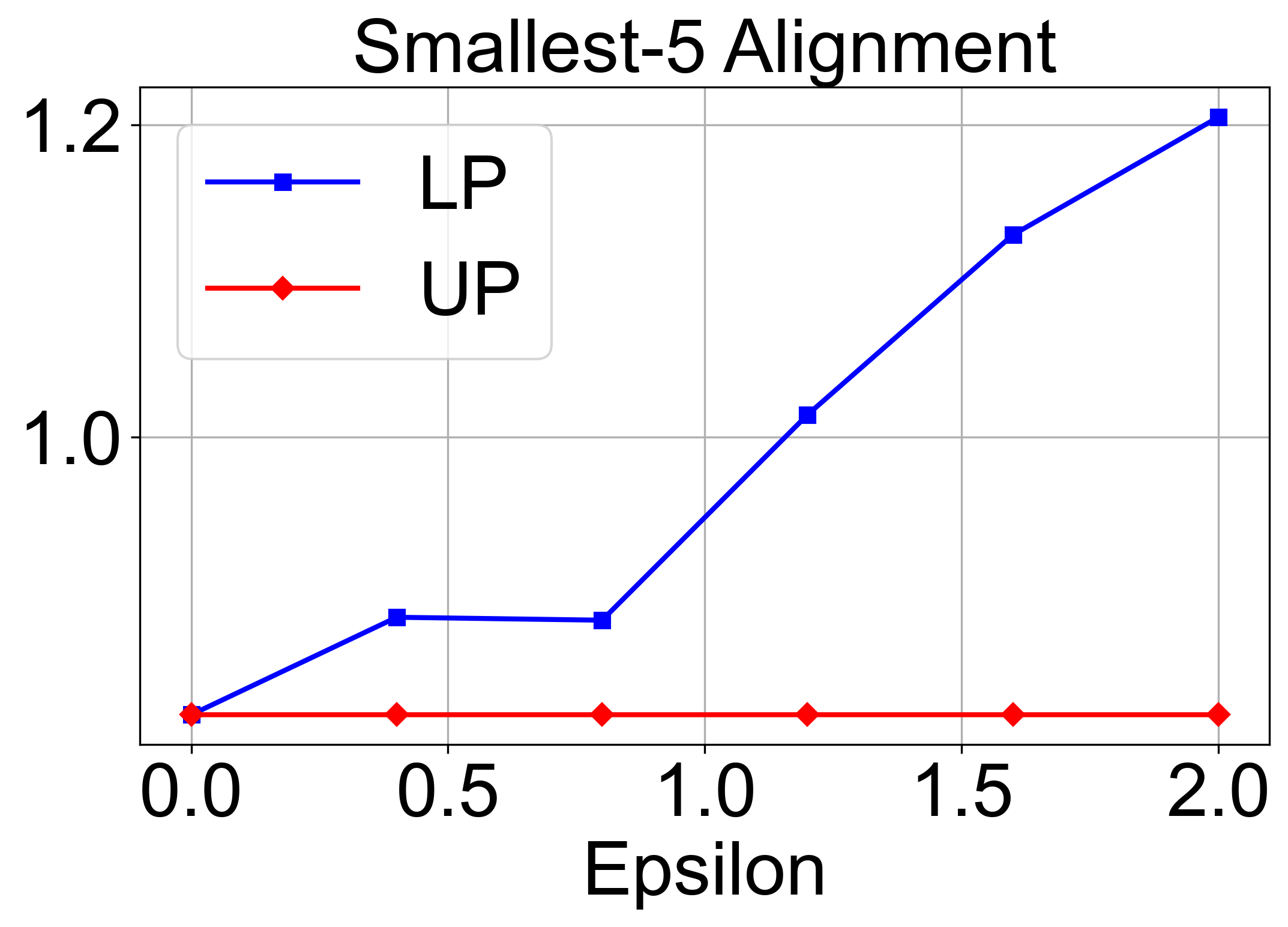}
        \caption{Smallest-5 alignment}
    \end{subfigure}
    \hfill
    \begin{subfigure}[b]{0.22\textwidth}
        \centering
        \includegraphics[width=\textwidth]{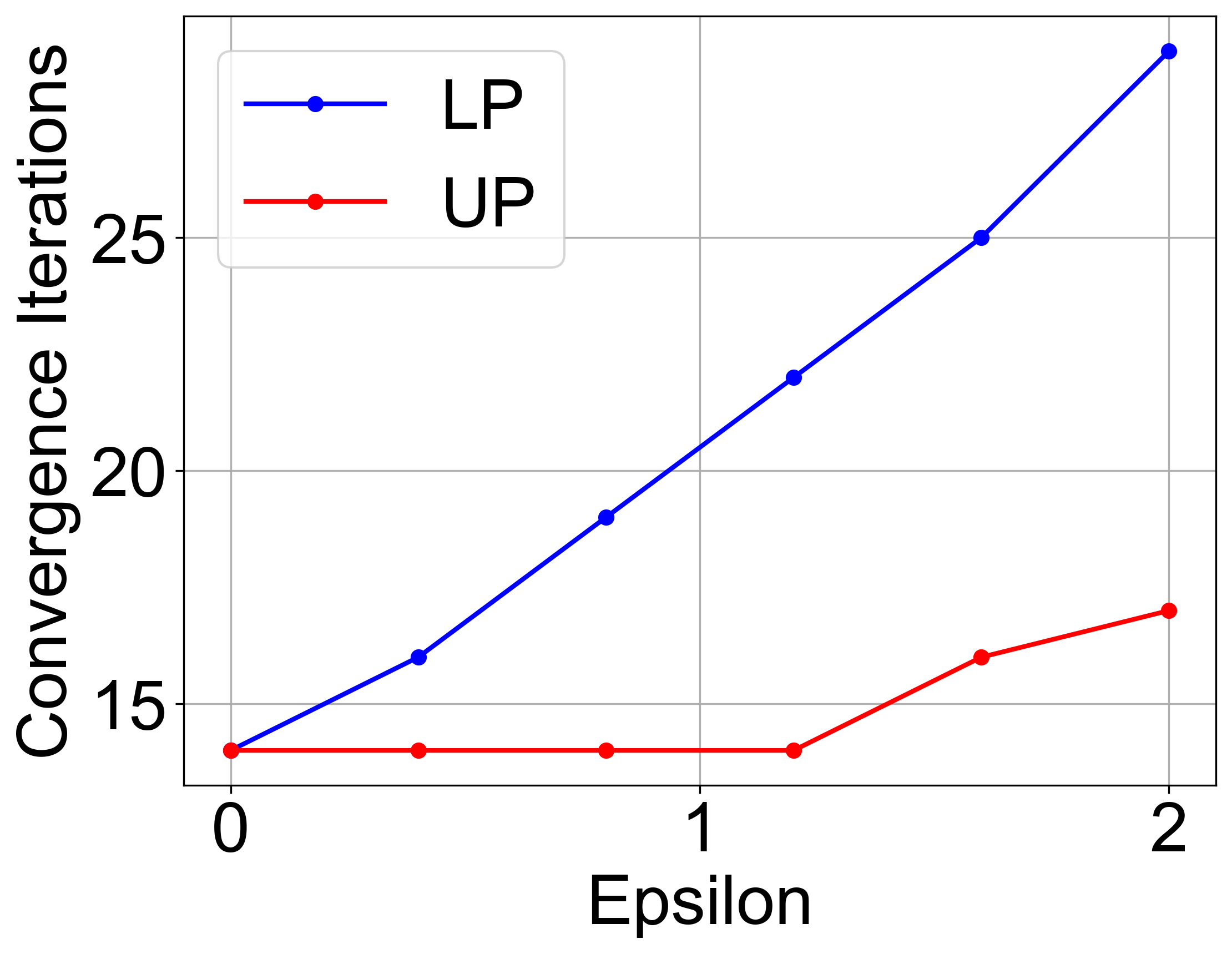}
        \caption{Convergence iterations}
    \end{subfigure}
    \caption{CG analysis: condition number, eigenvalue alignments, and convergence iterations under perturbations.}
    \label{fig:cg_alignment}
\end{figure}

For the Krylov subspace methods, GMRES, we visualize the eigenvalue distribution of the coefficient matrix (the original and those most perturbed by UP or LP) and the condition number of eigenvectors in Figure \ref{fig:eigenvalue_distribution}. As we can see, even though LP makes the eigenvectors slightly more ill-conditioned, the clustering properties of the eigenvectors are still relatively similar. As for UP, we find that it does not introduce an effect in terms of the condition number of eigenvectors. These may explain why GMRES is robust to both LP and UP; see \citep{saad1986gmres,embree2022descriptive} for more discussion on GMRES convergence theory.

For the energy-minimizing method, CG, the convergence is dominated by the condition number of the system matrix $A$ and the relative alignment of the initial residual with the eigenvectors of $A$ that correspond to the smallest eigenvalues. We analyze the alignment of the initial residual with the eigenvectors of $A$. We compute this by projecting the initial residual $r_0$ onto each eigenvector $v_i$ of $A$, given by $c_i = (r_0^T v_i) v_i$. To quantify the alignment, we calculate the squared magnitude of these projections: i) Alignment with the smallest eigenvalue: $\text{Alignment with smallest eigenvalue} = \|c_1\|^2$, where $v_1$ corresponds to the smallest eigenvalue $\lambda_1$. ii) Alignment with the five smallest eigenvalues: $\text{Alignment with five smallest eigenvalues} = \sum_{i=1}^5 \|c_i\|^2$, where $v_1, \ldots, v_5$ correspond to the five smallest eigenvalues. As shown in Figure \ref{fig:cg_alignment}, the LP perturbation greatly increases this alignment in both cases, leading to slower convergence. In contrast, the UP does not affect the alignment much. This reaffirms the results in Table \ref{tab:main_iterative_results} that LP causes convergence slowdown more than UP.

\begin{figure}[thbp]
    \centering
    \begin{subfigure}[b]{0.28\textwidth}
        \centering
        \includegraphics[width=\textwidth]{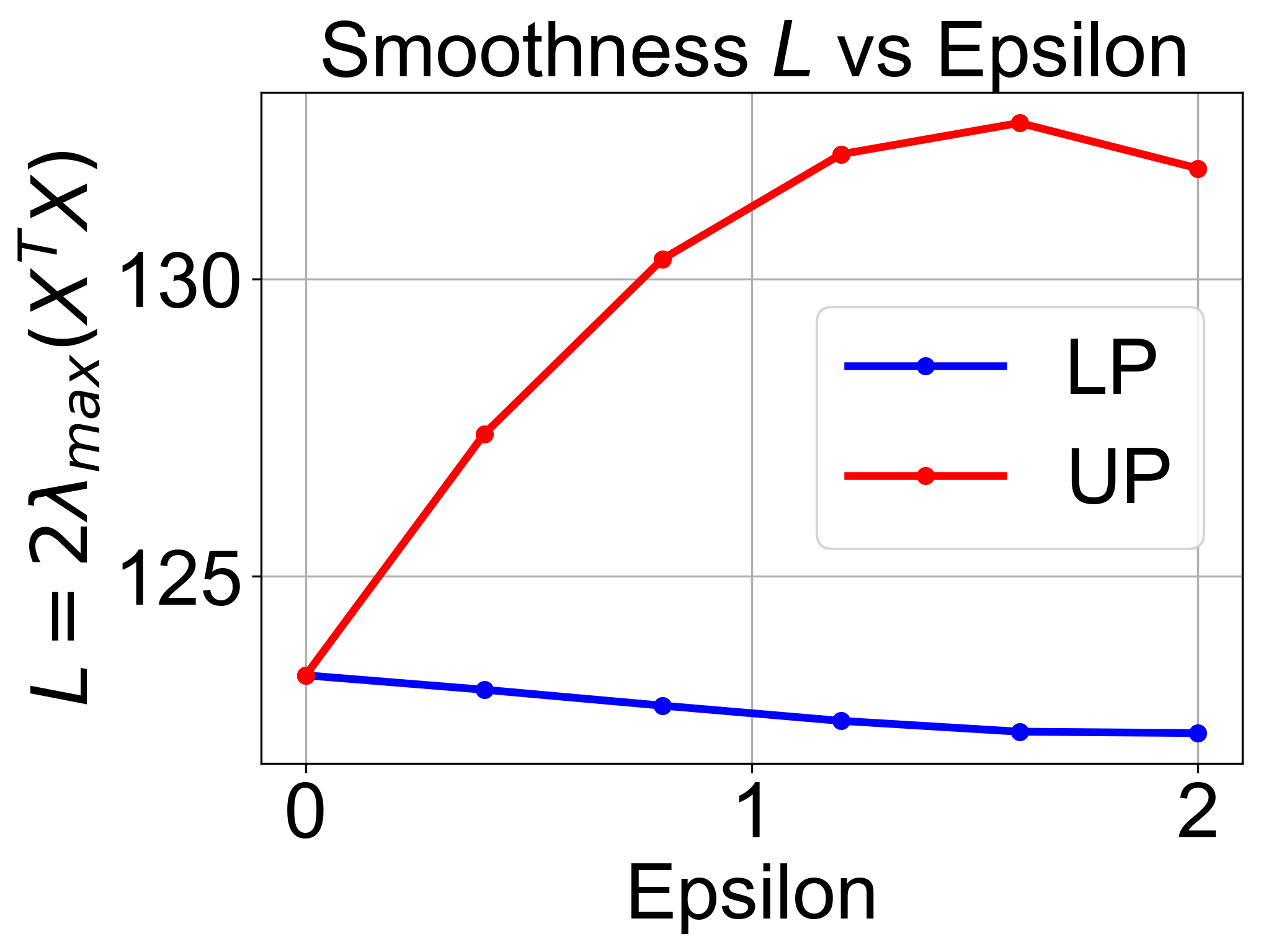}
        \caption{Smoothness constant}
        \label{fig:gd_smoothness}
    \end{subfigure}
    \hfill
    \begin{subfigure}[b]{0.68\textwidth}
        \centering
        \includegraphics[width=\textwidth]{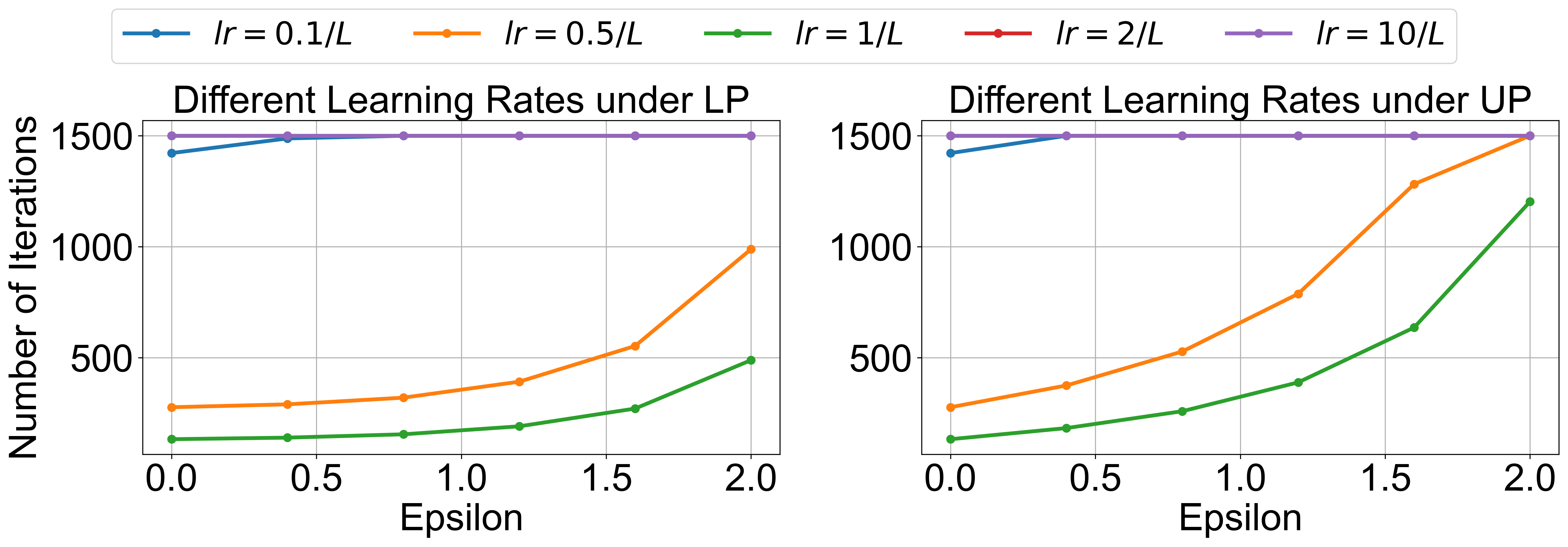}
        \caption{Convergence iterations with different learning rates}
        \label{fig:gd_lr}
    \end{subfigure}
    \caption{Gradient descent analysis: the smoothness constant $L$ and convergence iterations under perturbations with various learning rates.}
    \label{fig:gd_analysis}
\end{figure}
For gradient descent, the convergence rate is affected by the Lipschitz constant of the underlying function $f$. Since the gradient is $\nabla f(x_k) = 2X^T(Xx_k-b)$, the Lipschitz constant of the underlying function is $L = 2\lambda_{max}(X^TX)$. We visualize this Lipschitz constant and the convergence iterations with different learning rates $lr$ in Figure \ref{fig:gd_analysis}. As we can see, UP introduces a larger smoothness factor increase, leading to a more significant convergence slowdown than LP. Furthermore, from the iteration number, we can see that GD is sensitive to the learning rate setup, and setting it to $1/L$ is an empirically good choice that achieves the fastest convergence.

\begin{wrapfigure}{r}{0.5\textwidth}
  \centering
  \includegraphics[width=\linewidth]{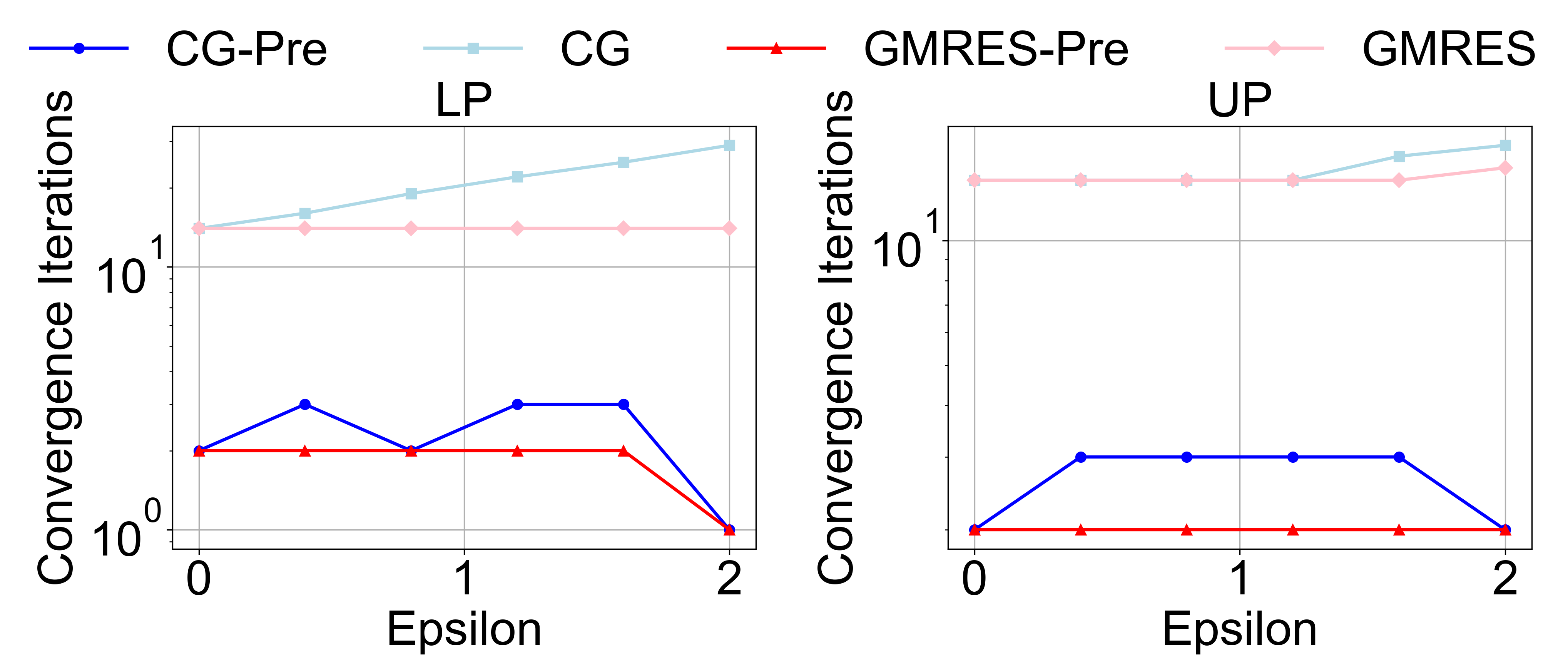}
  \caption{Convergence behavior with and without ILU preconditioning under various perturbation levels. Label with `Pre' means the preconditioning is applied.}
  \label{fig:preconditioner}
\end{wrapfigure}
\header{Using Preconditioning as Defense for Stabilizing Convergence} Preconditioning improves the convergence of iterative solvers, especially under perturbations; see \citet{benzi2002preconditioning} for the survey of preconditioning techniques as well as \citet{CardeSGug2021,CarDiss2021} and references therein on preconditioner update strategies. We focused on the Incomplete LU (ILU) factorization preconditioner, which approximates the original matrix A as $\tilde{A} = \tilde{L}\tilde{U}$, resulting in a preconditioned system $M^{-1}Ax = M^{-1}b$, where $M = \tilde{L}\tilde{U}$. Our results show that applying this preconditioner can significantly stabilize the convergence of iterative solvers. 
Figure \ref{fig:preconditioner} illustrates the ILU preconditioner's impact on convergence behavior. It substantially improves convergence rates, particularly with higher perturbation levels, by reducing the system matrix's condition number and mitigating perturbation effects on the eigenvalue distribution. In conclusion, appropriate preconditioning, such as ILU, serves as an effective defense against perturbations in linear systems, enhancing the stability of iterative solvers in practical applications with uncertainties.

\begin{figure}[thbp]
    \centering
    \begin{subfigure}[b]{0.32\textwidth}
        \centering
        \includegraphics[width=\textwidth]{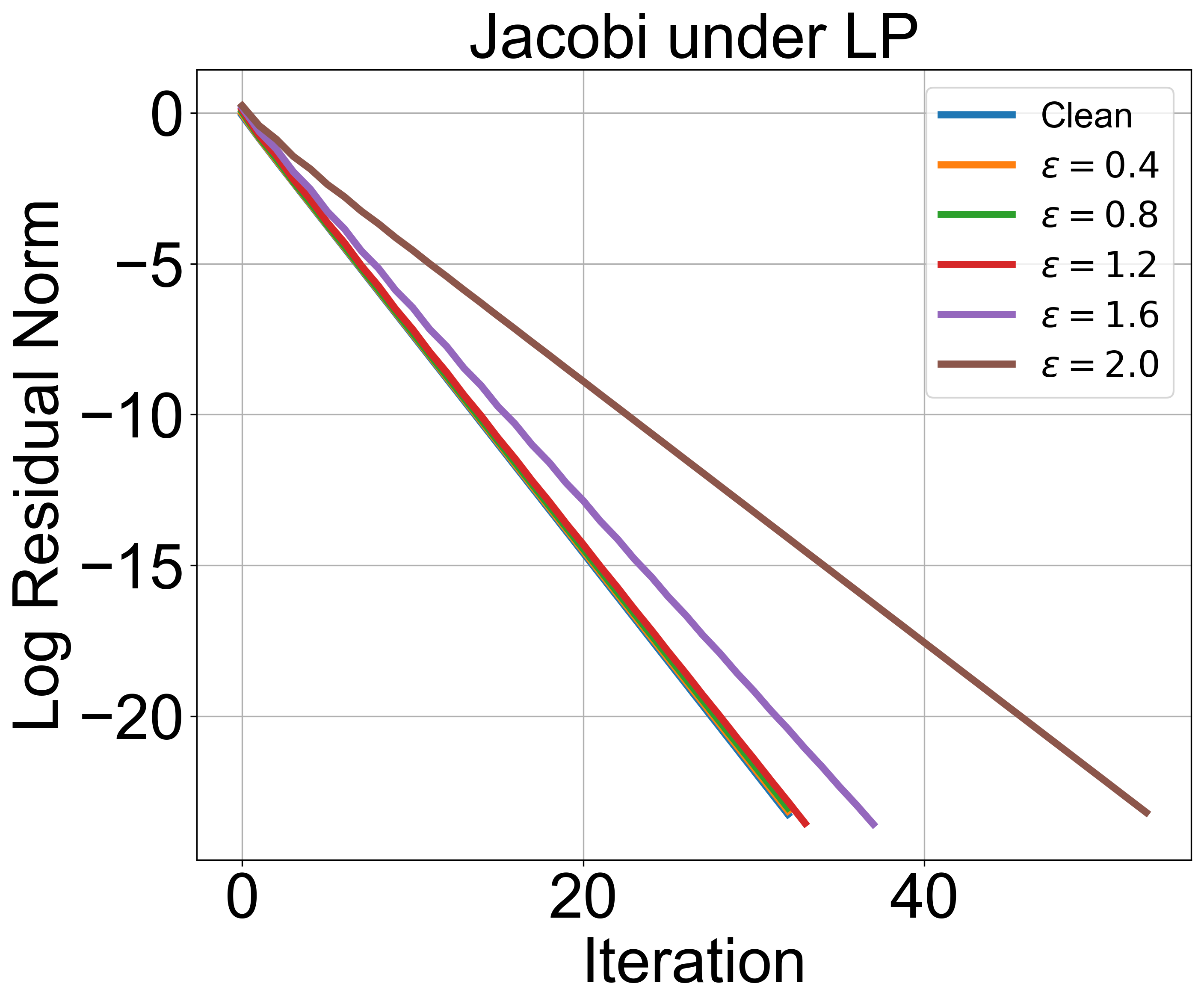}
        \caption{Jacobi (LP)}
        \label{fig:jacobi_LP}
    \end{subfigure}
    \hfill
    \begin{subfigure}[b]{0.32\textwidth}
        \centering
        \includegraphics[width=\textwidth]{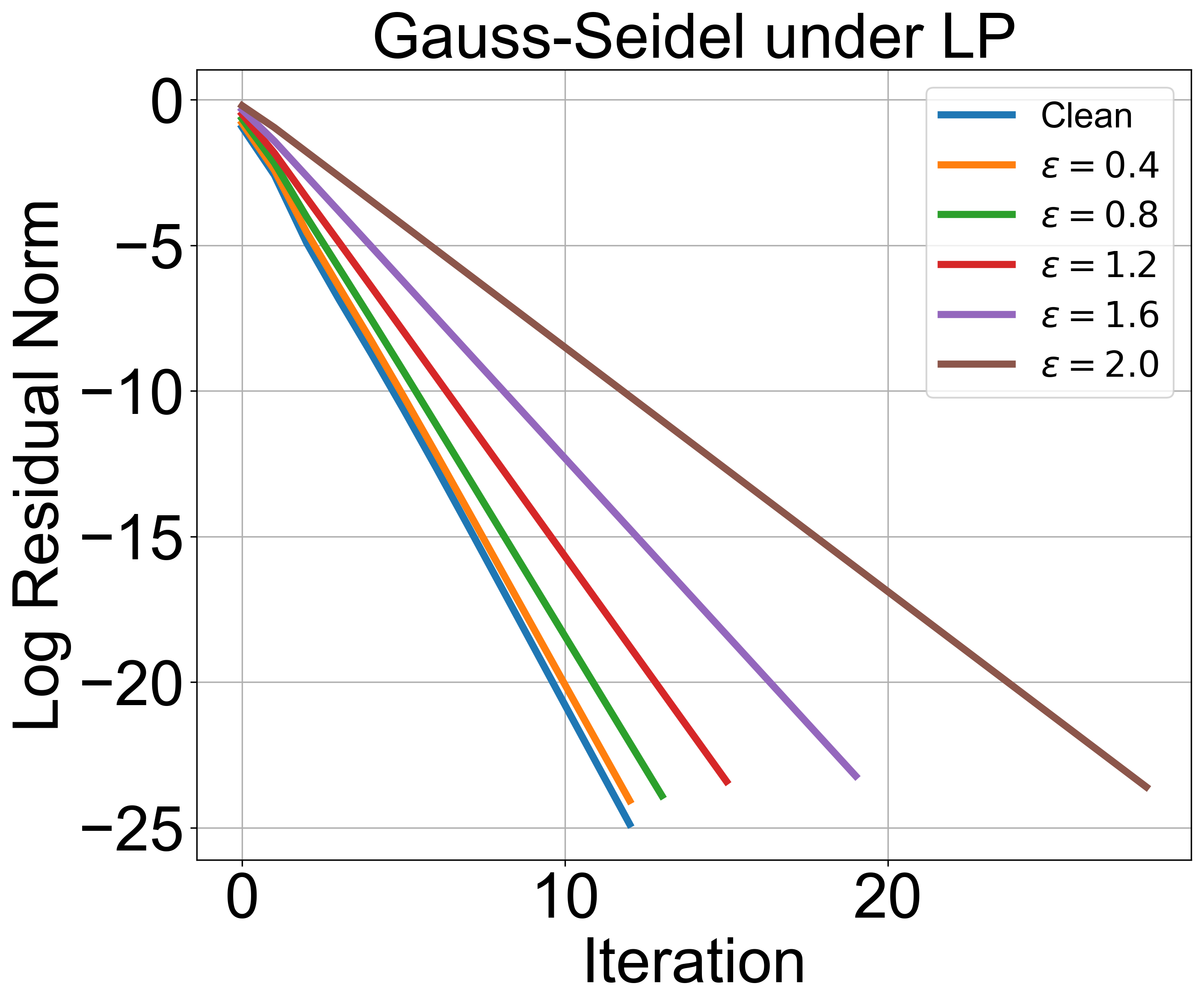}
        \caption{Gauss-Seidel (LP)}
        \label{fig:gauss_LP}
    \end{subfigure}
    \hfill
    \begin{subfigure}[b]{0.32\textwidth}
        \centering
        \includegraphics[width=\textwidth]{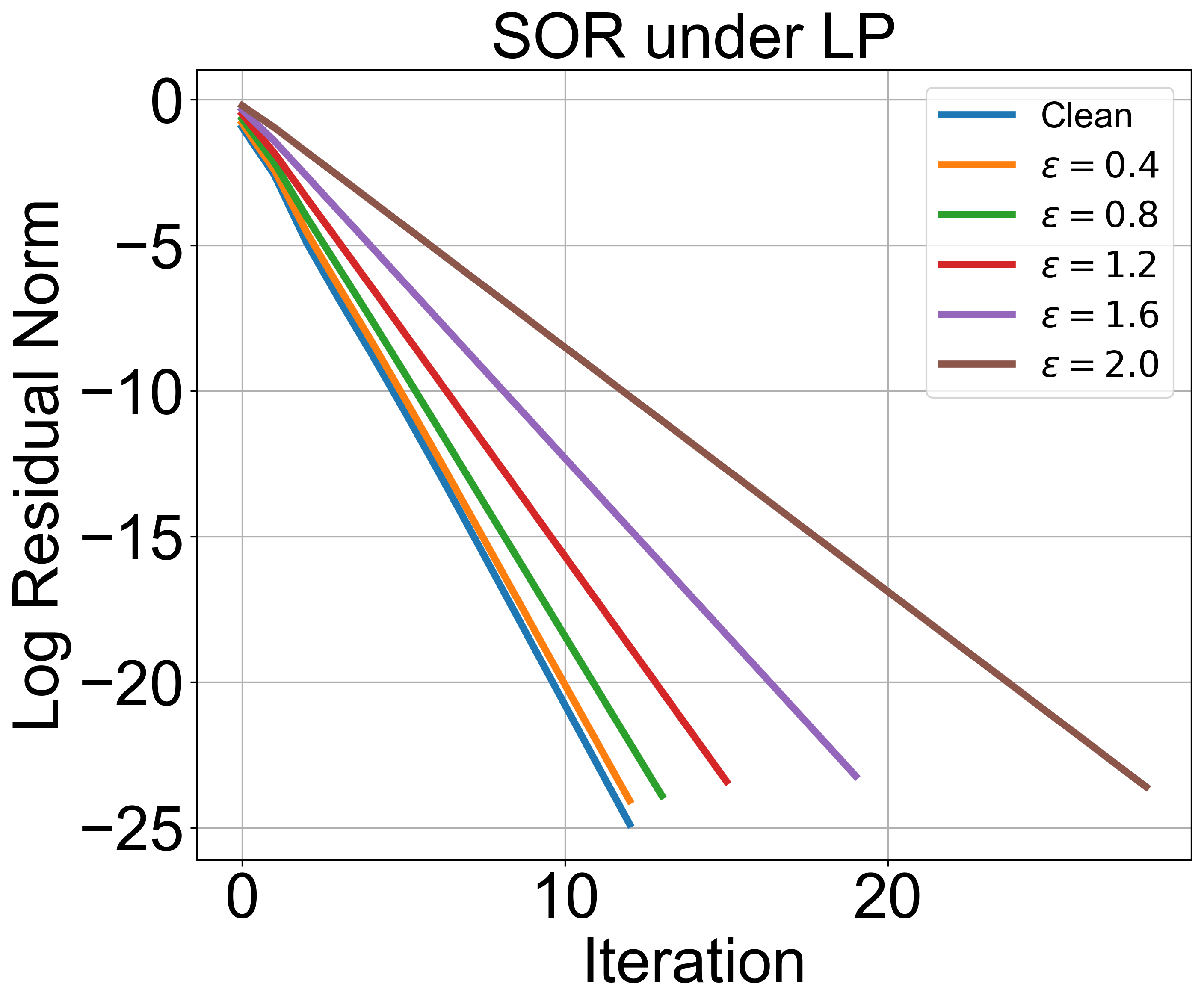}
        \caption{SOR (LP)}
        \label{fig:sor_LP}
    \end{subfigure}

    \begin{subfigure}[b]{0.32\textwidth}
        \centering
        \includegraphics[width=\textwidth]{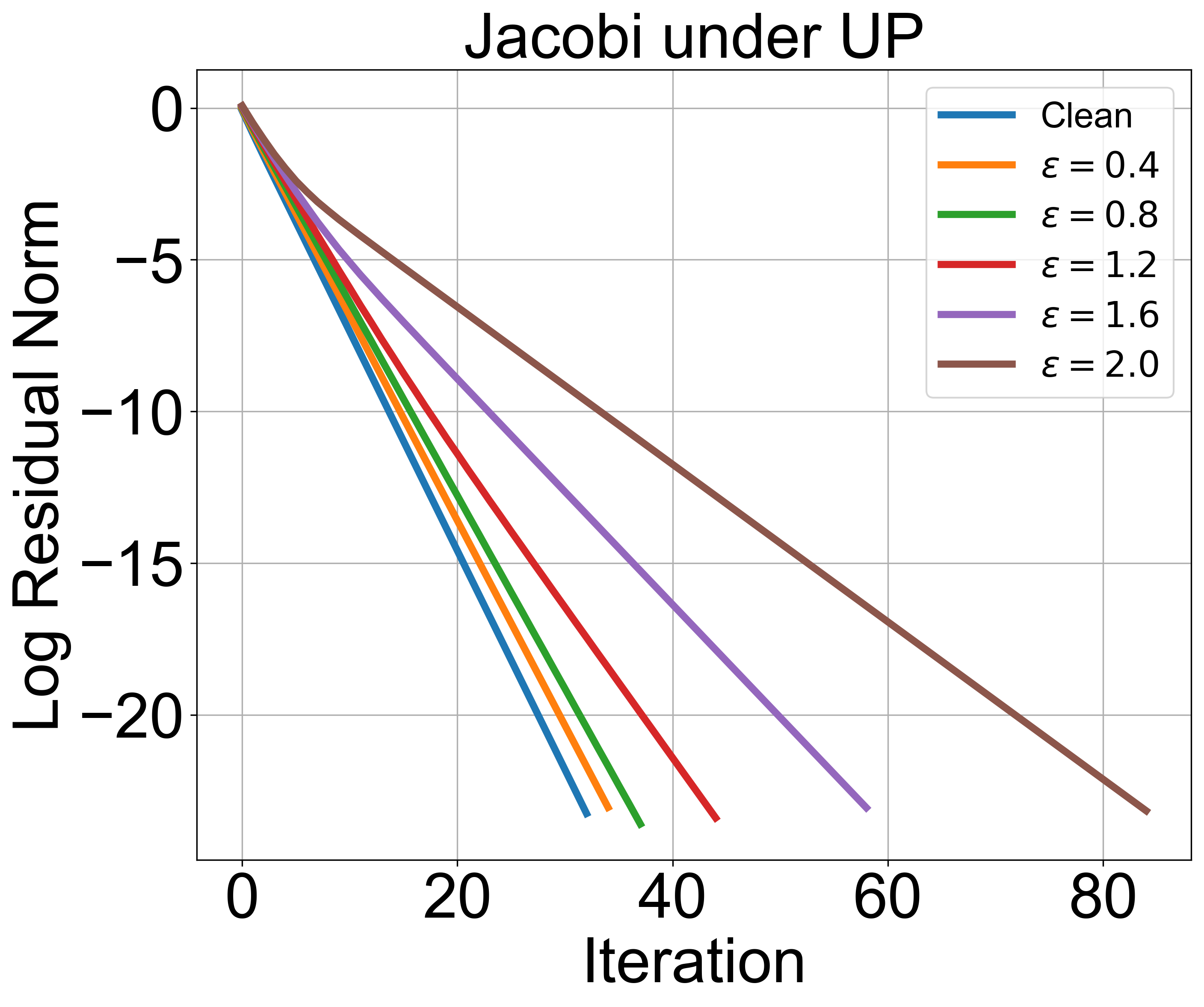}
        \caption{Jacobi (UP)}
        \label{fig:jacobi_UP}
    \end{subfigure}
    \hfill
    \begin{subfigure}[b]{0.32\textwidth}
        \centering
        \includegraphics[width=\textwidth]{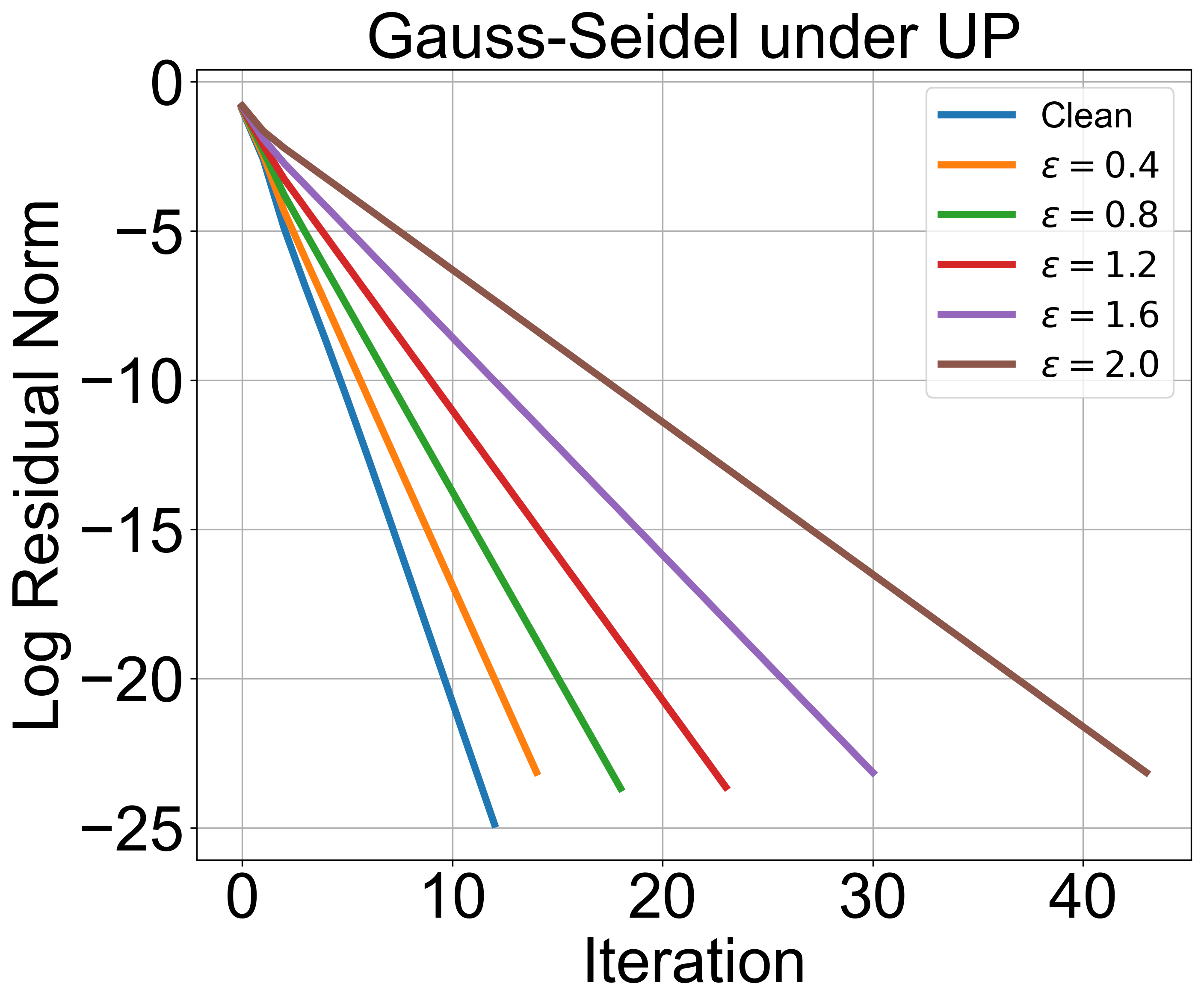}
        \caption{Gauss-Seidel (UP)}
        \label{fig:gauss_UP}
    \end{subfigure}
    \hfill
    \begin{subfigure}[b]{0.32\textwidth}
        \centering
        \includegraphics[width=\textwidth]{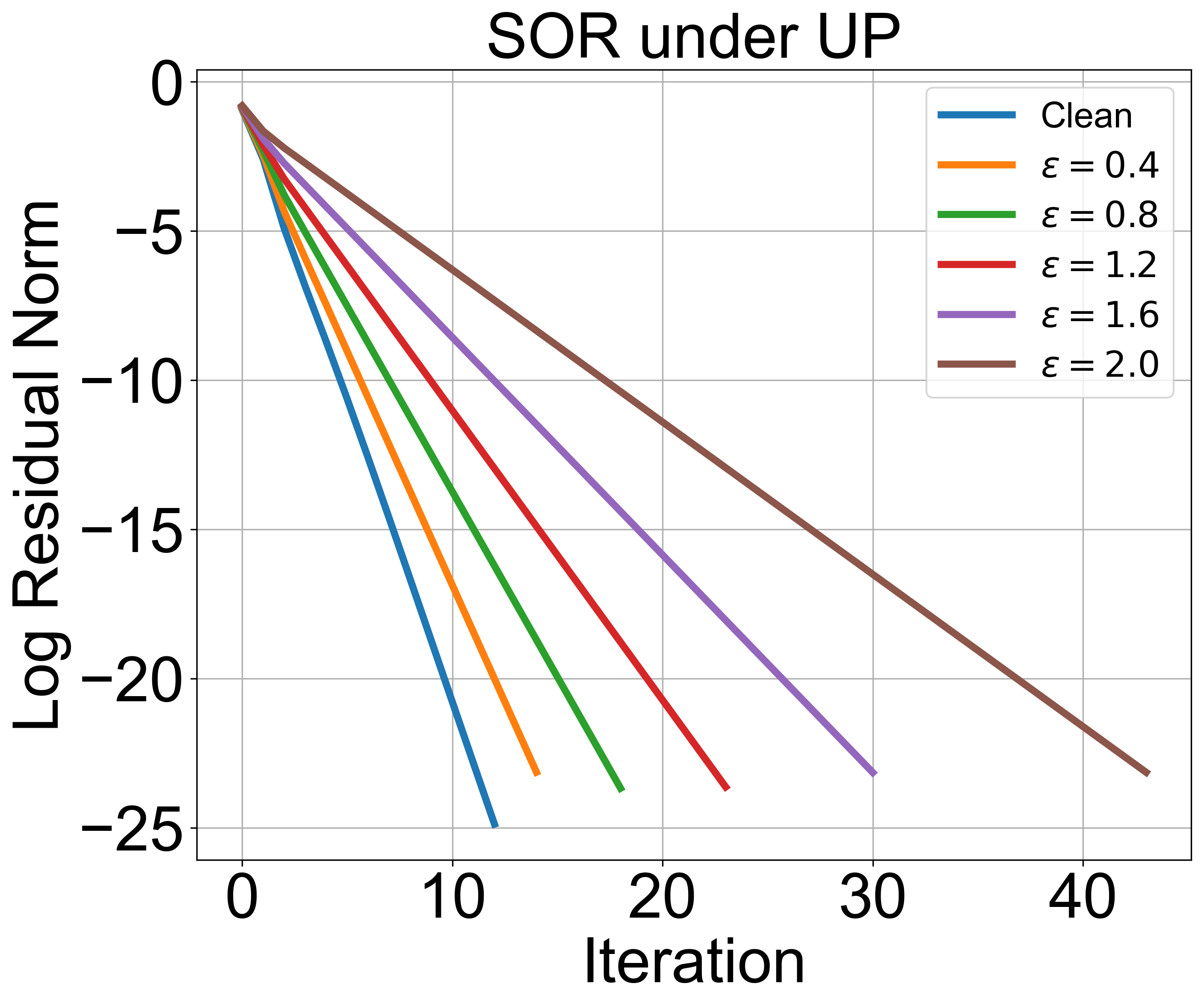}
        \caption{SOR (UP)}
        \label{fig:sor_UP}
    \end{subfigure}

    \begin{subfigure}[b]{0.32\textwidth}
        \centering
        \includegraphics[width=\textwidth]{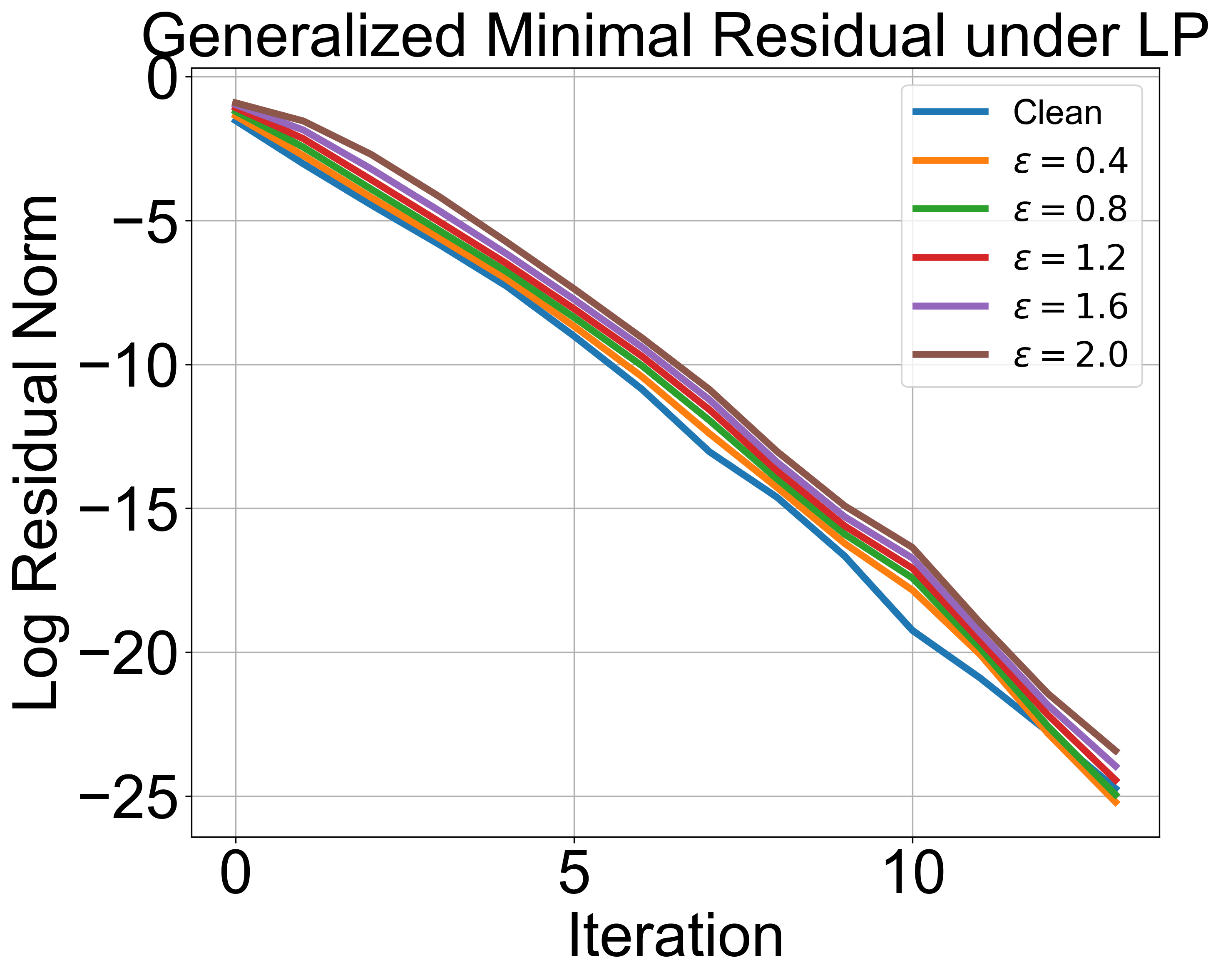}
        \caption{GMRES (LP)}
        \label{fig:gmres_LP}
    \end{subfigure}
    \hfill
    \begin{subfigure}[b]{0.32\textwidth}
        \centering
        \includegraphics[width=\textwidth]{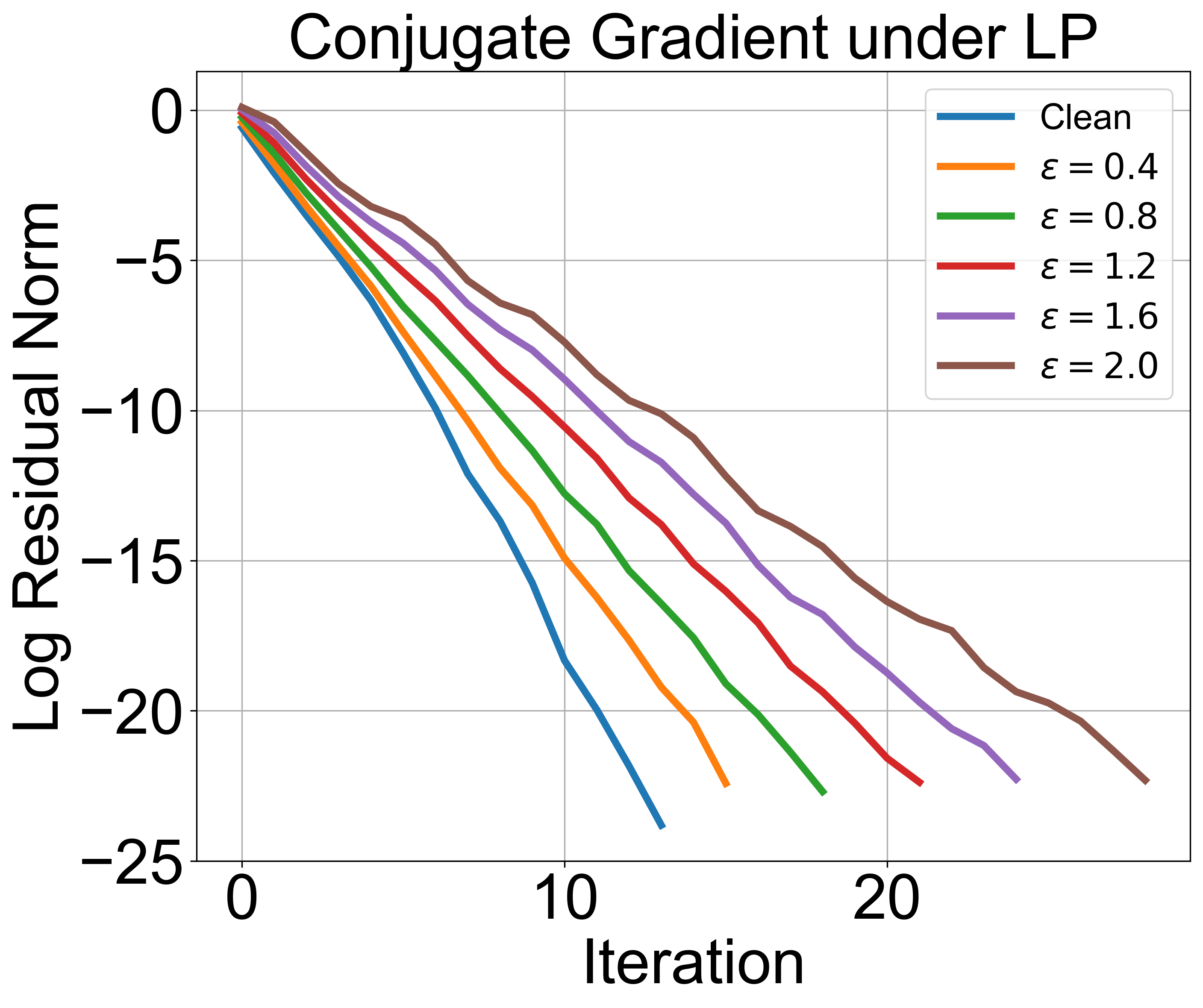}
        \caption{CG (LP)}
        \label{fig:cg_LP}
    \end{subfigure}
    \hfill
    \begin{subfigure}[b]{0.32\textwidth}
        \centering
        \includegraphics[width=\textwidth]{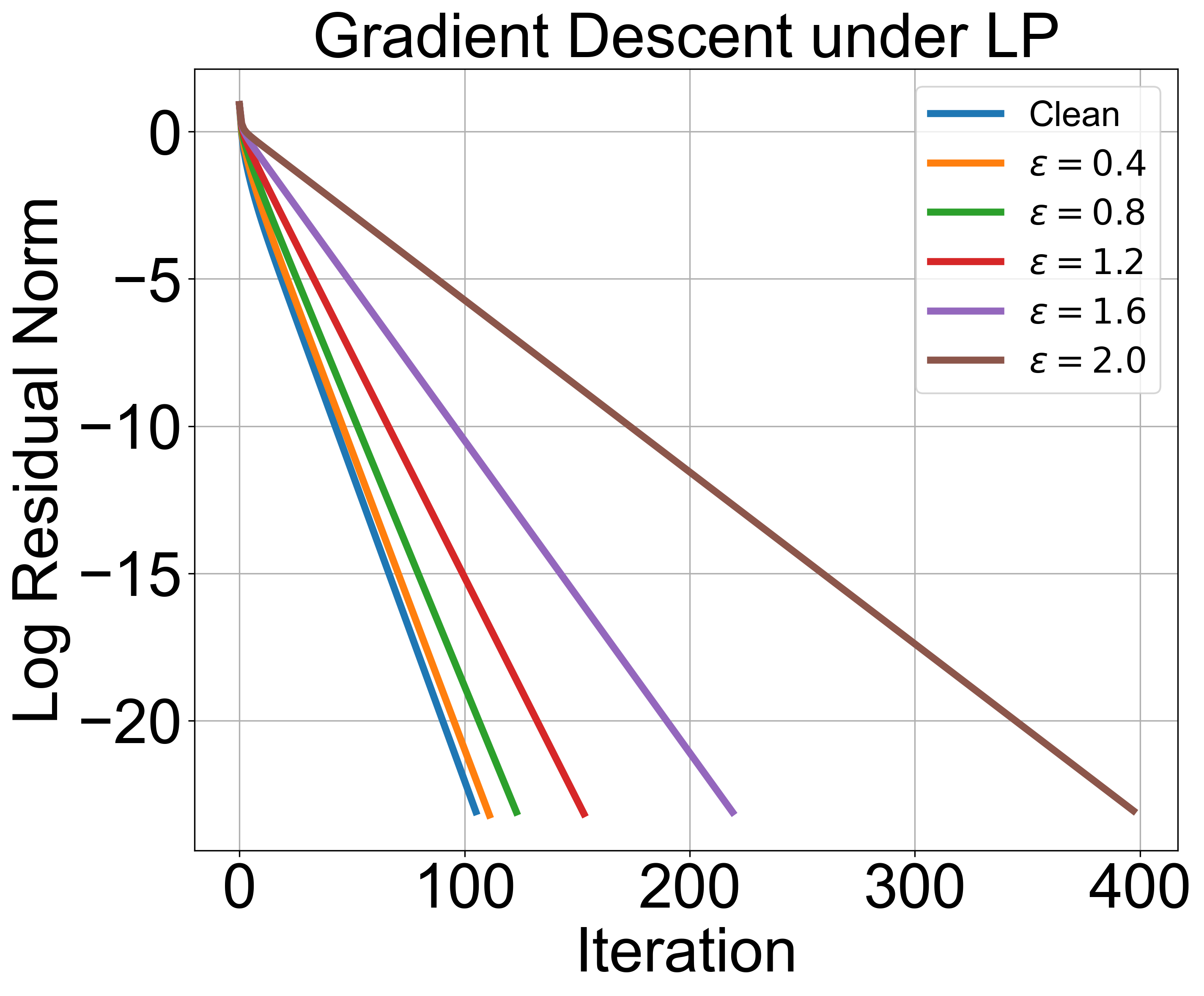}
        \caption{GD (LP)}
        \label{fig:gd_LP}
    \end{subfigure}

    \begin{subfigure}[b]{0.32\textwidth}
        \centering
        \includegraphics[width=\textwidth]{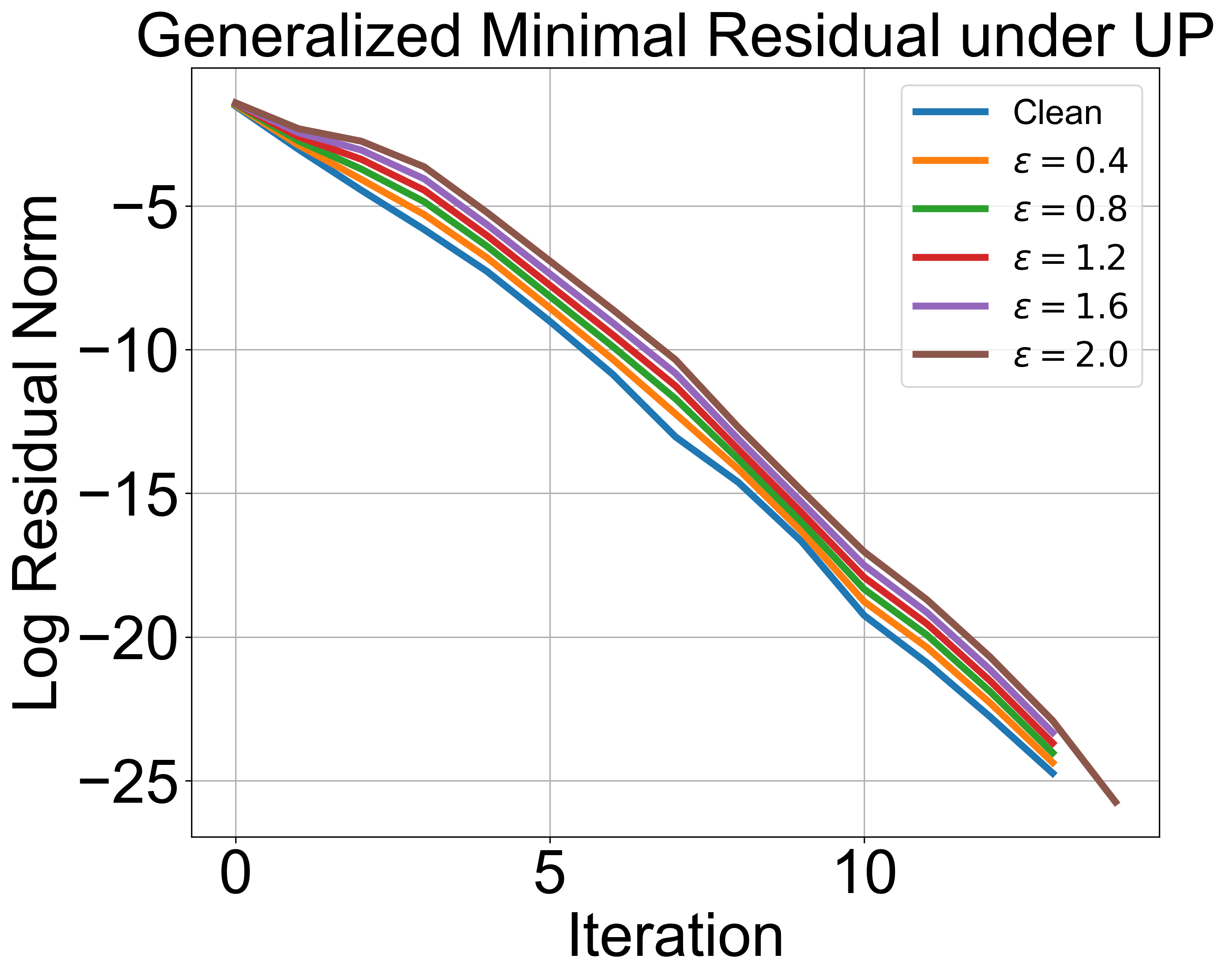}
        \caption{GMRES (UP)}
        \label{fig:gmres_UP}
    \end{subfigure}
    \hfill
    \begin{subfigure}[b]{0.32\textwidth}
        \centering
        \includegraphics[width=\textwidth]{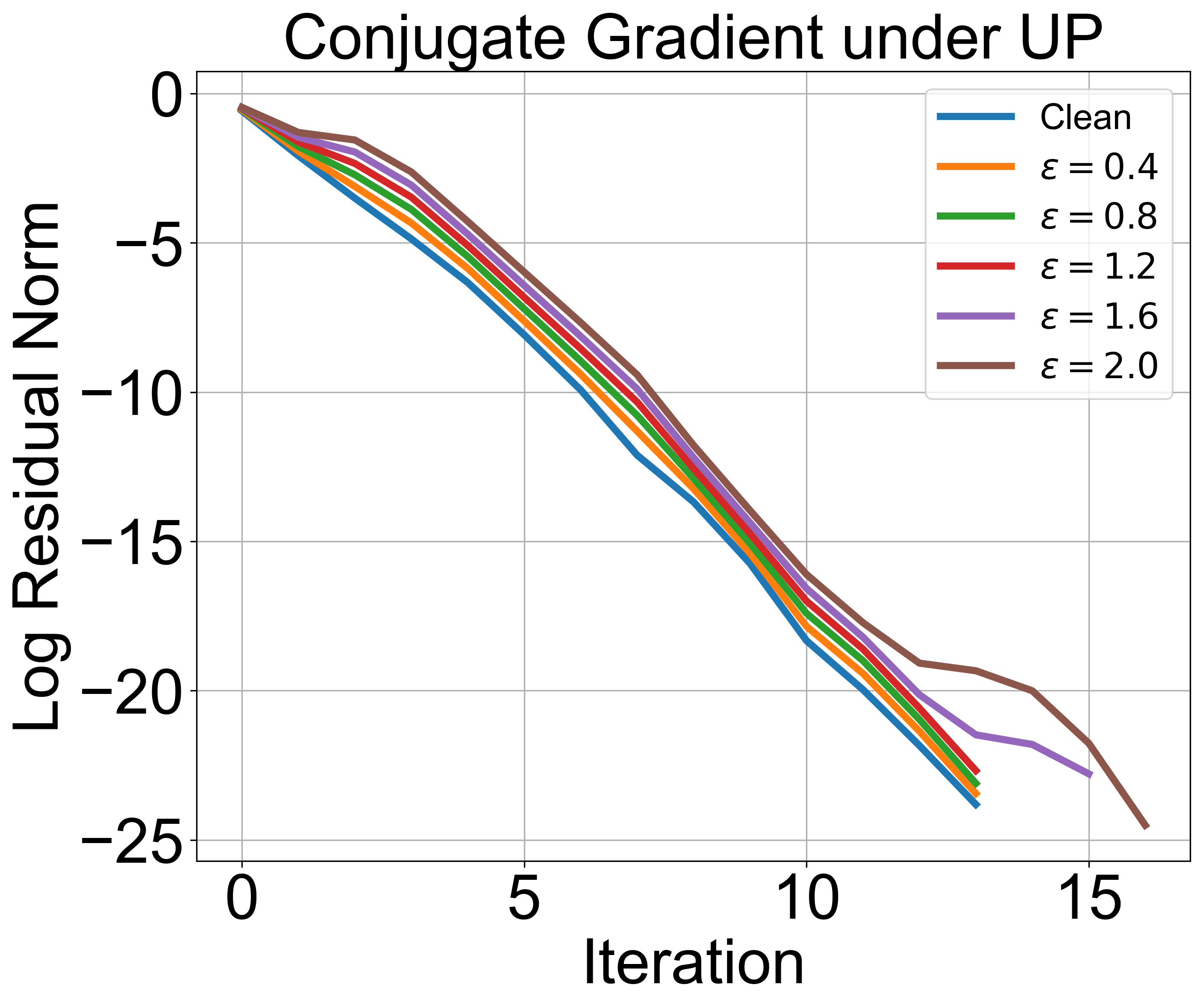}
        \caption{CG (UP)}
        \label{fig:cg_UP}
    \end{subfigure}
    \hfill
    \begin{subfigure}[b]{0.32\textwidth}
        \centering
        \includegraphics[width=\textwidth]{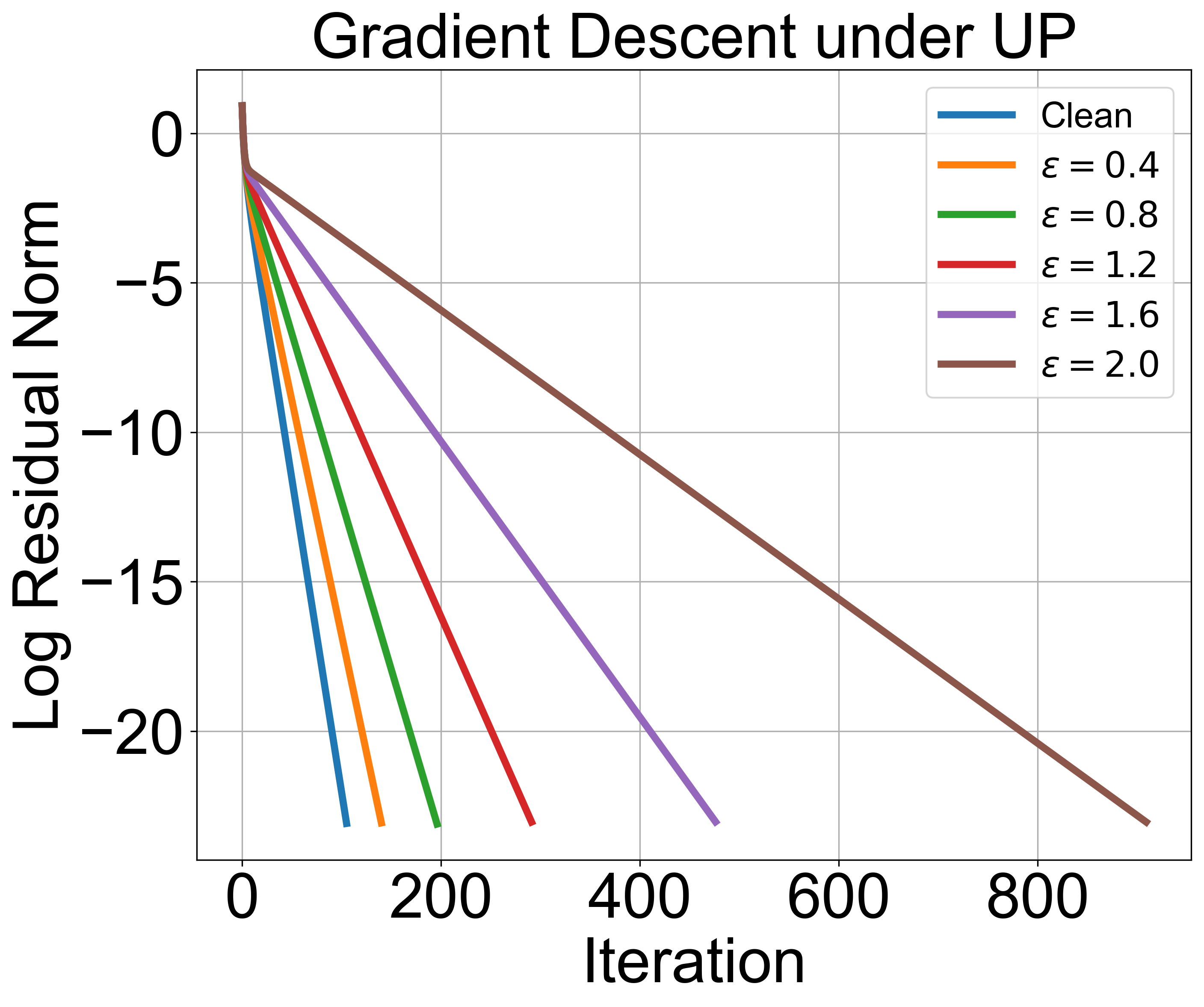}
        \caption{GD (UP)}
        \label{fig:gd_UP}
    \end{subfigure}

    \caption{Convergence behavior of six iterative solvers under two perturbations (LP and UP) with different $\epsilon$.}
    \label{fig:convergence_behavior}
\end{figure}

\section{Potential Broader Applications} 
Investigating poisoning attacks in linear systems empowers us to understand the underlying mechanism of poisoning attacks across various applications, including regression, classification, and distribution learning.

\textbf{Poisoning Attack to Linear Regression.} 
The first straightforward application is to investigate the poisoning attack to linear regressor $f=Xw$, where $w \in \mathbb{R}^d$ is the weight vector.
Specifically, we consider binary classification data composed of features $X \in \mathbb{R}^{n \times d}$ and labels $y \in  \mathbb{R}^n$. The loss function is the mean squared error (MSE) loss, i.e., the $l_2$ norm of the residual between the prediction and the ground truth, i.e., $L(w) = \frac{1}{n} \|Xw - y\|_2^2$. We also apply a regularization term to the loss function to avoid overfitting. Given a set of training data $(X_\text{train}, y_\text{train})$ and a set of testing data $(X_\text{test}, y_\text{test})$, we aim to find a small perturbation $\Delta X$ to the training data such that the linear regressor trained on the perturbed training data $(X_\text{train} + \Delta X, y_\text{train})$ cannot well predict on the testing data well. This application scenario is practical; for instance, some sensitive high-frequency trading data might be leaked to the public for unauthorized training. To prevent this, we can leverage poisoning attacks to add some noise to degrade the model generalization.

\textbf{Poisoning Attack to Linear Classifier.} 
The second application is to investigate the poisoning attack to the logistic regression model, i.e., $f(X) = \text{softmax}(Xw)$, where $w \in \mathbb{R}^d$ is the weight vector. The loss function is the cross-entropy loss, i.e., $L(w) = -\frac{1}{n} \sum_{i=1}^n y_i \text{logsoftmax}(X_i w)$. Our main goal is still to introduce some noise to the training data to degrade the generalization ability of the trained model. For example, in the domain of medical data, the medical data might be leaked to the public for unauthorized training. Despite the nonlinearity involved in the model part, the main framework of defining the loss function with respect to the perturbation and then leveraging the iterative optimization method to solve the problem is still applicable. The results provide valuable insights for solving this non-linear system.

\textbf{Poisoning Attack to Distribution Learning}. 
The third application is to investigate the poisoning attack to distribution learning. Specifically, we consider the distribution learning problem, where the goal is to learn a distribution $p_\theta$ that best fits the data distribution $p_\text{data}$. The distribution $p_\theta$ is parameterized by $\theta$. The loss function is defined as the KL divergence between the learned distribution $p_\theta$ and the data distribution $p_\text{data}$, i.e., $L(\theta) = \text{KL}(p_\text{data}||p_\theta)$. Taking text-to-image pair data $(X, Tx)$ as an example, the poisoning attack aims to inject some noise into the image data $X$ to degrade the generalization ability of the learned distribution $p_\theta$, which is trained on the perturbed data $(X + \Delta X, T)$. This application scenario is practical; for instance, to prevent subject-driven text-to-image synthesis on personal facial data, we can protect images by adding defensive noise \citep{le_etal2023antidreambooth,vsarvcevic2024u,deng2024survey}.

\section{Conclusions and Future Works}

This work analyzes the impact of data poisoning attacks on linear solvers in machine learning, focusing on the least squares problem. We adapt two standard perturbation strategies from poisoning techniques: Label-guided Perturbation (LP) and Unconditioning Perturbation (UP), and formulate them as optimization problems. Our research extends beyond traditional worst-case bounds, examining how various solvers respond to intentional perturbations and developing a theoretical framework for their convergence behavior under such attacks. Furthermore, we conduct theoretical analysis on the convergence rate and solution divergence lower bound on the gradient descent solver when using two proposed perturbation strategies. In the experiments, we conduct experiments on both direct and iterative solvers to verify the effectiveness of the proposed perturbation strategies. Results show that for the direct solver, the UP is more effective, while the LP is more suitable for attacking the iterative solver. Moreover, we empirically verify our forward error upper bound with hypothesis testing. In the analysis of convergence behavior, we observed that the perturbation leads to convergence slowdown for most iterative solvers for both LP and UP. We conduct further analysis of the root cause by linking them to the perturbation theory. Furthermore, we conduct preliminary exploration to leverage one of the pre-conditioning techniques, ILU factorization, for conducting data purification, which greatly stabilizes the convergence for both GMRES and CG methods. {In terms of limitations, currently, we only consider the linear system with a relatively small size. However, for larger problems, calculating the inverse and condition number of the matrix $X$ is computationally expensive. Therefore, how to leverage approximation to guide the perturbation strategy design is an interesting direction to explore in future work. 
}





\bibliography{ref}
\bibliographystyle{abbrvnat}

\appendix


\section{Proof Appendix}

\subsection{Proof of Convergence Rate Lemma of L-smooth and Convex Function}

\begin{lemma}\label{lemma:smooth_convex} \citep{nesterov2013introductory}
If $f$ is $L$-smooth and convex, then $\forall x, y \in \mathbb{R}^d$, we have
\[
\langle \nabla f(x) - \nabla f(y), x - y \rangle \geq \frac{1}{L}\|\nabla f(x) - \nabla f(y)\|^2 \tag{5}
\]
\end{lemma}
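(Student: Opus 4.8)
The plan is to prove this co-coercivity inequality by the standard auxiliary-function argument, introducing two shifted copies of $f$ whose minimizers are exactly $x$ and $y$. Concretely, for fixed $y$ I would define $\phi(z) = f(z) - \langle \nabla f(y), z \rangle$. Since $\phi$ differs from $f$ only by a linear term, it inherits both $L$-smoothness and convexity, and its gradient is $\nabla \phi(z) = \nabla f(z) - \nabla f(y)$. The crucial observation is that $\nabla \phi(y) = 0$, so convexity forces $y$ to be the global minimizer of $\phi$. Symmetrically I would define $\psi(z) = f(z) - \langle \nabla f(x), z \rangle$, which has $x$ as its global minimizer.

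The key step is then the quadratic upper bound coming from $L$-smoothness: for any $z$ one has $\phi(z) \leq \phi(x) + \langle \nabla \phi(x), z - x \rangle + \tfrac{L}{2}\|z - x\|^2$. Minimizing the right-hand side over $z$ (the minimizer is $z = x - \tfrac{1}{L}\nabla\phi(x)$) yields the ``sufficient decrease to the minimum'' estimate
\[
\min_z \phi(z) \;\leq\; \phi(x) - \frac{1}{2L}\|\nabla \phi(x)\|^2.
\]
Because $y$ minimizes $\phi$, the left-hand side equals $\phi(y)$, which after substituting the definitions of $\phi$ gives one inequality relating $f(x)$, $f(y)$, and the gradient difference. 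Applying the identical argument to $\psi$ gives the symmetric inequality with the roles of $x$ and $y$ interchanged.

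The final step is to add the two inequalities. The plan is to verify that the $f(x) + f(y)$ terms cancel on both sides, and that the remaining inner-product terms collapse: the combination $\langle \nabla f(x), x\rangle + \langle \nabla f(y), y\rangle - \langle \nabla f(y), x\rangle - \langle \nabla f(x), y\rangle$ is precisely $\langle \nabla f(x) - \nabla f(y),\, x - y \rangle$, while the two quadratic terms combine into $\tfrac{1}{L}\|\nabla f(x) - \nabla f(y)\|^2$, delivering the claim. I expect the main obstacle to be conceptual rather than computational: the nonobvious move is recognizing that one should shift $f$ by a linear functional so that the chosen point becomes a minimizer, which is what unlocks the descent-lemma bound; once both shifted functions are in place, the remaining algebra is routine cancellation.
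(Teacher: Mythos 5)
Your proof is correct and complete: the auxiliary functions $\phi(z) = f(z) - \langle \nabla f(y), z\rangle$ and $\psi(z) = f(z) - \langle \nabla f(x), z\rangle$ are indeed $L$-smooth and convex with global minimizers $y$ and $x$, the descent-lemma estimate $\min_z \phi(z) \leq \phi(x) - \frac{1}{2L}\|\nabla \phi(x)\|^2$ is applied correctly, and summing the two resulting inequalities yields exactly the claimed co-coercivity bound. Note that the paper itself offers no proof of this lemma --- it simply cites Nesterov's textbook --- and your argument is precisely the standard proof found in that reference, so there is nothing to reconcile between the two.
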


\begin{lemma}\label{lemma:lemma2}
  Suppose that \( f : \mathbb{R}^d \to \mathbb{R} \) is \( L \)-smooth and convex and a finite solution \( x^\star \) exists. Consider the Gradient Descent Algorithm with \( 0 < \gamma \leq \frac{1}{L} \). We assume that \( \|x_0 - x^\star\|^2 \leq C \) for some \( C > 0 \), where \( x^\star \) is an optimal solution of \( f \). Then, we have
  \[
  \min_{0 \leq t \leq T-1} \left[ f(x_t) - f(x^\star) \right] \leq \frac{\|x_0 - x^\star\|^2}{\gamma T} \leq \frac{C}{\gamma} \cdot \frac{1}{T} \tag{8}
  \]
  or
  \[
  f(\tilde{x}_T) - f(x^\star) \leq \frac{\|x_0 - x^\star\|^2}{\gamma T} \leq \frac{C}{\gamma} \cdot \frac{1}{T} \tag{9}
  \]
  where \( \tilde{x}_T = \frac{1}{T} \sum_{t=0}^{T-1} x_t \).
\end{lemma}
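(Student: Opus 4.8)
The plan is to run the standard potential-function (telescoping) argument for gradient descent, organizing the bookkeeping so that the constant matches the stated $\frac{C}{\gamma}\cdot\frac{1}{T}$ rather than the tighter $\frac{C}{2\gamma}\cdot\frac{1}{T}$. First I would expand one step of the iteration $x_{t+1}=x_t-\gamma\nabla f(x_t)$, writing
\[
\|x_{t+1}-x^\star\|^2=\|x_t-x^\star\|^2-2\gamma\langle\nabla f(x_t),x_t-x^\star\rangle+\gamma^2\|\nabla f(x_t)\|^2.
\]
The two inner ingredients are then convexity, which gives $\langle\nabla f(x_t),x_t-x^\star\rangle\ge f(x_t)-f(x^\star)$, and Lemma~\ref{lemma:smooth_convex} (co-coercivity) applied with $y=x^\star$ and $\nabla f(x^\star)=0$, which gives $\langle\nabla f(x_t),x_t-x^\star\rangle\ge\frac{1}{L}\|\nabla f(x_t)\|^2$. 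The latter lets me replace the troublesome quadratic term $\gamma^2\|\nabla f(x_t)\|^2$ by $\gamma^2 L\,\langle\nabla f(x_t),x_t-x^\star\rangle$.

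Next I would collect terms to obtain a one-step contraction $\|x_{t+1}-x^\star\|^2\le\|x_t-x^\star\|^2-\gamma(2-\gamma L)\langle\nabla f(x_t),x_t-x^\star\rangle$. Here the step-size assumption $0<\gamma\le\frac{1}{L}$ enters: it guarantees $2-\gamma L\ge1$, so that after invoking convexity once more the right-hand side is dominated by $\|x_t-x^\star\|^2-\gamma\big(f(x_t)-f(x^\star)\big)$. Rearranged, this reads $\gamma\big(f(x_t)-f(x^\star)\big)\le\|x_t-x^\star\|^2-\|x_{t+1}-x^\star\|^2$.

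The final step is to telescope this inequality over $t=0,\dots,T-1$; the right-hand side collapses to $\|x_0-x^\star\|^2-\|x_T-x^\star\|^2\le\|x_0-x^\star\|^2\le C$, giving $\sum_{t=0}^{T-1}\big(f(x_t)-f(x^\star)\big)\le C/\gamma$. Dividing by $T$ and bounding the sum below by $T$ times its minimum yields the min-iterate bound, while bounding $f(\tilde{x}_T)$ above by the average $\frac{1}{T}\sum_{t}f(x_t)$ via Jensen's inequality (convexity of $f$) yields the averaged-iterate bound.

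I expect the main obstacle to be the algebraic combination in the first two paragraphs — specifically, deciding to route the quadratic gradient term through co-coercivity rather than through the descent lemma, since the co-coercivity route is what produces the factor $\gamma(2-\gamma L)\ge\gamma$ and hence the exact constant $\frac{C}{\gamma}$ appearing in the lemma. The telescoping and the passage to the min and the average are then routine; the only care needed is that $f(x_t)-f(x^\star)\ge0$, which legitimizes weakening $2-\gamma L$ to $1$, and that the distance terms are non-negative so the telescoped sum is controlled by $\|x_0-x^\star\|^2$ alone.
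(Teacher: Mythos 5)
Your proposal is correct and follows essentially the same route as the paper's proof: the same one-step expansion, the same use of co-coercivity (Lemma~\ref{lemma:smooth_convex}) to absorb the $\gamma^2\|\nabla f(x_t)\|^2$ term, the same telescoping, and Jensen's inequality for the averaged iterate. The only cosmetic difference is that you invoke $\gamma \le \frac{1}{L}$ to weaken $2-\gamma L$ to $1$ \emph{before} telescoping (correctly noting the non-negativity needed to do so), whereas the paper carries the factor $\frac{1}{\gamma(2-\gamma L)}$ through the sum and only relaxes it to $\frac{1}{\gamma}$ at the end.
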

\begin{proof}
  Suppose that the objective function \( f \) is \( L \)-smooth (\( L > 0 \)) and convex. Note that \( x_{t+1} := x_t - \gamma \nabla f(x_t) \). We have

\[
\|x_{t+1} - x^\star\|^2 = \|x_t - x^\star - \gamma \nabla f(x_t)\|^2
\]
\[
= \|x_t - x^\star\|^2 - 2\gamma \langle \nabla f(x_t), x_t - x^\star \rangle + \gamma^2 \|\nabla f(x_t)\|^2
\]
\[
= \|x_t - x^\star\|^2 - 2\gamma \langle \nabla f(x_t), x_t - x^\star \rangle + \gamma^2 \|\nabla f(x_t) - \nabla f(x^\star)\|^2
\]
\[
\leq \|x_t - x^\star\|^2 - 2\gamma \langle \nabla f(x_t), x_t - x^\star \rangle + \gamma^2 L \langle \nabla f(x_t) - \nabla f(x^\star), x_t - x^\star \rangle \tag{5}
\]
The last inequality is due to the lemma \ref{lemma:smooth_convex}. Rearranging the terms, we have

\[
\|x_{t+1} - x^\star\|^2 \leq \|x_t - x^\star\|^2 - 2\gamma \langle \nabla f(x_t), x_t - x^\star \rangle + \gamma^2 L \langle \nabla f(x_t) - \nabla f(x^\star), x_t - x^\star \rangle
\]
\[
= \|x_t - x^\star\|^2 - \gamma (2 - \gamma L) \langle \nabla f(x_t), x_t - x^\star \rangle - \gamma^2 L \langle \nabla f(x^\star), x_t - x^\star \rangle
\]
\[
= \|x_t - x^\star\|^2 - \gamma (2 - \gamma L) \langle \nabla f(x_t), x_t - x^\star \rangle.
\]

Hence, we have

\[
\|x_{t+1} - x^\star\|^2 \leq \|x_t - x^\star\|^2 - \gamma (2 - \gamma L) \langle \nabla f(x_t), x_t - x^\star \rangle. \tag{6}
\]

Since \( f \) is convex, we have

\[
f(y) \geq f(x) + \langle \nabla f(x), (y - x) \rangle, \quad x, y \in \mathbb{R}^d.
\]

Let \( x = x_t \) and \( y = x^\star \), we have

\[
f(x^\star) \geq f(x_t) + \langle \nabla f(x_t), (x^\star - x_t) \rangle
\]
\[
-\langle \nabla f(x_t), (x_t - x^\star) \rangle \leq -[f(x_t) - f(x^\star)]. \tag{7}
\]

Hence, from (6)

\[
\|x_{t+1} - x^\star\|^2 \leq \|x_t - x^\star\|^2 - \gamma (2 - \gamma L) \langle \nabla f(x_t), (x_t - x^\star) \rangle
\]
\[
\leq \|x_t - x^\star\|^2 - \gamma (2 - \gamma L) [f(x_t) - f(x^\star)].
\]

which is equivalent to

\[
[f(x_t) - f(x^\star)] \leq \frac{1}{\gamma (2 - \gamma L)} \left( \|x_t - x^\star\|^2 - \|x_{t+1} - x^\star\|^2 \right)
\]
Taking the sum from \( t = 0, \dots, T - 1 \), we have

\[
\sum_{t=0}^{T-1} [f(x_t) - f(x^\star)] \leq \frac{1}{\gamma (2 - \gamma L)} \left( \|x_0 - x^\star\|^2 - \|x_T - x^\star\|^2 \right)
\]
\[
\leq \frac{1}{\gamma (2 - \gamma L)} \|x_0 - x^\star\|^2
\]

Therefore,

\[
\frac{1}{T} \sum_{t=0}^{T-1} [f(x_t) - f(x^\star)] \leq \frac{1}{\gamma (2 - \gamma L)} \|x_0 - x^\star\|^2 \cdot \frac{1}{T}
\]

Taking the sum from \( t = 0, \dots, T - 1 \), we have

\[
\sum_{t=0}^{T-1} [f(x_t) - f(x^\star)] \leq \frac{1}{\gamma (2 - \gamma L)} \left( \|x_0 - x^\star\|^2 - \|x_T - x^\star\|^2 \right)
\]
\[
\leq \frac{1}{\gamma (2 - \gamma L)} \|x_0 - x^\star\|^2
\]

Therefore,

\[
\frac{1}{T} \sum_{t=0}^{T-1} [f(x_t) - f(x^\star)] \leq \frac{1}{\gamma (2 - \gamma L)} \|x_0 - x^\star\|^2 \cdot \frac{1}{T}
\]

With the assumption that \( \|x_0 - x^\star\|^2 \leq C \) for some \( C > 0 \), we have

\begin{equation}\label{eq:convergence_rate_l_smooth_convex}
  \frac{1}{T} \sum_{t=0}^{T-1} [f(x_t) - f(x^\star)] \leq \frac{C}{\gamma (2 - \gamma L)} \cdot \frac{1}{T}
\end{equation}

Note that if \( \gamma \leq \frac{1}{L} \),

\[
-\gamma(2 - \gamma L) = -2\gamma + \gamma^2 L \leq -2\gamma + \gamma = -\gamma,
\]
or
\[
\frac{1}{\gamma (2 - \gamma L)} \leq \frac{1}{\gamma}
\]

Then, we have

\[
\frac{1}{T} \sum_{t=0}^{T-1} [f(x_t) - f(x^\star)] \leq \frac{C}{\gamma} \cdot \frac{1}{T}
\]

We can have

\begin{equation}
  \label{eq:convergence_rate_l_smooth_convex_2}
  \min_{0 \leq t \leq T-1} \left[ f(x_t) - f(x^\star) \right] \leq \frac{1}{T} \sum_{t=0}^{T-1} \left[ f(x_t) - f(x^\star) \right] \leq \frac{C}{\gamma} \cdot \frac{1}{T}
\end{equation}

Let \( \tilde{x}_T = \frac{1}{T} \sum_{t=0}^{T-1} x_t \). Since \( f \) is convex, by Jensen's inequality, we have

\[
f(\tilde{x}_T) \leq \frac{1}{T} \sum_{t=0}^{T-1} f(x_t)
\]

Hence,

\[
f(\tilde{x}_T) - f(x^\star) \leq \frac{1}{T} \sum_{t=0}^{T-1} \left[ f(x_t) - f(x^\star) \right] \leq \frac{C}{\gamma} \cdot \frac{1}{T}
\]

\end{proof}

\subsection{Proof of Convergence Rate of $\alpha$-UP}
\label{app:theorem_alpha_up_proof}

\begin{theorem*}[\textbf{Convergence Rate of $\alpha$-UP for L-smooth and Convex Functions}]
 Let $f(w)$ be an objective function that is L-smooth and convex, e.g., $f(w; X) = \frac{1}{2} \| X w - y \|^2$, where $X$ is a data matrix. Denote the function after applying UP as $f{^\prime}(w; X+ \Delta X)$, which is also L-smooth and convex. Assume that an Unconditioning Perturbation (UP) is applied, increasing the condition number of the system such that the maximum singular value of the perturbed matrix is $\sigma_{\max}(X + \Delta X) = \alpha \sigma_{\max}(X)$, where $1 < \alpha < \sqrt{2/\gamma}$. Let $w^*$ be the optimal solution, and $w_T^\prime$ be the solution after $T^\prime$ iterations of gradient descent with step size $\gamma$. Then:
  
  \begin{enumerate}
      \item \textbf{Convergence Rate}: After applying UP, for an L-smooth and convex function, we have:
      \[
      \min_{0 \leq t \leq T-1} \left( f^\prime(w_t) - f^\prime(w^*) \right) \leq  \frac{C}{\gamma (2 - \alpha ^2 )} \frac{1}{T^\prime},
      \]
      where $\| w_0 - w^* \|^2 \leq C$ represents the initial distance between $w_0$ and the optimal solution $w^*$ is bounded by constant $C > 0$.
  
      \item \textbf{Total Iterations for $\beta$-accuracy}: The total number of iterations $T^\prime$ required to obtain a solution such that $f^\prime(w_T) - f^\prime(w^*) \leq \beta$ is:
      \[
      T^\prime \geq \frac{C}{\gamma (2 - \alpha ^2 ) } \cdot \frac{1}{\beta}. 
      \]
  \end{enumerate}
\end{theorem*}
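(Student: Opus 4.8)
The plan is to reduce the whole statement to the generic gradient-descent bound already proved in Lemma~\ref{lemma:lemma2}, by showing that the only effect of UP on the optimization is to rescale the smoothness constant from $L$ to $L' = \alpha^2 L$. Since Lemma~\ref{lemma:lemma2} holds for an arbitrary $L$-smooth convex objective, once this rescaling is pinned down both claims follow by substitution and algebra.

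First I would compute the smoothness constant explicitly. For $f'(w; X+\Delta X) = \frac{1}{2}\|(X+\Delta X)w - y\|^2$ the Hessian is the constant matrix $(X+\Delta X)^\top (X+\Delta X)$, whose largest eigenvalue is $\sigma_{\max}(X+\Delta X)^2$; the smoothness constant of a quadratic equals this largest eigenvalue. Using the UP hypothesis $\sigma_{\max}(X+\Delta X) = \alpha\,\sigma_{\max}(X)$ then gives $L' = \sigma_{\max}(X+\Delta X)^2 = \alpha^2 \sigma_{\max}(X)^2 = \alpha^2 L$, where $L=\sigma_{\max}(X)^2$ is the smoothness constant of the unperturbed $f$. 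This is the step that makes the factor $\alpha^2$ appear, and it is the only place the spectral assumption of UP is actually used.

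Next, because $f'$ is $L'$-smooth and convex, I would apply Lemma~\ref{lemma:lemma2} to $f'$ with its iterates $w_t$, the \emph{same} fixed step size $\gamma$, and the initial bound $\|w_0 - w^*\|^2 \le C$. The key is to stop at the pre-simplification form of that lemma, namely $\frac{1}{T'}\sum_t [f'(w_t)-f'(w^*)] \le \frac{C}{\gamma(2-\gamma L')}\cdot\frac{1}{T'}$ from \eqref{eq:convergence_rate_l_smooth_convex}, rather than the further-collapsed $C/\gamma$ form, so that the dependence on the perturbed curvature survives. Substituting $L' = \alpha^2 L$ and the step-size normalization $\gamma L = 1$ (the $\gamma = 1/L$ choice the paper advocates) collapses the denominator to $2-\alpha^2$, and bounding the minimum over $t$ by the average yields claim 1. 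The admissibility condition $1 < \alpha < \sqrt{2/\gamma}$ is exactly what keeps $2-\gamma L' > 0$, i.e. it keeps the perturbed problem inside the regime where the descent inequality (6) in the proof of Lemma~\ref{lemma:lemma2} still contracts; outside it the denominator would change sign and the bound would be vacuous.

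Claim 2 is then pure rearrangement: to force $f'(w_{T'}) - f'(w^*) \le \beta$ it suffices that the right-hand side of claim 1 be at most $\beta$, i.e. $\frac{C}{\gamma(2-\alpha^2)}\frac{1}{T'} \le \beta$, which solves to $T' \ge \frac{C}{\gamma(2-\alpha^2)}\cdot\frac{1}{\beta}$. I expect the main obstacle to be not any single calculation but the careful bookkeeping around the step-size/smoothness product: one must be explicit that $\gamma$ is held fixed while $L$ changes, state the normalization that turns $2-\gamma L'$ into $2-\alpha^2$, and verify that the constraint on $\alpha$ is consistent with positivity of the denominator. Everything else is a direct appeal to Lemma~\ref{lemma:lemma2}.
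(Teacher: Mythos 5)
Your proposal is correct and takes essentially the same route as the paper's own proof: derive $L' = \alpha^2 L$ from the singular-value scaling induced by UP, apply Lemma~\ref{lemma:lemma2} to the perturbed objective with the same step size, and rearrange to get the iteration count for $\beta$-accuracy. You are in fact more careful than the paper on one point, namely that you explicitly state the normalization $\gamma L = 1$ needed to collapse the denominator $2-\gamma L'$ into $2-\alpha^2$, whereas the paper's proof slides between $2-\gamma\alpha^2$, $2-\gamma\alpha^2 L$, and the stated $2-\alpha^2$ without ever declaring that assumption.
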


\begin{proof}
We begin by deriving how the change in the condition number propagates to a change in the smoothness constant $L$. For a matrix $X$, the condition number $\kappa(X)$ is defined as:

\[
\kappa(X) = \frac{\sigma_{\max}(X)}{\sigma_{\min}(X)},
\]

where $\sigma_{\max}(X)$ and $\sigma_{\min}(X)$ are the largest and smallest singular values of $X$, respectively. For a least-squares objective of the form $f(w) = \frac{1}{2} \| X w - y \|^2$, based on the definition of L-smoothness and gradient $\nabla f(w) = 2X^\top (X w - y)$, we have:

\[
\Vert \nabla f(w_1 ) - \nabla f(w_2) \Vert = \Vert 2X^\top (X w_1 - y) - 2X^\top (X w_2 - y) \Vert = \Vert 2X^\top X (w_1 - w_2) \Vert \leq L \Vert w_1 - w_2 \Vert
\]

Therefore, we know that $L$ is determined by the largest eigenvalue of the matrix $X^\top X$, which is equivalent to the square of the largest singular value of $X$:

\[
L = 2\sigma_{\max}(X)^2.
\]

When we apply Unconditioning Perturbation (UP) to the matrix $X$, it changes the maximum singular value such that $\sigma_{\max}(X + \Delta X) = \alpha \sigma_{\max}(X)$, where $\alpha > 1$. This directly affects the smoothness constant $L$ as follows:

\[
L_{\text{new}} = 2\sigma_{\max}(X + \Delta X)^2 = 2(\alpha \sigma_{\max}(X))^2 = 2\alpha^2 \sigma_{\max}(X)^2 = \alpha^2 L.
\]

Therefore, under Unconditioning Perturbation (UP), which increases the maximum singular value of the matrix $X$ by a factor $\alpha$, the new smoothness constant $L_{\text{new}}$ is:

\[
L_{f^\prime} = \alpha^2 L_f.
\]


By applying Eq. \ref{eq:convergence_rate_l_smooth_convex} and Eq. \ref{eq:convergence_rate_l_smooth_convex_2} from Lemma \ref{lemma:lemma2}, we have 

\begin{equation}
  \label{eq:convergence_rate_alpha_up}
  \min_{0 \leq t \leq T-1} \left( f^\prime(w_t) - f^\prime(w^*) \right) \leq  \frac{C}{\gamma (2 - \gamma \alpha^2 )} \cdot \frac{1}{T^\prime} ,
\end{equation}

Under the assumption of bounded step size $\gamma \leq \frac{1}{L}$ and $1 < \alpha < \sqrt{2/\gamma }$, we have 

\begin{equation}
  \label{eq:convergence_rate_alpha_up_2}
  \min_{0 \leq t \leq T-1} \left( f^\prime(w_t) - f^\prime(w^*) \right) \leq  \frac{C}{\gamma (2 - \gamma \alpha^2 L)} \cdot \frac{1}{T^\prime} < \frac{C}{\gamma (2 - \gamma L )} \cdot \frac{1}{T^\prime}
\end{equation}

When imposing the $\beta$-accuracy, we have 

\begin{equation}
  \label{eq:convergence_rate_alpha_up_2}
  T^\prime \geq \frac{C}{\gamma (2 - \gamma \alpha^2 L)} \cdot \frac{1}{\beta}
\end{equation}

\end{proof}

\subsection{Proof of Lower Bound on Solution Divergence of $\eta$-LP}
\label{app:theorem_eta_lp_proof}

\begin{theorem*}[Lower Bound on Divergence of Solution due to Label-guided Perturbations (LP)]
Let $w^*$ be the optimal solution for the system $X w = y$, and let $w'^*$ be the optimal solution for the perturbed system $(X + \Delta X) w'^* = y + \Delta y$. Assume that the perturbation $\Delta X$ is small, such that $\| \Delta X \|_2 \leq \epsilon$, and the perturbation induces a significant change in the prediction, with $\| \Delta y \|_2 \geq \eta$, where $\eta > 0$ is a threshold capturing the minimum size of the target change.

Then, the difference between the original solution $w^*$ and the perturbed solution $w'^*$ is lower-bounded by:
\[
\| w^* - w'^* \| \geq \frac{\eta}{\| X \|} \cdot \frac{1}{1 + \epsilon \| X^{-1} \|}.
\]
\end{theorem*}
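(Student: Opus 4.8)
The plan is to reduce everything to a single exact identity relating the solution gap to the two perturbations, and then lower-bound that gap. First I would subtract the unperturbed system $X w^* = y$ from the perturbed one $(X + \Delta X)w'^* = y + \Delta y$. Writing $\delta w = w^* - w'^*$, the $y$ terms cancel and the rest rearranges to the identity $X\,\delta w = \Delta X\, w'^* - \Delta y$. This is the backbone of the argument: the only way the right-hand side can move by $\Delta y$ is through a genuine solution shift $X\,\delta w$ together with the spurious contribution $\Delta X\, w'^*$ of the coefficient perturbation acting on the new solution.

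Next I would premultiply by $X^{-1}$ (legitimate under the same invertibility hypothesis $\epsilon\|X^{-1}\| < 1$ used for the forward bound), giving $\delta w = X^{-1}\Delta X\, w'^* - X^{-1}\Delta y$, and apply the reverse triangle inequality to get $\|\delta w\| \ge \|X^{-1}\Delta y\| - \|X^{-1}\Delta X\, w'^*\|$. The clean step is lower-bounding the first term: since $\Delta y = X\,(X^{-1}\Delta y)$ we have $\|\Delta y\| \le \|X\|\,\|X^{-1}\Delta y\|$, hence $\|X^{-1}\Delta y\| \ge \|\Delta y\|/\|X\| \ge \eta/\|X\|$, which is exactly where the prefactor $\eta/\|X\|$ originates. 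For the second term I would use submultiplicativity and the budget constraint to write $\|X^{-1}\Delta X\, w'^*\| \le \epsilon\,\|X^{-1}\|\,\|w'^*\|$.

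The last step, and the one I expect to be the main obstacle, is controlling $\|w'^*\|$ so that the denominator $1 + \epsilon\|X^{-1}\|$ appears. The natural move is to close the loop self-referentially: write $w'^* = w^* - \delta w$ so that $\|w'^*\| \le \|\delta w\| + \|w^*\|$, substitute, and collect the $\|\delta w\|$ terms to get $\|\delta w\|\,(1 + \epsilon\|X^{-1}\|) \ge \eta/\|X\| - \epsilon\|X^{-1}\|\,\|w^*\|$. Dividing through yields the stated bound once the $\|w^*\|$ contribution is dropped. I would spend the most care here, because the displayed bound is exact in the homogeneous reference case $w^* = 0$ (where $w'^* = -\delta w$ and no stray term survives), while in general one must argue that the induced change $\eta$ dominates the $\epsilon\|X^{-1}\|\,\|w^*\|$ term, or restate the result with that correction. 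Reconciling this self-reference cleanly, rather than treating $\|w'^*\|$ as simply absorbed, is the crux of making the constant match.
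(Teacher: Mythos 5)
Your derivation follows essentially the same route as the paper's own proof: the same subtraction identity $X(w'^* - w^*) = \Delta y - \Delta X\, w'^*$, the same (reverse) triangle inequality, and the same bound $\|\Delta X\, w'^*\| \le \epsilon \|w'^*\|$. The only structural difference is that you premultiply by $X^{-1}$ before taking norms, which is exactly what makes the constant $1+\epsilon\|X^{-1}\|$ (rather than, say, $\|X\|+\epsilon$) appear. The crux you flag --- controlling $\|w'^*\|$ --- is real, and your self-referential substitution $\|w'^*\| \le \|w^* - w'^*\| + \|w^*\|$ handles it more carefully than the paper does: at this point the paper simply ``assumes $\|w'^*\|_2 \le \|X^{-1}\|_2 \|y\|_2$'' and asserts the final bound, a step that does not follow algebraically (that assumption produces the stated constant only if one additionally requires something like $\|y\| \le \eta/(1+\epsilon\|X^{-1}\|)$, which is never stated).

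Your caution about the leftover term is warranted: what the argument actually proves is
\[
\| w^* - w'^* \| \;\ge\; \frac{\eta/\|X\| \;-\; \epsilon\,\|X^{-1}\|\,\|w^*\|}{1+\epsilon\|X^{-1}\|},
\]
and the theorem as displayed is recovered only when the term $\epsilon\|X^{-1}\|\|w^*\|$ vanishes (e.g., $w^*=0$) or is explicitly assumed to be dominated. It cannot be dropped in general: take the scalar system $X=1$, $y=1$ (so $w^*=1$) with $\Delta X=\epsilon$ and $\Delta y = \eta = \epsilon$; then $(1+\epsilon)w'^* = 1+\epsilon$ gives $w'^* = w^*$, so the left-hand side is $0$ while the claimed bound is $\eta/(1+\epsilon)>0$. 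So the obstacle you identified is not a defect of your argument but of the statement and of the paper's proof; your corrected inequality (or an added hypothesis that $\eta$ dominates $\epsilon\|X^{-1}\|\|w^*\|\cdot\|X\|$) is the right fix, and your write-up should present it that way rather than trying to force the stated constant.
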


\begin{proof}
  Consider the perturbed system:
  \[
  (X + \Delta X) w'^* = y + \Delta y.
  \]
  Subtracting the original system $X w^* = y$ from this equation, we get:
  \[
  X (w'^* - w^*) = \Delta y - \Delta X w'^*.
  \]
  Taking the norm of both sides yields:
  \[
  \| X (w'^* - w^*) \|_2 = \| \Delta y - \Delta X w'^* \|_2.
  \]
  Using the triangle inequality, this implies:
  \[
  \| X (w'^* - w^*) \|_2 \geq \| \Delta y \|_2 - \| \Delta X w'^* \|_2.
  \]
  
  We assume that $\| \Delta y \|_2 \geq \eta$, and that $\| \Delta X \|_2 \leq \epsilon$. Therefore, $\| \Delta X w'^* \|_2 \leq \epsilon \| w'^* \|_2$, giving:
  \[
  \| X (w'^* - w^*) \|_2 \geq \eta - \epsilon \| w'^* \|_2.
  \]
  
  Using the fact that $X$ is invertible, we can relate the norm $\| X (w'^* - w^*) \|_2$ to $\| w'^* - w^* \|_2$ by multiplying by $\| X^{-1} \|_2$:
  \[
  \| w'^* - w^* \|_2 \geq \frac{\eta - \epsilon \| w'^* \|_2}{\| X \|_2}.
  \]
  We assume that $\| w'^* \|_2 \leq \| X^{-1} \|_2 \| y \|_2$, so:
  \[
  \| w'^* - w^* \|_2 \geq \frac{\eta}{\| X \|_2} \cdot \frac{1}{1 + \epsilon \| X^{-1} \|_2}.
  \]
  
  This gives the lower bound on the divergence between the perturbed solution $w'^*$ and the original solution $w^*$, depending on the magnitude of the target change $\eta$ and the size of the perturbation $\epsilon$.
\end{proof}

\end{document}